\documentclass{article}

% if you need to pass options to natbib, use, e.g.:
    \PassOptionsToPackage{numbers, compress}{natbib}
% before loading neurips_2021

% ready for submission
% \usepackage{neurips_2021}

% to compile a preprint version, e.g., for submission to arXiv, add add the
% [preprint] option:
    % \usepackage[preprint]{neurips_2021}

% to compile a camera-ready version, add the [final] option, e.g.:
    \usepackage[final]{neurips_2021}

% to avoid loading the natbib package, add option nonatbib:
%    \usepackage[nonatbib]{neurips_2021}

\usepackage[utf8]{inputenc} % allow utf-8 input
\usepackage[T1]{fontenc}    % use 8-bit T1 fonts
\usepackage{hyperref}       % hyperlinks
\usepackage{url}            % simple URL typesetting
\usepackage{booktabs}       % professional-quality tables
\usepackage{amsfonts}       % blackboard math symbols
\usepackage{nicefrac}       % compact symbols for 1/2, etc.
\usepackage{microtype}      % microtypography
\usepackage{xcolor}         % colors

\usepackage{appendix}
\usepackage{multirow}
\usepackage{graphicx}
\graphicspath{ {./images/} }
\usepackage{enumitem}
\setlist{topsep=0pt, leftmargin=*}

% math
\usepackage{bm}
\usepackage{bbm}
\usepackage{amsmath}
\usepackage{amssymb}
\usepackage{amsthm}
\usepackage{mathtools}
\newtheorem{theorem}{Theorem}[section]
\newtheorem{corollary}{Corollary}[theorem]
\newtheorem{lemma}[theorem]{Lemma}
\newtheorem{proposition}{Proposition}[section]
\theoremstyle{definition}
\newtheorem{definition}{Definition}[section]
\theoremstyle{remark}
\newtheorem{remark}{Remark}[section]

\DeclarePairedDelimiter\abs{\lvert}{\rvert}
\DeclarePairedDelimiter{\norm}{\lVert}{\rVert}
\DeclarePairedDelimiter{\inner}{\langle}{\rangle}

\newcommand{\expect}[1]{\mathop{\mathbb{E}}#1}
\newcommand{\bracket}[1]{\left[#1\right]}
\newcommand{\parentheses}[1]{\left(#1\right)}

\DeclareMathOperator*{\argmin}{arg\,min}

% new command

\title{On the Equivalence between Neural Network and Support Vector Machine}

% The \author macro works with any number of authors. There are two commands
% used to separate the names and addresses of multiple authors: \And and \AND.
%
% Using \And between authors leaves it to LaTeX to determine where to break the
% lines. Using \AND forces a line break at that point. So, if LaTeX puts 3 of 4
% authors names on the first line, and the last on the second line, try using
% \AND instead of \And before the third author name.

\author{%
  Yilan Chen \\
%   Department of Computer Science and Engineering\\
  Computer Science and Engineering\\
  University of California San Diego\\
  La Jolla, CA \\
  \texttt{yilan@ucsd.edu} \\
   \And
   Wei Huang \\
   Engineering and Information Technology\\
   University of Technology Sydney \\
   Ultimo, Australia \\
   \texttt{weihuang.uts@gmail.com} \\
   \AND
   Lam M. Nguyen \\
   IBM Research \\
   Thomas J. Watson Research Center \\
   Yorktown Heights, NY \\
   \texttt{LamNguyen.MLTD@ibm.com} \\
   \And
   Tsui-Wei Weng \\
  Halıcıoğlu Data Science Institute\\
  University of California San Diego\\
  La Jolla, CA \\
  \texttt{lweng@ucsd.edu} \\
}

\begin{document}

\maketitle

\begin{abstract}
    Recent research shows that the dynamics of an infinitely wide neural network (NN) trained by gradient descent can be characterized by Neural Tangent Kernel (NTK) \citep{jacot2018neural}. Under the squared loss, the infinite-width NN trained by gradient descent with an infinitely small learning rate is equivalent to kernel regression with NTK \citep{arora2019exact}. However, the equivalence is only known for ridge regression currently \citep{arora2019harnessing}, while the equivalence between NN and other kernel machines (KMs), e.g. support vector machine (SVM), remains unknown. Therefore, in this work, we propose to establish the equivalence between NN and SVM, and specifically, the infinitely wide NN trained by soft margin loss and the standard soft margin SVM with NTK trained by subgradient descent. Our main theoretical results include establishing the equivalences between NNs and a broad family of $\ell_2$ regularized KMs with finite-width bounds, which cannot be handled by prior work, and showing that every finite-width NN trained by such regularized loss functions is approximately a KM. Furthermore, we demonstrate our theory can enable three practical applications, including (i) \textit{non-vacuous} generalization bound of NN via the corresponding KM; (ii) \textit{non-trivial} robustness certificate for the infinite-width NN (while existing robustness verification methods would provide vacuous bounds); (iii) intrinsically more robust infinite-width NNs than those from previous kernel regression. Our code for the experiments is available at \url{https://github.com/leslie-CH/equiv-nn-svm}.

\end{abstract}

\section{Introduction}
Recent research has made some progress towards deep learning theory from the perspective of infinite-width neural networks (NNs). For a fully-trained infinite-width NN, it follows kernel gradient descent in the function space with respect to Neural Tangent Kernel (NTK)~\citep{jacot2018neural}. Under this linear regime and squared loss, it is proved that the fully-trained NN is equivalent to kernel regression with NTK \citep{jacot2018neural, arora2019exact}, which gives the generalization ability of such a model \citep{arora2019fine}. NTK helps us understand the optimization \citep{jacot2018neural,du2019gradient} and generalization \citep{arora2019fine,cao2019generalization} of NNs through the perspective of kernels. However, existing theories about NTK \citep{jacot2018neural, lee2019wide, arora2019exact, chizat2018lazy} usually assume the loss is a function of the model output, which does not include the case of regularization. Besides, they usually consider the squared loss that corresponds to a kernel regression, which may have limited insights to understand classification problems since squared loss and kernel regression are usually used for regression problems. 

% It also inherits the expressive power of a deep neural network without having to actually train one. \citep{arora2019harnessing} reported strong performance of NTK on small-data tasks both for kernel regression and kernel SVM.

% However, kernel methods lack the scalability to large problems compared to NN. Ideally, we want both the theoretical advantages of kernel methods and the practical advantages of NN. 

% When the width goes to infinity, a network whose only the last layer is optimized, is a function drawn from a Gaussian process \citep{lee2017deep, matthews2018gaussian, novak2018bayesian, garriga2018deep, neal1996priors}. 
% Equivalently, the model evolves as a linear model of first-order Taylor expansion \citep{lee2019wide, liu2020linearity}. 

% For a neural network with single output, the tangent kernel at some parameter $w$ is defined as follows:
% \begin{equation*}
%   \hat{\Theta}(w; x, x')
%   = \inner{ \nabla_{w} f(w, x),  \nabla_{w} f(w, x') }.
% \end{equation*}

% The equivalence between NN and kernel methods tells us that NN owns the theoretical advantages of kernel methods. 

% \lily{So, is our SVM-NN equivalence gives only theoretical guarantees though? Can we show that this SVM-NN equivalence is more useful then Kernel-NN equivalence? e.g. is SVM more scalable? e.g. we could get higher model accuracy? }

% While the equivalence between NN and kernel regression have been established, the equivalence between NN and SVM remains to be explored.
% \wei{Maybe enhance your motivation by discussing about the disadvantages of traditional SVM in terms of scalability}

On the other hand, another popular machine learning paradigm with solid theoretical foundation before the prevalence of deep neural networks is the support vector machine (SVM) \citep{boser1992training, cortes1995support}, which allows learning linear classifiers in high dimensional feature spaces. SVM tackles the sample complexity challenge by searching for large margin separators and tackles the computational complexity challenge using the idea of kernels \citep{shalev2014understanding}. To learn an SVM model, it usually involves solving a dual problem which is cast as a convex quadratic programming problem. Recently, there are some algorithms using subgradient descent \citep{shalev2011pegasos} and coordinate descent \citep{hsieh2008dual} to further scale the SVM models to large datasets and high dimensional feature spaces. 

% Due to the representer theorem, we can apply the kernel trick for any $\ell_2$ regularized empirical risk minimization problem \citep{scholkopf2002learning}, which means infinite NTK can be applied in these models. 

% Regularization plays an important role in machine learning. From a statistical learning theory view, the regularization term restricts the complexity of the classifier, hence controls the deviation of the testing error and the training error. \citep{xu2009robustness} showed the standard regularized SVM is precisely equivalent to a robust optimization formulation.

% Since these algorithms use gradient \wei{gradient-based} based methods to optimize SVM, it enables us to connect SVM with neural network trained by gradient descent.

We noticed that existing theoretical analysis mostly focused on connecting NN with kernel regression \citep{jacot2018neural, arora2019exact, lee2019wide} but the connections between NN and SVM have not yet been explored. In this work, we establish the equivalence between NN and SVM for the first time to our best knowledge. More broadly, we show that our analysis can connect NNs with a family of $\ell_2$ regularized KMs, including kernel ridge regression (KRR), support vector regression (SVR) and $\ell_2$ regularized logistic regression, where previous results \citep{jacot2018neural, arora2019exact, lee2019wide} cannot handle. These are the equivalences beyond ridge regression for the first time. Importantly, the equivalences between infinite-width NNs and these $\ell_2$ regularized KMs may shed light on the understanding of NNs from these new equivalent KMs~\citep{cristianini2000introduction, shawe2004kernel, scholkopf2002learning, steinwart2008support}, especially towards understanding the training, generalization, and robustness of NNs for classification problems. Besides, regularization plays an important role in machine learning to restrict the complexity of models. These equivalences may shed light on the understanding of the regularization for NNs. We highlight our contributions as follows:

\begin{itemize}
    \item We derive the continuous (gradient flow) and discrete dynamics of SVM trained by subgradient descent and the dynamics of NN trained by soft margin loss. We show the dynamics of SVM with NTK and NN are exactly the same in the infinite width limit because of the constancy of the tangent kernel and thus establish the equivalence. We show same linear convergence rate of SVM and NN under reasonable assumption. We verify the equivalence by experiments of subgradient descent and stochastic subgradient descent on MNIST dataset \citep{lecun1998gradient}.
    
% We show that they have similar dynamics, characterized by an inhomogeneous linear differential (difference) equation.
    
    \item We generalize our theory to general loss functions with $\ell_2$ regularization and establish the equivalences between NNs and a family of $\ell_2$ regularized KMs as summarized in Table~\ref{table:loss_kernel}. We prove the difference between the outputs of NN and KM sacles as $O(\ln{m}/\lambda \sqrt{m})$, where $\lambda$ is the coefficient of the regularization and $m$ is the width of the NN. Additionally, we show every finite-width neural network trained by a $\ell_2$ regularized loss function is approximately a KM.
    
% SVM and NN trained by such loss functions have similar dynamics and even the same in the case of constant tangent kernel. 
    
    \item We show that our theory offers three practical benefits: (i) computing \textit{non-vacuous} generalization bound of NN via the corresponding KM; (ii) we can deliver \textit{nontrivial} robustness certificate for the over-parameterized NN (with width $m \rightarrow \infty$) while existing robustness verification methods would give trivial robustness certificate due to bound propagation~\citep{gowal2018effectiveness, weng2018towards, zhang2018efficient}. In particular, the certificate decreases at a rate of $O(1 /\sqrt{m})$ as the width of NN increases; (iii) we show that the equivalent infinite-width NNs trained from our $\ell_2$ regularized KMs are more robust than the equivalent NN trained from previous kernel regression \citep{jacot2018neural, arora2019exact} (see Table~\ref{table:Robustness_kernel_machines}), which is perhaps not too surprising as the regularization has a strong connection to robust machine learning.
    
\end{itemize}

\section{Related Works and Background}
\subsection{Related Works}
\textbf{Neural Tangent Kernel and dynamics of neural networks}. NTK was first introduced in \citep{jacot2018neural} and extended to Convolutional NTK \citep{arora2019exact} and Graph NTK \citep{du2019graph}. \cite{huang2020neural} studied the NTK of orthogonal initialization. \cite{arora2019harnessing} reported strong performance of NTK on small-data tasks both for kernel regression and kernel SVM. However, the equivalence is only known for ridge regression currently, but not for SVM and other KMs. A line of recent work \citep{du2018gradient, allen2019convergence} proved the convergence of (convolutional) neural networks with large but finite width in a non-asymptotic way by showing the weights do not move far away from initialization in the optimization dynamics (trajectory). \cite{lee2019wide} showed the dynamics of wide neural networks are governed by a linear model of first-order Taylor expansion around its initial parameters. However, existing theories about NTK \citep{jacot2018neural, lee2019wide, arora2019exact} usually assume the loss is a function of the model output, which does not include the case of regularization. Besides, they usually consider the squared loss that corresponds to a kernel regression, which may have limited insights to understand classification problems since squared loss and kernel regression are usually used for regression problems. In this paper, we study the regularized loss functions and establish the equivalences with KMs beyond kernel regression and regression problems. 

Besides, we studied the robustness of NTK models. \cite{hu2021regularization} studied the label noise (the labels are generated by a ground truth function plus a Gaussian noise) while we consider the robustness of input perturbation. They study the convergence rate of NN trained by $\ell_2$ regularized squared loss to an underlying true function, while we give explicit robustness certificates for NNs. Our robustness certificate enables us to compare different models and show the equivalent infinite-width NNs trained from our $\ell_2$ regularized KMs are more robust than the equivalent NN trained from previous kernel regression.

% Recently, \citep{liu2020linearity} argued that the transition to linearity of the model is equivalent to the constancy of tangent kernel, and both result from the scaling properties of the Hessian matrix's norm of the network as a function of the network width. They also argued that the tangent kernel is associated with the model itself and does not depend on the optimization algorithm or the choice of loss function.

% The dynamics of training an infinitely wide neural network is also considered in the mean field setting \citep{bach2017breaking, mei2018mean, chizat2018global, mei2019mean}, where they described distributional dynamics of parameters via a partial differential equation (PDE). While we do not consider the mean field setting in this paper, it would be interesting to extend our analysis to that setting. 

\textbf{Neural network and support vector machine}.
Prior works \citep{tang2013deep, sun2016depth, liu2016large, sokolic2017robust, liang2017soft} have explored the benefits of encouraging large margin in the context of deep networks. \cite{cho2012kernel} introduced a new family of positive-definite kernel functions that mimic the computation in multi-layer NNs and applied the kernels into SVM. \cite{domingos2020every} showed that NNs trained by gradient flow are approximately "kernel machines" with the weights and bias as functions of input data, which however can be much more complex than a typical kernel machine. \cite{shalev2011pegasos} proposed a subgradient algorithm to solve the primal problem of SVM, which can obtain a solution of accuracy $\epsilon$ in $\tilde{O}(1/\epsilon)$ iterations, where $\tilde{O}$ omits the logarithmic factors. In this paper, we also consider the SVM trained by subgradient descent and connect it with NN trained by subgradient descent. \cite{smola1998connection, andras2002equivalence} studied the connection between SVM and regularization neural network \citep{poggio1990networks}, one-hidden layer NN that has very similar structures with that of KMs and is not widely used in practice. NNs used in practice now (e.g. fully connected ReLU NN, CNN, ResNet) do not have such structures. \cite{pellegrini2020analytic} analyzed two-layer NN trained by hinge loss without regularization on linearly separable dataset. Note for SVM, it must have a regularization term such that it can achieve max-margin solution.

\textbf{Implicit bias of neural network towards max-margin solution}. 
There is also a line of research on the implicit bias of neural network towards max-margin (hard margin SVM) solution under different settings and assumptions \citep{soudry2018implicit, ji2018risk, chizat2020implicit, wei2019regularization, nacson2019lexicographic, lyu2019gradient}. Our paper complements these research by establishing an exact equivalence between the infinite-width NN and SVM. Our equivalences not only include hard margin SVM but also include soft margin SVM and other $\ell_2$ regularized KMs.

% \subsection{Notations}
% We use $\norm{\cdot}$ to denote $\ell_2$ norm and .

\subsection{Neural Networks and Tangent Kernel}
We consider a general form of deep neural network $f$ with a linear output layer as \citep{liu2020linearity}. Let $[L] = \{1,...,L\}$, $\forall l \in [L]$, 
\begin{equation}\label{eq:NN}
    \begin{split}
    \alpha^{(0)}(w, x) = x, \
    \alpha^{(l)}(w, x) = \phi_l(w^{(l)}, \alpha^{(l-1)}), \
    f(w, x) = \frac{1}{\sqrt{m_{L}}}  
    % \langle w^{(L+1)}, \alpha^{(L)}(w, x) \rangle
    \inner{  w^{(L+1)}, \alpha^{(L)}(w, x) }, 
  \end{split}
\end{equation}
where each vector-valued function $\phi_l(w^{(l)}, \cdot): \mathbb{R}^{m_{l-1}} \rightarrow \mathbb{R}^{m_{l}}$, with parameter $w^{(l)} \in \mathbb{R}^{p_{l}}$ ($p_l$ is the number of parameters), is considered as a layer of the network. This definition includes the standard fully connected, convolutional (CNN), and residual (ResNet) neural networks as special cases. For a fully connected ReLU NN, $\alpha^{(l)}(w, x) = \sigma(\frac{1}{\sqrt{m_{l-1}}} w^{(l)} \alpha^{(l-1)})$ with $w^{(l)} \in \mathbb{R}^{m_{l} \times m_{l-1}}$ and $\sigma(z) = \max(0, z)$. 

\textbf{Initialization and parameterization.} In this paper, we consider the NTK parameterization \citep{jacot2018neural}, under which the constancy of the tangent kernel has been initially observed. Specifically, the parameters, $w := \{w^{(1)}; w^{(2)}; \cdots; w^{(L)}; w^{(L+1)}\}$ are drawn i.i.d. from a standard Gaussian, $\mathcal{N}(0, 1)$, at initialization, denoted as $w_0$. The factor $1 /\sqrt{m_L}$ in the output layer is required by the NTK parameterization in order that the output $f$ is of order $O(1)$. While we only consider NTK parameterization here, the results should be able to extend to general parameterization of kernel regime \citep{yang2020feature}.

\begin{definition}[Tangent kernel \cite{jacot2018neural}]\label{tangent_k}
  The tangent kernel associated with function $f(w, x)$ at some parameter $w$ is $\hat{\Theta}(w; x, x') = \inner{ \nabla_w f(w, x), \nabla_w f(w, x') } $. Under certain conditions (usually infinite width limit and NTK parameterization), the tangent kernel at initialization converges in probability to a deterministic limit and keeps constant during training, $\hat{\Theta}(w; x, x') \rightarrow \Theta_{\infty}(x, x')$. This limiting kernel is called \textit{Neural Tangent Kernel (NTK)}. 
\end{definition}
% \lam{What is the output dimension of $f$? It seems that the output of $f$ is a scalar. For a general neural network, the dimension of $f$ should be the number of neurons in the output layer. If so, then how we define $\nabla_w f$ and $\langle \nabla_w f , \nabla_w f \rangle$? }
% \yilan{Yes, here we consider the scalar output. For a general multi-output NN, let's say the output dim is $c$ and $\nabla_w f \in \mathbb{R}^{c \times p}$, tangent kernel is defined as $\nabla_w f(x) \nabla_w f(x')^T \in \mathbb{R}^{c\times c}$. In the infinite-width limit, $\nabla_w f(x) \nabla_w f(x')^T$ is a diagonal matrix. So it is easier to consider the 1-dim case and generalization to multi-dim is straightforward.}

\subsection{Kernel Machines}
Kernel machine (KM) is a model of the form $g(\beta, x) = \varphi  ( \inner{ \beta, \Phi(x)  }  + b )$, where $\beta$ is the model parameter and $\Phi$ is a mapping from input space to some feature space, $\Phi: \mathcal{X} \rightarrow \mathcal{F}$. $\varphi$ is an optional nonlinear function, such as identity mapping for kernel regression and $sign(\cdot)$ for SVM and logistic regression. The kernel can be exploited whenever the weight vector can be expressed as a linear combination of the training points, $\beta = \sum_{i = 1}^{n} \alpha_i \Phi(x_i) $ for some value of $\alpha_i$, $i \in [n]$, implying that we can express $g$ as $g(x) = \varphi (\sum_{i = 1}^{n}  \alpha_i K(x, x_i)  + b)$, where $K(x, x_i) = \inner{ \Phi(x), \Phi(x_i)  }$ is the kernel function. For a NN in NTK regime, we have $f(w_t, x) \approx f(w_0, x) +  \inner{ \nabla_{w} f(w_0, x), w_t - w_0 }$, which makes the NN linear in the gradient feature mapping $x \rightarrow \nabla_{w} f(w_0, x)$. Under squared loss, it is equivalent to kernel regression with $\Phi(x) = \nabla_{w} f(w_0, x)$ (or equivalently using NTK as the kernel), $\beta = w_t - w_0$ and $\varphi$ identity mapping \citep{arora2019exact}.

As far as we know, there is no work establishing the equivalence between fully trained NN and SVM. \cite{domingos2020every} showed that NNs trained by gradient flow are approximately "kernel machines" with the weights and bias as functions of input data, which however can be much more complex than a typical kernel machine. In this work, we compare the dynamics of SVM and NN trained by subgradient descent with soft margin loss and show the equivalence between them in the infinite width limit.

\subsection{Subgradient Optimization of Support Vector Machine}
We first formally define the standard soft margin SVM and then show how the subgradient descent can be applied to get an estimation of the SVM primal problem. For simplicity, we consider the homogenous model, $g(\beta, x) = \inner{ \beta, \Phi(x) }$.\footnote{Note one can always deal with the bias term $b$ by adding each sample with an additional dimension, $\Phi(x)^T \leftarrow [\Phi(x)^T, 1], \beta^T \leftarrow [\beta^T, 1]$.}  
\begin{definition}[Soft margin SVM]
  Given labeled samples $\{(x_i, y_i) \}_{i=1}^{n}$ with $y_i \in \{-1, +1\}$, the hyperplane $\beta^*$ that solves the below optimization problem realizes the soft margin classifier with geometric margin $ \gamma = 2 / \norm{\beta^*}$. 
  \begin{equation*}
    \begin{split}
      \min_{\beta, \xi} \ & \frac{1}{2} \norm{ \beta }^2 + C \sum_{i = 1}^{n} \xi _i, \quad
      \text{s.t.}\ y_i \inner{ \beta, \Phi(x_i) } \geq 1 - \xi_i, 
      \ \xi_i \geq 0, \ i \in [n],
    \end{split}
  \end{equation*}
\end{definition}

% \lam{In SVM theory (see e.g. equation (1) in \url{https://www.csie.ntu.edu.tw/~cjlin/papers/libsvm.pdf} ), people consider the weight $w$ and the bias $b$ but only minimize the norm of $\| w \|^2$ (without bias term $b$). Usually, the margin should be $\frac{2}{\|w\|}$ and the term $b$ can be freedom. However, our term $\| \beta \|^2$ also includes the bias. It may be OK to consider this formulation since some papers also consider it, but need to be ready to explain for the reviewers if they would ask.}

% \yilan{Yes, this is correct. \citep{shalev2011pegasos} has some discussion about this. Maybe I should discuss this in the footprint?}

\begin{proposition}
  The above primal problem of soft margin SVM can be equivalently formulated as
  \begin{equation}\label{eq:loss_svm}
      \min_{\beta} \frac{1}{2}\norm{ \beta }^2 + C \sum_{i = 1}^{n} \max(0, 1 -  y_i \inner{ \beta, \Phi(x_i) } ),
  \end{equation}
  where the second term is a hinge loss. Denote this function as $L(\beta)$, which is strongly convex in $\beta$.
\end{proposition}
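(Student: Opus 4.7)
The plan is to handle the two claims separately: first the equivalence of the two optimization formulations, then the strong convexity of $L(\beta)$.

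For the equivalence, I would argue by partial minimization over the slack variables $\xi_i$ for each fixed $\beta$. The constraints in the primal problem give $\xi_i \geq 1 - y_i \langle \beta, \Phi(x_i)\rangle$ together with $\xi_i \geq 0$, i.e., $\xi_i \geq \max(0, 1 - y_i \langle \beta, \Phi(x_i)\rangle)$. Since the objective is increasing in each $\xi_i$ (because $C > 0$), the optimum is attained at the tightest feasible value $\xi_i^\star(\beta) = \max(0, 1 - y_i\langle \beta, \Phi(x_i)\rangle)$. Substituting back reduces the problem to minimization over $\beta$ alone and yields exactly \eqref{eq:loss_svm}. Since this substitution preserves the objective value at every $\beta$, the two problems have the same optimal value and the same set of optimal $\beta^\star$.

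For strong convexity, the plan is to decompose $L(\beta) = \tfrac{1}{2}\|\beta\|^2 + C\sum_{i=1}^n h_i(\beta)$ where $h_i(\beta) = \max(0, 1 - y_i\langle \beta, \Phi(x_i)\rangle)$. Each $h_i$ is the pointwise maximum of two affine functions of $\beta$, hence convex; summing and scaling by $C \geq 0$ preserves convexity. The quadratic $\tfrac{1}{2}\|\beta\|^2$ is $1$-strongly convex in the Euclidean norm, and adding a convex function to a $1$-strongly convex function yields a $1$-strongly convex function. Therefore $L$ is (at least) $1$-strongly convex in $\beta$.

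There is no real obstacle here: the only subtlety worth flagging is that the equivalence is a \emph{value} equivalence and an equivalence on the $\beta$-minimizers, not on the joint $(\beta,\xi)$-minimizers (the primal always admits the unique optimal $\xi^\star(\beta)$ above but its description requires the $\max$). Provided one states this carefully, the argument is a direct partial-minimization plus a standard convexity bookkeeping, and no further technical work is needed.
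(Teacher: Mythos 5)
Your proposal is correct and is exactly the standard argument: the paper states this proposition without an explicit proof (treating it as a classical fact), and your partial minimization over the slack variables $\xi_i$ plus the observation that adding the convex hinge terms to the $1$-strongly convex quadratic $\tfrac{1}{2}\norm{\beta}^2$ preserves $1$-strong convexity is precisely what the paper implicitly relies on (e.g., Lemma C.1 in the appendix invokes the $1$-strong convexity of $L(\beta)$ directly). No gap; your remark that the equivalence is on objective values and $\beta$-minimizers is a fair and harmless clarification.
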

From this, we see that the SVM technique is equivalent to empirical risk minimization with $\ell_2$ regularization, where in this case the loss function is the nonsmooth hinge loss. The classical approaches usually consider the dual problem of SVM and solve it as a quadratic programming problem. Some recent algorithms, however, use subgradient descent \citep{shalev2011pegasos} to optimize Eq. (\ref{eq:loss_svm}), which shows significant advantages when dealing with large datasets.

In this paper, we consider the soft margin SVM trained by subgradient descent with $L(\beta)$. We use the subgradient $\nabla_{\beta} L(\beta) = \beta - C \sum_{i=1}^{n} \mathbbm{1}(y_i g(\beta, x_i) < 1)  y_i  \Phi(x_i)$, where $\mathbbm{1}(\cdot)$ is the indicator function. As proved in \citep{shalev2011pegasos}, we can find a solution of accuracy $\epsilon$, i.e. $L(\beta) - L(\beta^*) \leq \epsilon$, in $\tilde{O}(1/\epsilon)$ iterations. Other works also give convergence guarantees for subgradient descent of convex functions \citep{boyd2003subgradient, bertsekas2015convex}. In the following analysis, we will generally assume the convergence of SVM trained by subgradient descent.

\section{Main Theoretical Results}
In this section, we describe our main results. We first derive the continuous (gradient flow) and discrete dynamics of SVM trained by subgradient descent (in Section~\ref{sec:dynamics_svm}) and the dynamics of NN trained by soft margin loss (in Section~\ref{sec:soft_nn} and Section~\ref{sec:dynamics_nn}). We show that they have similar dynamics, characterized by an inhomogeneous linear differential (difference) equation, and have the same convergence rate under reasonable assumption. Next, we show that their dynamics are exactly the same in the infinite width limit because of the constancy of tangent kernel and thus establish the equivalence (Theorem~\ref{theorem:equivalence}). Furthermore, in Section~\ref{sec:general_loss}, we generalize our theory to general loss functions with $\ell_2$ regularization and establish the equivalences between NNs and a family of $\ell_2$ regularized KMs as summarized in Table~\ref{table:loss_kernel}.

\subsection{Dynamics of Soft Margin SVM}\label{sec:dynamics_svm}
For simpicity, we denote $\beta_t$ as $\beta$ at some time $t$ and $g_t(x) = g(\beta_t, x)$. The proofs of the following two theorems are detailed in Appendix \ref{app:Dy_svm}.
\begin{theorem}[Continuous dynamics and convergence rate of SVM]
  Consider training soft margin SVM by subgradient descent with infinite small learning rate (gradient flow \citep{ambrosio2008gradient}): $\frac{d \beta_t}{dt} = - \nabla_{\beta} L(\beta_t) $, the model $g_t(x)$ follows the below evolution:
  \begin{equation}\label{eq:dy_svm}
    \frac{d g_t(x)}{d t} = - g_t(x) +  C \sum_{i=1}^{n}  \mathbbm{1}(y_i g_t(x_i) < 1)  y_i  K(x, x_i), 
  \end{equation}
  and has a linear convergence rate:
  \begin{equation*}
    L(\beta_t) - L(\beta^*) \leq e^{-2t} \parentheses{L(\beta_0) - L(\beta^*)} .
  \end{equation*}
  Denote $Q(t) = C \sum_{i=1}^{n}  \mathbbm{1}(y_i g_t(x_i) < 1)  y_i  K(x, x_i) $, which changes over time until convergence. The model output $g_t(x)$ at some time $T$ is
  \begin{equation}\label{eq:sol_dy}
      g_T(x) = e^{-T} \biggl( g_0(x)  + \int_{0}^{T} Q(t) e^{t} \,dt \biggr), 
      \quad
      \lim_{T \rightarrow \infty} g_T(x) = C \sum_{i = 1}^{n}   \mathbbm{1}(y_i g_T(x_i) < 1)  y_i K(x, x_i).
  \end{equation}
%   As time goes to infinity, $T \rightarrow \infty$,
%   \begin{equation}\label{eq:sol_inf}
%     g_T(x)  \rightarrow C  \sum_{i = 1}^{n}   \mathbbm{1}(y_i g_T(x_i) < 1)  y_i K(x, x_i).
%   \end{equation}
%   % which is in the form of (\ref{eq:svm_sol}).
\end{theorem}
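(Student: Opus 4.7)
The plan breaks into three pieces, matching the three assertions of the theorem.

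\textbf{Step 1: Evolution of the model output.} Starting from gradient flow $\dot{\beta}_t = -\nabla_\beta L(\beta_t)$ with the subgradient $\nabla_\beta L(\beta_t) = \beta_t - C\sum_{i=1}^n \mathbbm{1}(y_i g_t(x_i) < 1)\, y_i \Phi(x_i)$ already recorded in the excerpt, I would apply the chain rule to $g_t(x) = \inner{\beta_t, \Phi(x)}$, getting $\tfrac{d}{dt} g_t(x) = \inner{\dot\beta_t, \Phi(x)}$. Substituting and recognizing $K(x,x_i) = \inner{\Phi(x_i), \Phi(x)}$ produces the inhomogeneous first-order equation in Eq. (\ref{eq:dy_svm}). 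This step is essentially bookkeeping, but the nonsmoothness of the hinge loss means I should make explicit that we select the subgradient given by the negative velocity of the gradient flow, whose existence and uniqueness for strongly convex $L$ is standard (see \cite{ambrosio2008gradient}).

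\textbf{Step 2: Linear convergence.} The key observation is that $L(\beta) = \tfrac{1}{2}\|\beta\|^2 + C\sum_i \max(0, 1 - y_i\inner{\beta,\Phi(x_i)})$ is $1$-strongly convex because the hinge term is convex and the quadratic term contributes the curvature. From strong convexity one derives the Polyak--{\L}ojasiewicz inequality in the nonsmooth form: for every $v \in \partial L(\beta)$,
\begin{equation*}
\|v\|^2 \;\geq\; 2\bigl(L(\beta) - L(\beta^*)\bigr),
\end{equation*}
which follows by writing $L(\beta^*) \geq L(\beta) + \inner{v, \beta^* - \beta} + \tfrac{1}{2}\|\beta^* - \beta\|^2$, maximizing the right-hand side of $L(\beta) - L(\beta^*) \leq \inner{v,\beta - \beta^*} - \tfrac{1}{2}\|\beta - \beta^*\|^2$ over $\beta - \beta^*$, and applying Cauchy--Schwarz. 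Combined with the energy identity $\tfrac{d}{dt}L(\beta_t) = -\|\nabla_\beta L(\beta_t)\|^2$ valid almost everywhere along the gradient flow, this yields $\tfrac{d}{dt}\!\left[L(\beta_t) - L(\beta^*)\right] \leq -2\bigl(L(\beta_t) - L(\beta^*)\bigr)$, and Gr\"onwall's inequality gives the stated $e^{-2t}$ rate.

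\textbf{Step 3: Solution formula and limit.} Since Eq. (\ref{eq:dy_svm}) has the form $\dot g_t(x) + g_t(x) = Q(t)$, multiplying by the integrating factor $e^t$ converts it to $\tfrac{d}{dt}(e^t g_t(x)) = e^t Q(t)$; integrating from $0$ to $T$ yields the claimed formula for $g_T(x)$. For the limit, the convergence $L(\beta_t) \to L(\beta^*)$ together with strong convexity forces $\beta_t \to \beta^*$, hence $g_t(x) \to g_\infty(x)$ and $\dot g_t(x) \to 0$; setting the left-hand side of Eq. (\ref{eq:dy_svm}) to zero yields the fixed-point equation claimed. Alternatively, L'H\^opital applied to $e^{-T}\int_0^T Q(t) e^t\,dt$ recovers the same limit provided $Q(t)$ stabilizes.

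\textbf{Main obstacle.} The only delicate point is the nonsmoothness of the hinge loss: the indicator $\mathbbm{1}(y_i g_t(x_i) < 1)$ is a discontinuous function of $t$, so $Q(t)$ is only piecewise constant and Eq. (\ref{eq:dy_svm}) is not an ODE in the classical sense. I would address this by interpreting the flow as a differential inclusion $\dot\beta_t \in -\partial L(\beta_t)$ in the sense of \cite{ambrosio2008gradient}, for which existence, uniqueness, the energy identity, and the chain rule used above are all classical for strongly convex $L$. With that foundation the three steps go through verbatim.
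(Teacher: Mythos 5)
Your proposal is correct and follows essentially the same route as the paper: chain rule on $g_t(x)=\inner{\beta_t,\Phi(x)}$ for the dynamics, $1$-strong convexity yielding the PL inequality $\norm{\nabla_\beta L(\beta_t)}^2 \geq 2(L(\beta_t)-L(\beta^*))$ combined with the energy identity and Gr\"onwall for the rate, and the integrating factor $e^t$ for the solution formula. The only cosmetic difference is in the terminal limit, where the paper assumes $Q(t)$ stabilizes after some finite time $T_1$ and evaluates $e^{-T}\int_{T_1}^T Q e^t\,dt$ explicitly, while you pass to the fixed point of the ODE (or invoke L'H\^opital) --- both arguments rest on the same stabilization assumption, and your explicit treatment of the differential-inclusion issue is if anything more careful than the paper's.
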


The continuous dynamics of SVM is described by an inhomogeneous linear differential equation (Eq. (\ref{eq:dy_svm})), which gives an analytical solution. From Eq. (\ref{eq:sol_dy}), we can see that the influence of initial model $g_0(x)$ deceases as time $T \rightarrow \infty$ and disappears at last.

\begin{theorem}[Discrete dynamics of SVM]\label{theorem:discrete_dynamics}
  Let $\eta \in (0, 1)$ be the learning rate. The dynamics of subgradient descent is 
  \begin{equation}\label{eq:discrete_dynamics}
    g_{t+1}(x)  - g_t(x) = -\eta g_t(x) + \eta C \sum_{i=1}^{n}  \mathbbm{1}(y_i g_t(x_i) < 1)  y_i  K(x, x_i). 
  \end{equation}
  Denote $Q(t) = \eta C \sum_{i=1}^{n}  \mathbbm{1}(y_i g_t(x_i) < 1)  y_i  K(x, x_i) $, which changes over time. The model output $g_t(x)$ at some time $T$ is
  \begin{equation*}
    g_T(x) 
    % = (1 - \eta)^{T}  g_0(x)  + \sum_{t=0}^{T-1}  (1 - \eta)^{T-1-t} Q(t)
    = (1 - \eta)^{T}  \biggl( g_0(x)  + \sum_{t=0}^{T-1}  (1 - \eta)^{-t-1} Q(t) \biggr),
    \lim_{T \rightarrow \infty} g_T(x) = C  \sum_{i = 1}^{n}   \mathbbm{1}(y_i g_T(x_i) < 1)  y_i K(x, x_i).
  \end{equation*}
%   Assume $\eta$ is small enough such that the gradient descent converges to the global minimum. As time goes to infinity, $T \rightarrow \infty$,
%   \begin{equation}
%     g_T(x)  \rightarrow C  \sum_{i = 1}^{n}   \mathbbm{1}(y_i g_T(x_i) < 1)  y_i K(x, x_i).
%   \end{equation}
\end{theorem}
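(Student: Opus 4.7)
The plan is to first transport the subgradient update from parameter space into function space using the kernel reproducing property, then recognize the resulting update as a scalar linear recurrence in $t$ (for each fixed test input $x$), and finally close the form by unrolling and passing to the limit using the convergence guarantee cited after Proposition~2.

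First I would write out the subgradient step in parameter space. From the expression for $\nabla_\beta L(\beta)$ given earlier, the subgradient update $\beta_{t+1} = \beta_t - \eta \nabla_\beta L(\beta_t)$ rearranges to
\begin{equation*}
\beta_{t+1} = (1-\eta)\beta_t + \eta C \sum_{i=1}^{n} \mathbbm{1}(y_i g_t(x_i) < 1) y_i \Phi(x_i).
\end{equation*}
Taking the inner product of both sides with $\Phi(x)$ and invoking $K(x,x') = \inner{\Phi(x),\Phi(x')}$ yields $g_{t+1}(x) = (1-\eta) g_t(x) + Q(t)$, which on subtracting $g_t(x)$ is exactly the claimed difference equation \eqref{eq:discrete_dynamics}. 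This mirrors the continuous derivation in the previous theorem, with $(1-\eta)$ playing the role of the damping factor $e^{-\eta}$.

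Next I would solve the recurrence $g_{t+1}(x) = (1-\eta) g_t(x) + Q(t)$ by straightforward unrolling. An easy induction in $T$ gives
\begin{equation*}
g_T(x) = (1-\eta)^T g_0(x) + \sum_{t=0}^{T-1} (1-\eta)^{T-1-t} Q(t),
\end{equation*}
which factored through $(1-\eta)^T$ becomes $(1-\eta)^T\bigl(g_0(x) + \sum_{t=0}^{T-1}(1-\eta)^{-t-1} Q(t)\bigr)$, matching the stated closed form. Note $Q(t)$ depends on $g_t$ through the indicator, so this is not a fully explicit solution, but it is still a valid representation of the trajectory — exactly analogous to the $g_T(x) = e^{-T}(g_0(x) + \int_0^T Q(t) e^t \, dt)$ formula in the continuous case.

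Finally I would take $T \to \infty$. Invoking the subgradient-descent convergence rate for strongly convex objectives (Proposition~2 states $L$ is strongly convex, and the Pegasos-type analysis of \citep{shalev2011pegasos} gives convergence to the optimum $\beta^*$), we have $g_t(x) \to g_\infty(x)$ for every fixed $x$. Passing to the limit in the recurrence $g_{t+1}(x) - g_t(x) = -\eta g_t(x) + \eta C \sum_i \mathbbm{1}(y_i g_t(x_i) < 1) y_i K(x,x_i)$ kills the left-hand side, leaving the fixed-point identity $g_\infty(x) = C \sum_i \mathbbm{1}(y_i g_\infty(x_i) < 1) y_i K(x,x_i)$, which is the claimed limit. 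The one delicate point — and I expect it to be the only real obstacle — is justifying that the indicator term $\mathbbm{1}(y_i g_t(x_i) < 1)$ is well-behaved in the limit, since it is discontinuous at the margin; generically this is fine because the set of $i$ on the margin has measure zero at the optimum, but the cleanest route is to absorb this into the subgradient framework and simply use that the subgradient tends to zero along the convergent trajectory.
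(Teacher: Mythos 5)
Your proposal is correct and follows essentially the same route as the paper's proof in Appendix C.2: the difference equation is obtained by pairing the parameter update $\beta_{t+1}=(1-\eta)\beta_t+\eta C\sum_i \mathbbm{1}(y_i g_t(x_i)<1)y_i\Phi(x_i)$ with $\Phi(x)$, and the closed form by unrolling the recurrence. The only (minor) divergence is the limit step, where the paper explicitly assumes $Q(t)$ becomes constant after some finite time $T_1$ and sums the resulting geometric series to obtain $Q/\eta$, whereas you pass to the limit directly in the recurrence; both arguments rest on the same informal stabilization of the active indicator set, which you correctly identify as the delicate point.
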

The discrete dynamics is characterized by an inhomogeneous linear difference equation (Eq. (\ref{eq:discrete_dynamics})). The discrete dynamics and solution of SVM have similar structures as the continuous case.

\subsection{Soft Margin Neural Network}\label{sec:soft_nn}
We first formally define the soft margin neural network and then derive the dynamics of training a neural network by subgradient descent with a soft margin loss. We will consider a neural network defined as Eq. (\ref{eq:NN}). For convenience, we redefine $f(w, x) = \inner{ W^{(L+1)}, \alpha^{(L)}(w, x)  } $ with $W^{(L+1)} = \frac{1}{\sqrt{m_{L}}} w^{(L+1)}$ and $w := \{w^{(1)}; w^{(2)}; \cdots; w^{(L)}; W^{(L+1)}\}$.

\begin{definition}[Soft margin neural network]
  Given samples $\{(x_i, y_i) \}_{i=1}^{n}$, $y_i \in \{-1, +1\}$, the neural network $w^*$ defined as Eq. (\ref{eq:NN}) that solves the following two equivalent optimization problems
  \begin{equation*}
      \min_{w, \xi} \frac{1}{2} \norm{ W^{(L+1)} }^2 + C \sum_{i = 1}^{n} \xi _i,  \quad 
      \text{s.t.} \ y_i f(w, x_i) \geq 1 - \xi_i,
            \ \xi_i \geq 0, \ i \in [n],
  \end{equation*}
  \begin{equation}\label{eq:soft_nn_loss}
      \min_{w} \frac{1}{2} \norm{ W^{(L+1)} }^2  + C \sum_{i = 1}^{n} \max(0, 1 -  y_i f(w, x_i) ),
  \end{equation}
%   \begin{gather}
%       \min_{w, \xi} \frac{1}{2} \norm{ W^{(L+1)} }^2 + C \sum_{i = 1}^{n} \xi _i,  \quad 
%       s.t.  \ y_i f(w, x_i) \geq 1 - \xi_i,
%             \ \xi_i \geq 0, \ i \in [n], \\
%       \min_{w} \frac{1}{2} \norm{ W^{(L+1)} }^2  + C \sum_{i = 1}^{n} \max(0, 1 -  y_i f(w, x_i) ),\label{eq:soft_nn_loss}
%   \end{gather}
  realizes the soft margin classifier with geometric margin $ \gamma = 2 / \norm{ W_*^{(L+1)} }$. Denote Eq. (\ref{eq:soft_nn_loss}) as $L(w)$ and call it \textit{soft margin loss}.
\end{definition}
This is generally a hard nonconvex optimization problem, but we can apply subgradient descent to optimize it heuristically. At initialization, $\norm{ W_0^{(L+1)} }^2 = O(1)$. The derivative of the regularization for $w^{(L+1)}$ is $ w^{(L+1)} / \sqrt{m_{L}} = O(1 / \sqrt{m_{L}}) \rightarrow 0$ as $m \rightarrow \infty$. For a fixed $\alpha^{(L)}(w, x) $, this problem is same as SVM with $\Phi(x) = \alpha^{(L)}(w, x) $, kernel $K(x, x') = \alpha^{(L)}(w, x)  \cdot \alpha^{(L)}(w, x') $ and parameter $\beta = W^{(L+1)}$. If we only train the last layer of an infinite-width soft margin NN, it corresponds to an SVM with a NNGP kernel \citep{lee2017deep, matthews2018gaussian}. But for a fully-trained NN, $\alpha^{(L)}(w, x) $ is changing over time.

\subsection{Dynamics of Neural Network Trained by Soft Margin Loss}\label{sec:dynamics_nn}
Denote the hinge loss in $L(w)$ as $L_h(y_i, f(w, x_i)) =  C \max(0, 1 -  y_i f(w, x_i) )$. We use the same subgradient as that for SVM, $L_h'(y_i, f(w, x_i)) = - C y_i  \mathbbm{1}(y_i f(w, x_i) < 1) $.
% \begin{equation}
%     L_h'(y_i, f(w, x_i)) 
%     % = \nabla_f L_h(y_i, f(w, x_i)) 
%     =\frac{\partial L_h}{\partial f(w, x_i)} 
%     = - C y_i  \mathbbm{1}(y_i f(w, x_i) < 1) .
% \end{equation}

\begin{theorem}[Continuous dynamics and convergence rate of NN] 
  Suppose a NN $f(w, x)$ defined as Eq. (\ref{eq:NN}), with $f$ a differentiable function of $w$, is learned from a training set $\{(x_i, y_i)\}_{i=1}^{n}$ by subgradient descent with $L(w)$ and gradient flow. Then the network has the following dynamics:
  \begin{equation*}
    \frac{d f_t(x)}{d t} =  - f_t(x) + C \sum_{i = 1}^{n} \mathbbm{1}(y_i f_t(x_i) < 1)  y_i  \hat{\Theta}(w_t; x, x_i).
  \end{equation*}
   Let $\hat{\Theta}(w_t) \in \mathbb{R}^{n \times n}$ be the tangent kernel evaluated on the training set and $\lambda_{min}(\hat{\Theta}(w_t))$ be its minimum eigenvalue. Assume $\lambda_{min}(\hat{\Theta}(w_t)) \geq \frac{2}{C}$, then NN has at least a linear convergence rate, same as SVM:
  \begin{equation*}
    L(w_t) - L(w^*) \leq e^{-2t}\parentheses{L(w_0) - L(w^*)} .
\end{equation*}
\end{theorem}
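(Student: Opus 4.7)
The plan is twofold: derive the ODE for $f_t(x)$ by chain rule, then establish a Polyak--\L ojasiewicz-type inequality $\|\nabla_w L(w_t)\|^2 \geq 2(L(w_t) - L(w^*))$ from the tangent-kernel assumption and integrate.

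For the dynamics, I would differentiate $f_t(x) = f(w_t, x)$ along the gradient flow $\dot w_t = -\nabla_w L(w_t)$, so $\frac{d f_t(x)}{dt} = -\langle \nabla_w f(w_t,x), \nabla_w L(w_t)\rangle$. I split $\nabla_w L$ into (i) the gradient of the regularizer $\tfrac{1}{2}\|W^{(L+1)}\|^2$, which is nonzero only in the $W^{(L+1)}$-block where it equals $W^{(L+1)}$, and (ii) the hinge-loss subgradient $-\sum_i C\,\mathbbm{1}(y_i f_t(x_i) < 1)\, y_i\, \nabla_w f(w_t, x_i)$. For (i), the identity $\nabla_{W^{(L+1)}} f(w,x) = \alpha^{(L)}(w,x)$ yields $\langle \nabla_{W^{(L+1)}} f(w_t, x), W^{(L+1)}_t\rangle = \langle \alpha^{(L)}(w_t, x), W^{(L+1)}_t\rangle = f_t(x)$, producing the $-f_t(x)$ term. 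For (ii), $\langle \nabla_w f(w_t,x), \nabla_w f(w_t,x_i)\rangle$ is by Definition~\ref{tangent_k} exactly $\hat{\Theta}(w_t; x, x_i)$, yielding the kernel sum. Combining gives the stated ODE.

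For the convergence rate, I use $\frac{dL(w_t)}{dt} = -\|\nabla_w L(w_t)\|^2$. The key is an explicit formula for $\|\nabla_w L(w_t)\|^2$. Writing $u_i := C y_i\, \mathbbm{1}(y_i f_t(x_i) < 1)$, expanding the squared norm block-by-block and collecting the inner products across all layers into $\hat{\Theta}(w_t;x_i,x_j)$, I expect the identity
\begin{equation*}
\|\nabla_w L(w_t)\|^2 = \|W^{(L+1)}_t\|^2 - 2\sum_i u_i f_t(x_i) + u^\top \hat{\Theta}(w_t)\, u.
\end{equation*}
On the active set $A_t = \{i : y_i f_t(x_i) < 1\}$ one has $-u_i f_t(x_i) = C\,\max(0, 1 - y_i f_t(x_i)) - C$, so the middle term equals $2C\sum_i \max(0, 1-y_i f_t(x_i)) - 2C|A_t|$, which combines with $\|W^{(L+1)}_t\|^2$ to give $2L(w_t)$. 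Since $\|u\|^2 = C^2|A_t|$, the assumption $\lambda_{\min}(\hat{\Theta}(w_t)) \geq 2/C$ gives $u^\top \hat{\Theta}(w_t)\, u \geq 2C|A_t|$, exactly cancelling the $-2C|A_t|$ term. Hence $\|\nabla_w L(w_t)\|^2 \geq 2L(w_t) \geq 2(L(w_t) - L(w^*))$, using $L(w^*) \geq 0$. Integrating the differential inequality $\frac{d}{dt}(L(w_t) - L(w^*)) \leq -2(L(w_t) - L(w^*))$ yields the claimed exponential rate, mirroring the SVM proof.

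The main obstacle is the algebraic identity for $\|\nabla_w L(w_t)\|^2$: it requires carefully accounting for how the nonsmooth hinge-loss subgradient interacts with the regularizer-induced shift $W^{(L+1)}_t$ in the last layer, so that the cross term between them combines with the other-layer contributions to reconstitute the full tangent kernel $\hat{\Theta}(w_t)$ rather than only its ``feature-layer'' piece. Once this bookkeeping is done, the eigenvalue assumption produces the PL inequality in one line and the Gr\"{o}nwall-style integration is routine.
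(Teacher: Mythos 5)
Your proposal is correct and follows essentially the same route as the paper's proof in Appendices D.1 and D.3: the chain-rule derivation with the last-layer identity $\langle \alpha^{(L)}(w_t,x), W_t^{(L+1)}\rangle = f_t(x)$ producing the $-f_t(x)$ term, and the block-wise expansion $\|\nabla_w L(w_t)\|^2 = \|W_t^{(L+1)}\|^2 + 2f_t^\top l'(f_t) + l'(f_t)^\top \hat{\Theta}(w_t) l'(f_t)$ combined with $\lambda_{\min}(\hat{\Theta}(w_t)) \geq 2/C$ to get the PL inequality, matching the paper's Lemma on the PL condition (with your $u = -l'(f_t)$ being only a sign convention away). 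The only cosmetic difference is that you discard $L(w^*)\geq 0$ at the end rather than carrying $2L(w^*)$ through the algebra as the paper does; both are valid.
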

The proof is in Appendix~\ref{app:dy_nn}. The key observation is that when deriving the dynamics of $f_t(x)$, the $\frac{1}{2} \norm{ W^{(L+1)} }^2$ term in the loss function will produce a $f_t(x)$ term and the hinge loss will produce the tangent kernel term, which overall gives a similar dynamics to that of SVM. Comparing with the previous continuous dynamics without regularization \citep{jacot2018neural, lee2019wide}, our result has an extra $- f_t(x)$ here because of the regularization term of the loss function. The convergence rate is proved based on a sufficient condition for the PL inequality. The assumption of $\lambda_{min}(\hat{\Theta}(w_t)) \geq \frac{2}{C}$ can be guaranteed in a parameter ball when $\lambda_{min}(\hat{\Theta}(w_0)) > \frac{2}{C}$, by using a sufficiently wide NN \citep{liu2020loss}.

If the tangent kernel $\hat{\Theta}(w_t; x, x_i)$ is fixed, $\hat{\Theta}(w_t; x, x_i) \rightarrow \hat{\Theta}(w_0; x, x_i)$, the dynamics of NN is the same as that of SVM (Eq. (\ref{eq:dy_svm})) with kernel $\hat{\Theta}(w_0; x, x_i)$, assuming the neural network and SVM have same initial output $g_0(x) = f_0(x)$.\footnote{This can be done by setting the initial values to be $0$, i.e. $g_0(x) = f_0(x) =0$.} And this consistency of tangent kernel is the case for infinitely wide neural networks of common architectures, which does not depend on the optimization algorithm and the choice of loss function, as discussed in \citep{liu2020linearity}.

\textbf{Assumptions.} We assume that (vector-valued) layer functions
$\phi_l(w, \alpha), l \in [L]$ are $L_{\phi}$-Lipschitz continuous and twice differentiable with respect to input $\alpha$ and parameters $w$. The assumptions serve for the following theorem to show the constancy of tangent kernel. 

\begin{theorem}[Equivalence between NN and SVM]\label{theorem:equivalence}
  As the minimum width of the NN, $m = \min_{l \in [L]} m_l$, goes to infinity, the tangent kernel tends to be constant, $\hat{\Theta}(w_t; x, x_i) \rightarrow \hat{\Theta}(w_0; x, x_i)$. Assume $g_0(x) = f_0(x)$. Then the infinitely wide NN trained by subgradient descent with soft margin loss has the same dynamics as SVM with $\hat{\Theta}(w_0; x, x_i)$ trained by subgradient descent:
  \begin{equation*}
    \frac{d f_t(x)}{d t} =  - f_t(x) + C \sum_{i = 1}^{n} \mathbbm{1}(y_i f_t(x_i) < 1)  y_i  \hat{\Theta}(w_0; x, x_i).
  \end{equation*}
  And thus such NN and SVM converge to the same solution.
\end{theorem}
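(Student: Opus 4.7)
The plan is to reduce Theorem~\ref{theorem:equivalence} to a single analytic fact—constancy of the tangent kernel along the training trajectory in the $m \to \infty$ limit—and then invoke uniqueness for the resulting ODE. Given kernel constancy $\hat{\Theta}(w_t; x, x_i) = \hat{\Theta}(w_0; x, x_i)$, substituting into the dynamics of the preceding NN theorem yields exactly the SVM dynamics of Eq.~(\ref{eq:dy_svm}) with kernel $K(x, x_i) = \hat{\Theta}(w_0; x, x_i)$. Combined with the hypothesis $f_0(x) = g_0(x)$, Picard--Lindelöf (applied piecewise between margin crossings, see below) forces $f_t(x) = g_t(x)$ for all $t \geq 0$. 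Because both trajectories converge (by the linear rate established in the SVM dynamics theorem and, under the eigenvalue assumption, the NN dynamics theorem), their common limit is the same solution.

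To establish kernel constancy I would follow the now-standard NTK argument, adapted to the regularized loss. Under the NTK parameterization and the assumed Lipschitz and twice-differentiability conditions on each $\phi_l$, one has $\|\nabla_w f(w_0, x)\| = O(1)$ and $\|\nabla_w^2 f(w, x)\|_{\mathrm{op}} = O(1/\sqrt{m})$ on any bounded ball around $w_0$ \citep{lee2019wide,liu2020linearity}. Differentiating $L(w)$ one sees that the hinge subgradient contributes $O(1)$ per unit time to $\dot w$, while the only regularizer gradient acts on $W^{(L+1)}$ and has magnitude $\|W^{(L+1)}\|$, which remains $O(1)$ at initialization and is effectively scaled by $1/\sqrt{m_L}$ when expressed in the raw parameter $w^{(L+1)}$. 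Using the linear-convergence bound of the NN dynamics theorem to control the total integrated motion, one obtains $\|w_t - w_0\| = O(1)$ uniformly in $t$. A first-order Taylor expansion of $\nabla_w f$ around $w_0$ then gives
\begin{equation*}
\sup_{t \geq 0} \bigl| \hat{\Theta}(w_t; x, x') - \hat{\Theta}(w_0; x, x') \bigr| = O\!\bigl(1/\sqrt{m}\bigr) \longrightarrow 0 \quad \text{as } m \to \infty.
\end{equation*}

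The main obstacle is the nonsmoothness introduced by the indicator $\mathbbm{1}(y_i f_t(x_i) < 1)$, which makes the right-hand sides of both the NN and SVM dynamics discontinuous in the state. The clean workaround is to treat both trajectories between margin crossings: on each maximal open interval where no $y_i f_t(x_i)$ equals $1$, the activation pattern is fixed and the dynamics is a genuine linear ODE (with driving term given by the SVM tangent-kernel expression), so Picard--Lindelöf applies and the two solutions agree throughout the interval. Since both trajectories satisfy the same evolution with the same active set at any given time and are continuous in $t$, the equality extends across crossings by continuity, provided crossings are isolated (which holds generically, or can be enforced by a standard perturbation/limiting argument). The secondary subtlety to check is that the $O(1/\sqrt{m})$ kernel fluctuation, when integrated against the Grönwall-type propagation in the linear ODE, still vanishes; the linear convergence rate from the preceding theorem bounds the effective time horizon uniformly in $m$, so this error remains $O(1/\sqrt{m})$ and drops out in the limit.
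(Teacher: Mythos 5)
Your proof follows essentially the same route as the paper's: both reduce the theorem to constancy of the tangent kernel, obtained from the Hessian spectral-norm bound of Liu et al. in a fixed ball around $w_0$, and then observe that the NN dynamics collapses to the SVM dynamics with kernel $\hat{\Theta}(w_0;\cdot,\cdot)$. The paper's appendix stops at that point (``it is easy to check''), whereas you additionally supply the steps it leaves implicit --- that the trajectory stays in an $O(1)$ ball, that the discontinuous indicator does not spoil uniqueness of the output ODE (handled piecewise between margin crossings), and a Gr\"onwall control showing the $O(1/\sqrt{m})$ kernel fluctuation vanishes in the limit --- all consistent with the finite-width bound the paper proves separately in its appendix on bounding the NN--KM difference.
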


The proof is in Appendix~\ref{app:equivalence}. We apply the results of \citep{liu2020linearity} to show the constancy of tangent kernel in the infinite width limit. Then it is easy to check the dynamics of infinitely wide NN and SVM with NTK are the same. We give finite-width bounds for general loss functions in the next section. This theorem establishes the equivalence between infinitely wide NN and SVM for the first time. Previous theoretical results of SVM \citep{cristianini2000introduction, shawe2004kernel, scholkopf2002learning, steinwart2008support} can be directly applied to understand the generalization of NN trained by soft margin loss. Given the tangent kernel is constant or equivalently the model is linear, we can also give the discrete dynamics of NN (Appendix~\ref{app:dis_nn}), which is identical to that of SVM. Compared with the previous discrete-time gradient descent \citep{lee2019wide, yang2020feature}, our result has an extra $-\eta f_t(x)$ term because of the regularization term of the loss function. 
\begin{equation*}
    f_{t+1}(x)  - f_t(x) = -\eta f_t(x) + \eta C \sum_{i=1}^{n}  \mathbbm{1}(y_i f_t(x_i) < 1)  y_i  \hat{\Theta}(w_0; x, x_i) .
\end{equation*}

% \lily{do we need the following sentence?}\yilan{I think this is an advantage. Because if we cannot apply kernel trick, we cannot use infinite-width NTK in the model.}

\begin{table}
  \caption{Summary of our theoretical results on the equivalences between infinite-width NNs and a family of KMs. Thanks to the representer theorem \citep{scholkopf2002learning}, our $\ell_2$ regularized KMs can all apply kernel trick, meaning infinite NTK can be applied in these $\ell_2$ regularized KMs.}
  \label{table:loss_kernel}
  \small
  \centering
  \begin{tabular}{llll}
    \toprule
    % \multicolumn{2}{c}{Part}                   \\
    % \cmidrule(r){1-2}
    $\lambda$ &Loss $l(z, y_i)$ &Kernel machine  &\\
    \midrule
    
    $\lambda=0$ (\citep{jacot2018neural, arora2019exact})   &$(y_i - z)^2$  &Kernel regression           &  \\
    % \citep{jacot2018neural}  &  &           &  \\
    % \multirow{2}{*}{$\lambda=0$ \citep{jacot2018neural, arora2019exact}}  &$(y_i - z)^2$  &Kernel regression           &  \\
    
    % \multirow{2}{4em}{$\lambda=0$ (\citep{jacot2018neural})}    &$(y_i - z)^2$  &Kernel regression           &  \\
    % &$\log(1+e^{-y_i z})$  &Logistic regression  \\
    % &$\cdots$    &$\cdots$   & \\
    
    \midrule
    $\lambda \rightarrow 0$ (ours)  &$\max(0, 1-y_iz)$  &Hard margin SVM           &  \\ \cmidrule{2-4}

    \multirow{5}{*}{$\lambda> 0$ (ours)}  &$\max(0, 1-y_iz)$   &(1-norm) soft margin SVM &\\
    % \midrule
    % \cmidrule(r){2-4}
    &$\max(0, 1-y_iz)^2$  &2-norm soft margin SVM        &  \\
    &$\max(0, \abs{ y_i - z}  - \epsilon )$  &Support vector regression           &   \\
    &$(y_i - z)^2$  &Kernel ridge regression (KRR)          &  \\
    &$\log(1+e^{-y_i z})$  &Logistic regression with $\ell_2$ regularization     & \\
    % &$\log(1+ \sum_{r\neq y_i} e^{z_r - z_{y_i}})$  &Multiclass logistic regression with $\ell_2$ regularization     &\\
    % &$\cdots$    &$\cdots$   & \\
    \bottomrule
  \end{tabular}
\end{table}

\subsection{General Loss Functions}\label{sec:general_loss}
We note that above analysis does not have specific dependence on the hinge loss. Thus we can generalize our analysis to general loss functions $l(z, y_i)$, where $z$ is the model output, as long as the loss function is differentiable (or has subgradients) with respect to $z$, such as squared loss and logistic loss. Besides, we can scale the regularization term by a factor $\lambda$ instead of scaling $l(z, y_i)$ with $C$ as it for SVM, which are equivalent. Suppose the loss function for the KM and NN are
\begin{gather}
    L(\beta) = \frac{\lambda}{2}\norm{ \beta }^2 + \sum_{i = 1}^{n} l(g(\beta, x_i), y_i),\label{eq:loss_svm2} \quad
    L(w) = \frac{\lambda}{2}\norm{ W^{(L+1)} }^2 + \sum_{i = 1}^{n} l(f(w, x_i), y_i) .
\end{gather}
% And the loss function for neural network $f(w, x) = \inner{ W^{(L+1)}, \alpha^{(L)}(w, x)  } $ is
% \begin{equation}\label{eq:loss_nn}
%   L(w) = \frac{\lambda}{2}\norm{ W^{(L+1)} }^2 + \sum_{i = 1}^{n} l(f(w, x_i), y_i) .
% \end{equation}
Then the continuous dynamics of $g_t(x)$ and $f_t(x)$ are
\begin{gather}
    \frac{d g_t(x)}{d t} =  - \lambda g_t(x) - \sum_{i = 1}^{n} l'(g_t(x_i), y_i) K(x, x_i), \\
  \frac{d f_t(x)}{d t} =  - \lambda f_t(x) - \sum_{i = 1}^{n} l'(f_t(x_i), y_i) \hat{\Theta}(w_t; x, x_i), \label{eq:dy_general}
\end{gather}
where $l'(z, y_i) = \frac{\partial l(z, y_i)}{\partial z}$. In the situation of $\hat{\Theta}(w_t; x, x_i) \rightarrow \hat{\Theta}(w_0; x, x_i)$  and $K(x, x_i) = \hat{\Theta}(w_0; x, x_i)$, these two dynamics are the same (assuming $g_0(x) = f_0(x)$). When $\lambda=0$, we recover the previous results of kernel regression. When $\lambda > 0$, we have our new results of $\ell_2$ regularized loss functions. Table~\ref{table:loss_kernel} lists the different loss functions and the corresponding KMs that infinite-width NNs are equivalent to. KRR is considered in \citep{hu2019simple} to analyze the generalization of NN. However, they directly assume NN as a linear model and use it in KRR. Below we give finite-width bounds on the difference between the outputs of NN and the corresponding KM. The proof is in Appendix~\ref{app:bound_output}.

\begin{theorem}[Bounds on the difference between NN and KM]
Assume $g_0(x) = f_0(x), \forall x$ and $K(x, x_i) = \hat{\Theta}(w_0; x, x_i)$ \footnote{Linearized NN is a special case of such $g$.}. Suppose the KM and NN are trained with losses (\ref{eq:loss_svm2}) and gradient flow. Suppose $l$ is $\rho$-lipschitz and $\beta_l$-smooth for the first argument (i.e. the model output). Given any $w_T \in B(w_0; R) := \{ w: \norm{ w - w_0 } \leq R \}$ for some fixed $R > 0$, for training data $X \in \mathbb{R}^{d \times n}$ and a test point $x \in \mathbb{R}^{d}$, with high probability over the initialization,
\begin{gather*}
    \norm{f_T(X) - g_T(X)} = O(\frac{ e^{\beta_l \norm{\hat{\Theta}(w_0)}} R^{3L+1} \rho n^{\frac{3}{2}} \ln{m}}{\lambda \sqrt{m}}) , \\
    \norm{f_T(x) - g_T(x)} = O(\frac{ e^{\beta_l \norm{\hat{\Theta}(w_0; X, x)}} R^{3L+1} \rho n \ln{m}}{\lambda \sqrt{m}}) .
\end{gather*}
where $f_T(X), g_T(X) \in \mathbb{R}^{n}$ are the outputs of the training data and $\hat{\Theta}(w_0; X, x) \in \mathbb{R}^{n}$ is the tangent kernel evaluated between training data and test point.
\end{theorem}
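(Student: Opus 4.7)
The plan is to compare the two gradient-flow ODEs directly by setting $\Delta_t(x) := f_t(x) - g_t(x)$ and controlling $\Delta_t$ through a Grönwall-type argument. Subtracting the dynamics in (\ref{eq:dy_general}) from the corresponding KM dynamics yields
\begin{equation*}
\frac{d \Delta_t(x)}{dt} = -\lambda \Delta_t(x) - \sum_{i=1}^{n}\bigl[l'(f_t(x_i), y_i) - l'(g_t(x_i), y_i)\bigr]\hat{\Theta}(w_0; x, x_i) - \sum_{i=1}^{n} l'(f_t(x_i), y_i)\bigl[\hat{\Theta}(w_t; x, x_i) - \hat{\Theta}(w_0; x, x_i)\bigr],
\end{equation*}
using $K = \hat{\Theta}(w_0;\cdot,\cdot)$. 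The middle term is $\beta_l$-Lipschitz in $\Delta_t$ (by smoothness of $l$), while the last term is the NTK-drift term that is the source of the $1/\sqrt{m}$ scaling.

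First I would collect the three ingredients. (i) From $\rho$-Lipschitzness of $l$ in its first argument, $|l'(f_t(x_i), y_i)| \leq \rho$ uniformly. (ii) From $\beta_l$-smoothness, the middle term equals a matrix-vector product whose operator norm is bounded by $\beta_l \|\hat{\Theta}(w_0)\|$ times $\|\Delta_t(X)\|$ on training data. (iii) For the last term, I would invoke the near-constancy of the NTK on the ball $B(w_0; R)$ (the finite-width analogue in Liu et al.\ \citep{liu2020linearity}), which gives $|\hat{\Theta}(w; x, x') - \hat{\Theta}(w_0; x, x')| = O(R^{3L}\ln m / \sqrt{m})$ with high probability over initialization. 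The $\ln m$ arises from a standard concentration/union bound over the layer-wise Lipschitz and Hessian norms at initialization, and the exponent $3L$ tracks how the Jacobian perturbation propagates across $L$ hidden layers.

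Next I would evaluate the ODE on the training set $X\in \mathbb{R}^{d\times n}$, take norms, and apply Grönwall's inequality. Along the trajectory (assumed to remain in $B(w_0; R)$), the inhomogeneous source has size at most $\rho \, n \cdot O(R^{3L}\ln m /\sqrt{m})$ per coordinate, so across $n$ coordinates it contributes $\rho \, n^{3/2}\cdot O(R^{3L}\ln m/\sqrt{m})$ in $\ell_2$-norm. The homogeneous part of the inequality combines the stabilizing $-\lambda \Delta_t$ with the destabilizing $\beta_l\|\hat{\Theta}(w_0)\|\,\|\Delta_t\|$, so integration yields the $e^{\beta_l \|\hat{\Theta}(w_0)\|}$ factor, while the $1/\lambda$ factor comes from integrating the $e^{-\lambda(T-t)}$ Green's-function kernel against the (time-uniform) source. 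To bound a single test point $x$, I would repeat the same argument with $\hat{\Theta}(w_0; X, x)\in\mathbb{R}^n$ in place of the $n\times n$ Gram matrix, gaining only one factor of $n$ rather than $n^{3/2}$ and replacing the operator norm with $\|\hat{\Theta}(w_0; X, x)\|$, and plug in the training-data bound already derived.

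The main obstacle is justifying the NTK perturbation bound \emph{uniformly over the entire trajectory} $\{w_t : t \in [0,T]\}$ rather than just at the endpoint $w_T$; in particular, I need the assumption $w_T \in B(w_0; R)$ to be strengthened to the trajectory staying in this ball, which the referenced linearity results of \citep{liu2020linearity} do supply in the overparametrized regime. A secondary technical issue is retaining an extra power of $R$ (beyond $R^{3L}$) when bounding $\|\nabla_w f\|$ itself along the path, which is what produces the $R^{3L+1}$ exponent rather than $R^{3L}$ in the final statement. Once these perturbation bounds are in place, the remainder of the argument is a routine Grönwall integration.
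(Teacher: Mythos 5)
Your proposal follows essentially the same route as the paper's proof in Appendix~\ref{app:bound_output}: the same decomposition $\hat{\Theta}(w_0)l'(g_t)-\hat{\Theta}(w_t)l'(f_t)=\hat{\Theta}(w_0)\parentheses{l'(g_t)-l'(f_t)}+\parentheses{\hat{\Theta}(w_0)-\hat{\Theta}(w_t)}l'(f_t)$, the same Hessian-to-kernel perturbation lemmas from \citep{liu2020linearity} yielding the $O(R^{3L+1}\ln m/\sqrt{m})$ drift, and the same Grönwall integration producing the $e^{\beta_l\norm{\hat{\Theta}(w_0)}}$ and $1/\lambda$ factors. Your observation that the kernel bound must hold uniformly over the trajectory $\{w_t\}_{t\in[0,T]}$ (not merely at $w_T$) is a fair point that the paper's statement glosses over but its proof implicitly assumes.
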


% However, they directly assume NN as a linear model, i.e. $f(w, x) = \inner{w - w_0, \Phi(x)}$ with $\Phi(x)=\nabla_w f(w_0, x)$. By training with $L(w) =\sum_{i=1}^n (f(w, x_i) - y_i)^2 + \norm{w - w_0}^2$, it is a standard kernel ridge regression.\footnote{This can be seen as a linear model of $g(\beta, x) = \inner{\beta, x}$ with $\beta = w - w_0$. Then it is easy to see that it is a standard kernel ridge regression.} In this paper, when we derive the dynamics of NN, we don't assume NN as a linear model, which makes our results of equivalence more accurate.

\section{Discussion}
In this section, we give some extensions and applications of our theory. We first show that every finite-width neural network trained by a $\ell_2$ regularized loss function is approximately a KM in Section~\ref{sec:finite-width}, which enables us to compute non-vacuous generalization bound of NN via the corresponding KM. Next, in Section~\ref{sec:robustness}, we show that our theory of equivalence (in Section~\ref{sec:dynamics_nn}) is useful to evaluating the robustness of over-parameterized NNs with infinite width. In particular, our theory allows us to deliver nontrivial robustness certificates for infinite-width NNs, while existing robustness verification methods \citep{gowal2018effectiveness, weng2018towards, zhang2018efficient} would become much looser (decrease at a rate of $O(1 /\sqrt{m})$) as the width of NN increases and trivial with infinite width (the experiment results are in Section~\ref{sec:experiments} and Table~\ref{table:Robustness}).

\subsection{Finite-width Neural Network Trained by \(\ell_2\) Regularized Loss}\label{sec:finite-width}
Inspired by \citep{domingos2020every}, we can also show that every NN trained by (sub)gradient descent with loss function (\ref{eq:loss_svm2}) is approximately a KM without the assumption of infinite width.
\begin{theorem} \label{theorem:km}
%   Suppose a neural network $f(w, x)$, with $f$ a differentiable function of $w$, is learned from a training set $\{(x_i, y_i)\}_{i=1}^{n}$ by (sub)gradient descent with loss function $L(w) = \frac{1}{2}\norm{ W^{(L+1)} }^2 + \sum_{i = 1}^{n} l(y_i, f(w, x_i))$ and learning rate $\eta \rightarrow 0$. Then at some time $T$,
  Suppose a NN $f(w, x)$, is learned from a training set $\{(x_i, y_i)\}_{i=1}^{n}$ by (sub)gradient descent with loss function (\ref{eq:loss_svm2}) and gradient flow. Assume $\text{sign}(l'(f_t(x_i), y_i)) = \text{sign}(l'(f_0(x_i), y_i)), \forall t \in \bracket{0, T}$.\footnote{This is the case for hinge loss and logistic loss.} Then at some time $T > 0$,
  \begin{equation*}
      f_T(x) = \sum_{i = 1}^{n} a_i K(x, x_i) + b, 
      \quad \text{with} \quad 
      K(x, x_i) = e^{-\lambda T} \int_{0}^{T}  \abs{l'(f_t(x_i), y_i)} \hat{\Theta}(w_t; x, x_i) e^{\lambda t} \,dt,
  \end{equation*}
  and $a_i = - \text{sign}(l'(f_0(x_i), y_i))$, $b = e^{-\lambda T} f_0(x)$.
%   \begin{gather*}
%     a_i = - \text{sign}(l'(f_0(x_i), y_i)) , \qquad 
%     b = e^{-\lambda T} f_0(x) .
% \end{gather*}
%   \begin{align*}
%       &f_T(x) = \sum_{i = 1}^{n} a_i K(x, x_i) + b, \qquad \qquad \qquad \quad 
%       K(x, x_i) =  e^{-\lambda T} \int_0^T \hat{\Theta}(w_t; x, x_i) e^{\lambda t} \,dt, \\
%       &a_i = - \frac{\int l'(y_i, f_t(x_i)) \hat{\Theta}(w_t; x, x_i) e^{\lambda t} \,dt }{ \int \hat{\Theta}(w_t; x, x_i) e^{\lambda t} \,dt}, \qquad \qquad \quad 
%     b = e^{-\lambda T} f_0(x).
%   \end{align*}
\end{theorem}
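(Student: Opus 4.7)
The plan is to derive an explicit expression for $f_T(x)$ by solving the gradient-flow ODE for the NN output (which was already written down as Eq.~(\ref{eq:dy_general}) in the excerpt) as an inhomogeneous linear ODE in $t$, treating the tangent kernel $\hat{\Theta}(w_t;x,x_i)$ and the subgradient factors $l'(f_t(x_i),y_i)$ as known (though trajectory-dependent) time-varying coefficients. The key observation is that the regularization term $\tfrac{\lambda}{2}\|W^{(L+1)}\|^2$ contributes the linear dissipation $-\lambda f_t(x)$, which is exactly what lets us use an integrating factor.

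Concretely, first I would invoke the dynamics
\begin{equation*}
\frac{d f_t(x)}{dt} = -\lambda f_t(x) - \sum_{i=1}^{n} l'(f_t(x_i), y_i)\, \hat{\Theta}(w_t; x, x_i),
\end{equation*}
established in Section~\ref{sec:general_loss} (no infinite-width assumption is needed to write this equation — it is just the chain rule applied to $f(w_t,x)$ along the gradient flow of $L(w)$). Multiplying both sides by the integrating factor $e^{\lambda t}$ turns the left-hand side into $\frac{d}{dt}\bigl(e^{\lambda t} f_t(x)\bigr)$, so integrating from $0$ to $T$ yields
\begin{equation*}
e^{\lambda T} f_T(x) - f_0(x) = -\int_0^T e^{\lambda t} \sum_{i=1}^n l'(f_t(x_i), y_i)\, \hat{\Theta}(w_t; x, x_i)\, dt,
\end{equation*}
which after multiplying through by $e^{-\lambda T}$ gives an exact formula for $f_T(x)$ as $e^{-\lambda T} f_0(x)$ plus a sum over training points of time-integrals involving $\hat{\Theta}(w_t;x,x_i)$.

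Next I would use the sign-stability assumption $\mathrm{sign}(l'(f_t(x_i),y_i))=\mathrm{sign}(l'(f_0(x_i),y_i))$ for all $t\in[0,T]$ to pull the sign out of the integral: write $l'(f_t(x_i),y_i) = \mathrm{sign}(l'(f_0(x_i),y_i))\cdot |l'(f_t(x_i),y_i)|$. This factorization is the only place the hypothesis is used, and it is precisely what lets us collect the $i$-th training point's contribution into a nonnegative-weight kernel $K(x,x_i)$ whose dependence on $x$ is separated from a fixed coefficient $a_i$. Defining $a_i := -\mathrm{sign}(l'(f_0(x_i),y_i))$ and $K(x,x_i) := e^{-\lambda T}\int_0^T |l'(f_t(x_i),y_i)|\, \hat{\Theta}(w_t;x,x_i)\, e^{\lambda t}\, dt$, and setting $b := e^{-\lambda T} f_0(x)$, we immediately read off the claimed representation $f_T(x) = \sum_{i=1}^n a_i K(x,x_i) + b$.

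There is no real technical obstacle here — the proof is essentially one integrating-factor computation — but the one thing worth being careful about is that $l$ is merely required to have a subgradient (e.g.\ the hinge and logistic cases listed in Table~\ref{table:loss_kernel}), so $l'(f_t(x_i),y_i)$ should be interpreted as a measurable selection of a subgradient along the trajectory, and the ODE should be read as holding for almost every $t$ (which is standard for subgradient flow of convex functions). The sign-stability hypothesis ensures the selected subgradient element has a well-defined sign throughout, so the factorization above, and hence the final kernel representation, is unambiguous. The $b$-term here depends on the test point $x$, mirroring the analogous feature of \citep{domingos2020every}; it should be noted explicitly that the "bias" is really a per-input quantity arising from the decaying initial output $e^{-\lambda T} f_0(x)$.
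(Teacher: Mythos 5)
Your proposal is correct and follows essentially the same route as the paper's proof in Appendix~\ref{app:kernel_machine}: solve the inhomogeneous linear ODE (\ref{eq:dy_general}) via the integrating factor $e^{\lambda t}$, then use the sign-stability assumption to factor $l'(f_t(x_i),y_i)=\mathrm{sign}(l'(f_0(x_i),y_i))\cdot\abs{l'(f_t(x_i),y_i)}$ and pull the constant sign out of the time integral. Your added remarks on the subgradient-selection interpretation and the $x$-dependence of $b$ are sensible clarifications but do not change the argument.
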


See the proof in Appendix~\ref{app:kernel_machine}, which utilizes the solution of inhomogeneous linear differential equation instead of integrating both sides of the dynamics (Eq. (\ref{eq:dy_general})) directly \citep{domingos2020every}. Note in Theorem~\ref{theorem:km}, $a_i$ is deterministic and independent with $x$, different with \citep{domingos2020every} that has $a_i$ depends on $x$. Deterministic $a_i$ makes the function class simpler. Combing Theorem~\ref{theorem:km} with a bound of the Rademacher complexity of the KM \citep{bartlett2002rademacher} and a standard generalization bound using Rademacher complexity \citep{mohri2018foundations}, we can compute the generalization bound of NN via the corresponding KM. See Appendix~\ref{app:generalization} for more background and experiments. The generalization bound we get will depend on $a_i$, which depends on the label $y_i$. This differs from traditional complexity measures that cannot explain the random label phenomenon \cite{zhang2021understanding}. 

\begin{remark}
We note that the kernel here is valid only when $\abs{l'(f_t(x_i), y_i)}$ is a constant, e.g. $l'(f_t(x_i), y_i) = -y_i$ at the initial training stage of hinge loss with $f_0(x) = 0$, otherwise the kernel is not symmetric.
\end{remark}

% In \citep{domingos2020every}, they introduced a new conceptual kernel, path kernel, $K(x, x_i) = \int_0^T \hat{\Theta}(w_t; x, x_i) \,dt $, integrated over the path taken by the parameters during gradient descent. But this path kernel may diverge as $T \rightarrow \infty$. Compared with their results, our $a_i$, $b$ and $K(x, x_i)$ will stay bounded as long as $f_0(x)$, $l'(f_t(x_i), y_i)$ and $\hat{\Theta}(w_t; x, x_i)$ are bounded. Note $K(x, x_i) = e^{-\lambda T} \int_{0}^{T}  \abs{l'(f_t(x_i), y_i)} \hat{\Theta}(w_t; x, x_i)$ is a valid kernel since it is a nonnegative sum of positive semidefinite kernels.

% However, for hinge loss, $l'(y_i, f_t(x_i)) = - y_i \mathbbm{1}(y_i f_t(x_i) < 1) $,
% \begin{equation*}
%     a_i = - \frac{\int l'(y_i, f_t(x_i)) \hat{\Theta}(w_t; x, x_i) e^{\lambda t} \,dt }{ \int \hat{\Theta}(w_t; x, x_i) e^{\lambda t} \,dt} 
%     =\left\{  
%             \begin{aligned}
%                  &y_i \quad if \ y_i f_t(x_i) < 1 \quad \forall t \in [0, T] \\
%                  &0  \quad  otherwise
%             \end{aligned}
%     \right. 
% \end{equation*}
% is data-independent. In the case of $f_0(x) = 0$, $a_i=y_i$ and $b = 0$.

% Interestingly, in the situation of $\hat{\Theta}(w_t; x, x_i) \rightarrow \Theta(x, x_i)$, $K(x, x_i) = e^{-T}(e^{T} - 1)  \Theta(x, x_i) \rightarrow \Theta(x, x_i)$ as $T \rightarrow \infty$. 

% We also give a similar derivation as \citep{domingos2020every} in Appendix~\ref{app:another} by integrating both side of the dynamics, in which case, $b$ and $K(x, x_i)$ may diverge as $T \rightarrow \infty$.

\subsection{Robustness Verification of Infinite-width Neural Network}\label{sec:robustness}
Our theory of equivalence allows us to deliver nontrivial robustness certificates for infinite-width NNs by considering the equivalent KMs. For an input $x_0 \in \mathbb{R}^d$, the objective of robustness verification is to find the largest ball such that no examples within this ball $x \in B(x_0, \delta)$ can change the classification result. Without loss of generality, we assume $g(x_0) > 0$. The robustness verification problem can be formulated as follows, 
\begin{equation}\label{eq:robustness}
    \max \ \delta,  \quad \;
    \textrm{s.t.} \ \ g(x) > 0, \  \forall x \in B(x_0, \delta) . 
\end{equation}
For an infinitely wide two-layer fully connected ReLU NN, $f(x) = \frac{1}{\sqrt{m}}\sum_{j=1}^m v_j \sigma(\frac{1}{\sqrt{d}}w_j^T x)$, where $\sigma(z) = \max(0, z)$ is the ReLU activation, the NTK is
\begin{equation*}
    \Theta(x, x') 
    = \frac{\inner{x, x'}}{d}(\frac{\pi - \arccos(u)}{\pi}) + \frac{\norm{x}\norm{x'}}{2\pi d} \sqrt{1-u^2} ,
\end{equation*}
where $u = \frac{ \inner{x, x'}}{\norm{x}\norm{x'}} \in \left[-1, 1\right]$. See the proof of the following theorem in Appendix~\ref{app:robust}.
% \begin{gather*}
% % \begin{split}
%     \Theta(x, x') 
%     = \frac{\inner{x, x'}}{d}(\frac{\pi - \arccos(u)}{\pi}) + \frac{\norm{x}\norm{x'}}{2\pi d} \sqrt{1-u^2}
%     % = \frac{\norm{x}\norm{x'}}{2\pi d} h(u),
% % \end{split}\\
% %     h(u) = 2u(\pi - \arccos(u)) +  \sqrt{1-u^2}.
% \end{gather*}
% Consider the $\ell_{\infty}$ perturbation, for $x \in B_{\infty}(x_0, \delta) = \{x \in \mathbb{R}^d: \norm{x - x_0}_{\infty} \leq \delta \}$, we can bound $\norm{x}$ in the interval $[\norm{x}^L, \norm{x}^U]$ and $h(u)$ in $[h^L(u), h^U(u)]$. Then we can get the interval of $\Theta(x, x') $, denote as $[\Theta^L(x, x'), \Theta^U(x, x')]$, as follows.
% \begin{gather*}
%     \Theta^L(x, x') = \frac{\norm{x}^L\norm{x'}}{2\pi d} h^L(u)
%     \quad \text{if} \ h^L(u) \geq 0 
%     \quad \text{else} \quad  \frac{\norm{x}^U\norm{x'}}{2\pi d} h^L(u) .\\
%     \Theta^U(x, x') = \frac{\norm{x}^U\norm{x'}}{2\pi d} h^U(u)
%     \quad \text{if} \ h^U(u) \geq 0 
%     \quad \text{else} \quad  \frac{\norm{x}^L\norm{x'}}{2\pi d} h^U(u) .
% \end{gather*}
% See the full derivation in Appendix~\ref{app:robust}. Suppose the $g(x) = \sum_{i = 1}^{n}  \alpha_i \Theta(x, x_i) $, where $\alpha_i$ are known after solving the kernel machine problems (e.g. SVM and KRR). Then we can lower bound $g(x)$ as follows.
% \begin{equation}
%     g(x)
%     \geq \sum_{i = 1, \alpha_i >0}^{n}  \alpha_i \Theta^L(x, x_i) + \sum_{i = 1, \alpha_i<0}^{n}  \alpha_i \Theta^U(x, x_i) . 
% \end{equation}

\begin{theorem}
  Consider the $\ell_{\infty}$ perturbation, for $x \in B_{\infty}(x_0, \delta) = \{x \in \mathbb{R}^d: \norm{x - x_0}_{\infty} \leq \delta \}$, we can bound $\Theta(x, x') $ into some interval $[\Theta^L(x, x'), \Theta^U(x, x')]$. Suppose $g(x) = \sum_{i = 1}^{n}  \alpha_i \Theta(x, x_i) $, where $\alpha_i$ are known after solving the KM problems (e.g. SVM and KRR). Then we can lower bound $g(x)$ as follows,
  \begin{equation*}
    g(x) \geq 
    \sum_{i = 1, \alpha_i >0}^{n}  \alpha_i \Theta^L(x, x_i) + \sum_{i = 1, \alpha_i<0}^{n}  \alpha_i \Theta^U(x, x_i) . 
\end{equation*}
\end{theorem}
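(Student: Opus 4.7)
The plan is to recognize that the claimed inequality is a direct consequence of sign-dependent interval arithmetic applied termwise, once the hypothesis $\Theta^L(x, x_i) \le \Theta(x, x_i) \le \Theta^U(x, x_i)$ is in hand for all $x \in B_\infty(x_0, \delta)$. I would split the sum $g(x) = \sum_{i=1}^n \alpha_i \Theta(x, x_i)$ into two pieces by the sign of $\alpha_i$. For $\alpha_i > 0$, multiplying the lower bound on $\Theta(x, x_i)$ by a positive scalar preserves direction, giving $\alpha_i \Theta(x, x_i) \ge \alpha_i \Theta^L(x, x_i)$. For $\alpha_i < 0$, multiplying by a negative scalar reverses direction, so the relevant inequality becomes $\alpha_i \Theta(x, x_i) \ge \alpha_i \Theta^U(x, x_i)$. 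Adding the two subsums yields the claimed lower bound, and since the argument holds uniformly over the perturbation ball, any $\delta$ for which the right-hand side stays positive certifies robustness via~(\ref{eq:robustness}).

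The substantive work hidden in the hypothesis is the actual construction of $\Theta^L$ and $\Theta^U$ for the two-layer ReLU NTK, which I would handle by propagating $\ell_\infty$-interval bounds through its closed form. Bounds on $\inner{x, x_i}$ are linear in the perturbation, giving $\inner{x, x_i} \in [\inner{x_0, x_i} - \delta \norm{x_i}_1, \inner{x_0, x_i} + \delta \norm{x_i}_1]$. Bounds on $\norm{x}_2^2 = \sum_j x_j^2$ follow coordinatewise. From these I would bound $u = \inner{x, x_i}/(\norm{x}\norm{x_i})$ by treating numerator and denominator separately with a sign-aware case analysis. Since $\arccos$ is monotonically decreasing, bounds on $u$ propagate directly to bounds on $(\pi - \arccos(u))/\pi$. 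The term $\sqrt{1-u^2}$ is unimodal in $u$, so its upper bound is attained at the value of $u$ closest to $0$ in the interval and its lower bound at the endpoint of largest magnitude.

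The main obstacle is tightness rather than correctness: decoupling numerator and denominator inside $u$, together with the non-monotonicity of $\sqrt{1-u^2}$, introduces interval looseness that any bound-propagation scheme accumulates. The redeeming feature, and the reason this route is attractive for infinite-width NNs, is that the looseness is controlled entirely by the structure of the NTK and is \emph{independent of $m$}, whereas direct IBP or CROWN-style verification through the NN suffers the $O(1/\sqrt{m})$ degradation discussed earlier in Section~\ref{sec:robustness}. The certified radius is then obtained by a one-dimensional search: report the largest $\delta$ for which the combined lower bound remains positive.
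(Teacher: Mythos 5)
Your proposal is correct and takes essentially the same approach as the paper: the displayed inequality is obtained exactly as you describe, by splitting the sum according to the sign of $\alpha_i$ and applying $\Theta^L \le \Theta \le \Theta^U$ termwise, and the interval $[\Theta^L, \Theta^U]$ is likewise constructed by propagating $\ell_\infty$-ball bounds through the closed-form NTK (bounding $\norm{x}$, $\inner{x, x'}$, and then $u$ with a sign-aware case analysis). The only minor differences are that the paper bounds the perturbation of $\inner{x,x'}$ via Cauchy--Schwarz as $\sqrt{d}\,\delta\norm{x'}_2$ where you use the tighter H\"older bound $\delta\norm{x'}_1$, and it handles the combined function $h(u) = 2u(\pi - \arccos(u)) + \sqrt{1-u^2}$ as a single bow-shaped function rather than bounding the two NTK terms separately.
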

Using a simple binary search and above theorem, we can find a certified lower bound for (\ref{eq:robustness}). Because of the equivalence between the infinite-width NN and KM, the certified lower bound we get for the KM is equivalently a certified lower bound for the corresponding infinite-width NN.

\begin{figure}
  \centering
  \includegraphics[width=\textwidth]{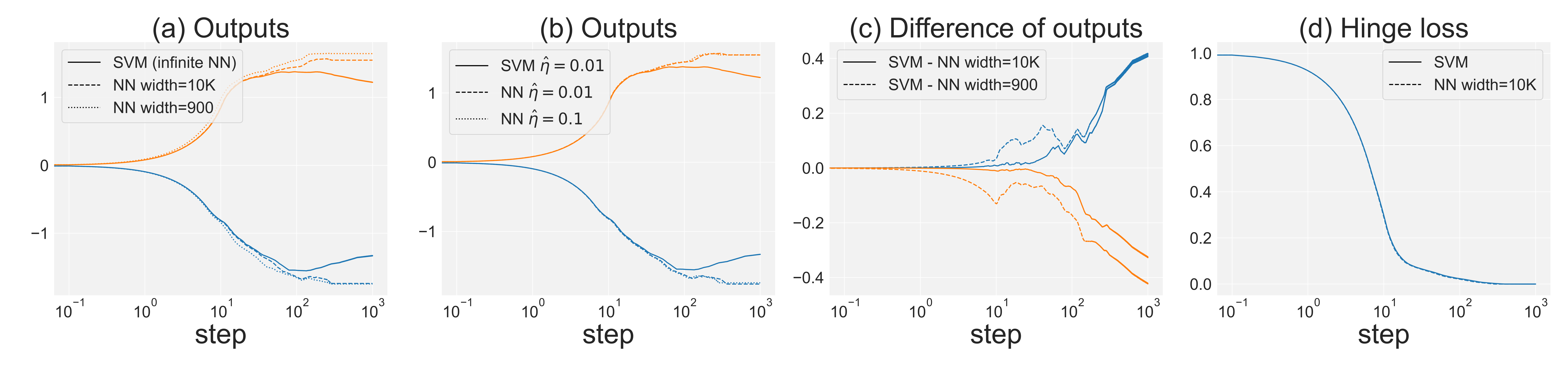}
  \caption{Training dynamics of neural network and SVM behave similarly. (a)(b) show dynamics of outputs for randomly selected two samples. (c) shows the difference between the outputs of SVM and NN. The dynamics of SVM agrees better with wider NN. (d) shows the dynamics of hinge loss for SVM and NN. Without specification, the width of NN is 10K and $\hat{\eta}=0.1$.}
  \label{fig:gd}
\end{figure}

\section{Experiments}\label{sec:experiments}
% \subsection{Verification of the Equivalence}
\textbf{(\uppercase\expandafter{\romannumeral1}) Verification of the equivalence.} The first experiment verifies the equivalence between soft margin SVM with NTK trained by subgradient descent and the infinite-width NN trained by soft margin loss. We train the SVM and 3-layer fully connected ReLU NN for a binary MNIST \citep{lecun1998gradient} classification ($0$ and $1$) with learning rate $\hat{\eta} = 0.1$ and $\hat{\eta} = 0.01$ with full batch subgradient descent on $n=128$ samples, where $\hat{\eta}$ is the learning rate used in experiments. Figure~\ref{fig:gd} shows the dynamics of the outputs and loss for NN and SVM. Since the regularization terms in the loss of NN and SVM are different, we just plot the hinge loss. It can be seen that the dynamics of NN and SVM agree very well. We also do a stochastic subgradient descent case for binary classification on full MNIST $0$ and $1$ data ($12665$ training and $2115$ test) with learning rate $\hat{\eta} = 1$ and batch size $64$, shown in Figure~\ref{fig:SGD}. For more details, please see Appendix~\ref{app:experiment}. 

% We initialize 3-layer fully connected ReLU neural networks of width $10000$ and $900$, with NTK parameterization and make sure $f_0(x)=0$ by subtracting the initial values from NN's outputs. We initialize the parameter of SVM with $\beta_0 = 0$, and this automatically makes sure $g_0(x)=0$. SVM is trained by directly update the weights of kernel values \citep{shalev2011pegasos} and more details can be found in Appendix~\ref{app:experiment}. We set the regularization parameter as $\lambda=0.001$ and take the average of the hinge loss instead of sum.\footnote{This is equivalent to use $\lambda=0.001 \times n$ in Eq. (\ref{eq:loss_svm2}).} 

% \subsection{Robustness of Over-parameterized Neural Network}
\textbf{(\uppercase\expandafter{\romannumeral2}) Robustness of over-parameterized neural network.} Table~\ref{table:Robustness} shows the robustness certificates of two-layer overparameterized NNs with increasing width and SVM (which is equivalent to infinite-width two-layer ReLU NN) on binary classification of MNIST (0 and 1). We use the NN robustness verification algorithm (IBP) \citep{gowal2018effectiveness} to compute the robustness certificates for two-layer overparameterized NNs. The robustness certificate for SVM is computed using our method in Section~\ref{sec:robustness}. As demonstrated in Table~\ref{table:Robustness}, the certificate of NN almost decrease at a rate of $O(1 / \sqrt{m})$ and will decrease to 0 as $m \rightarrow \infty$, where $m$ is the width of the hidden layer. We show that this is due to the bound propagation in Appendix~\ref{app:ibp}. Unfortunately, the decrease rate will be faster if the NN is deeper. The same problem will happen for LeCun initialization as well, which is used in PyTorch for fully connected layers by default. Notably, however, thanks to our theory, we could compute \textit{nontrivial} robustness certificate for an infinite-width NN through the equivalent SVM as demonstrated. 

% \yilan{Our analysis is for NTK initialization($\mathcal{N}(0, 1)$ with scaling factor $\frac{1}{\sqrt{m}}$), but this decrease will also happen in LeCun initialization ($\mathcal{N}(0, \frac{1}{\sqrt{m}})$ or uniformly $[- \frac{1}{\sqrt{m}}, \frac{1}{\sqrt{m}}]$(this used in pytorch by default for FC) but no scaling).}

\begin{table}[h!]
  \caption{Robustness certified lower bounds of two-layer ReLU NN and SVM (infinite-width two-layer ReLU NN) tested on binary classification of MNIST (0 and 1). 100 test: randomly selected 100 test samples. Full test: full test data. Test only on data that classified correctly. std is computed over data samples. All models have test accuracy 99.95\%. All values are mean of 5 experiments.}
  \label{table:Robustness}
%   \scriptsize 
  \small 
  \centering
  \begin{tabular}{lllll}
    \toprule
    &&\multicolumn{2}{c}{Robustness certificate $\delta$ (mean $\pm$ std) $\times10^{-3}$}  \\
    \cmidrule(r){3-4}
    Model    &Width      & 100 test  &Full test     \\
    \midrule
    
    NN     & $10^3$      
    &7.4485 $\pm$ 2.5667  
    &7.2708 $\pm$ 2.1427    \\
    
    NN     & $10^4$      
    &2.9861 $\pm$ 1.0730   
    &2.9367 $\pm$ 0.89807   \\
    
    NN     & $10^5$     
    &0.99098 $\pm$ 0.35775   
    &0.97410 $\pm$ 0.29997   \\
    
    NN     & $10^6$     
    &0.31539 $\pm$ 0.11380   
    &0.30997 $\pm$ 0.095467   \\
    \midrule
    
    SVM     &$\infty$     
    &8.0541 $\pm$ 2.5827   
    &7.9733 $\pm$ 2.1396   \\
    
    \bottomrule
  \end{tabular}
\end{table}

% \subsection{Comparison with Kernel Regression}
\textbf{(\uppercase\expandafter{\romannumeral3}) Comparison with kernel regression.} Table~\ref{table:Robustness_kernel_machines} compares our $\ell_2$ regularized models (KRR and SVM with NTK) with the previous kernel regression model ($\lambda=0$ for KRR). All the robustness certified lower bounds are computed using our method in Section~\ref{sec:robustness}. While the accuracies of different models are similar, as the regularization increases, the robustness of KRR increases. The robustness of SVM outperforms the KRR with same regularization magnitude a lot. Our theory enables us to train an equivalent infinite-width NN through SVM and KRR, which is intrinsically more robust than the previous kernel regression model.

\begin{table}[h!]
  \caption{Robustness of equivalent infinite-width NN models with different loss functions (see Table~\ref{table:loss_kernel}) on binary classification of MNIST (0 and 1). $\lambda$ is the parameter in Eq. (\ref{eq:loss_svm2}). }  
  \label{table:Robustness_kernel_machines}
  \small
  \centering
  \begin{tabular}{llllll}
    \toprule
    &Model   &$\lambda$  &Test accuracy    &Robustness certificate $\delta$     &Robustness improvement  \\
    \midrule
    $\lambda=0$ (\citep{jacot2018neural, arora2019exact})  &KRR     &0      &99.95\%     &3.30202$\times10^{-5}$         & -\\
    % \citep{jacot2018neural} &&&&& \\
    \midrule
    \multirow{8}{*}{$\lambda> 0$ (ours)}
    &KRR     &0.001  &99.95\%     &3.756122$\times10^{-5}$   &1.14X\\
    &KRR     &0.01   &99.95\%     &6.505500$\times10^{-5}$   &1.97X\\
    &KRR     &0.1    &99.95\%     &2.229960$\times10^{-4}$   &6.75X\\
    &KRR     &1      &99.95\%     &0.001005                  &30.43X\\
    &KRR     &10     &99.91\%     &0.005181                  &156.90X\\
    &KRR     &100    &99.86\%     &0.020456                  &619.50X\\
    &KRR     &1000   &99.76\%     &0.026088                  &790.06X\\
    &SVM     &0.064  &99.95\%     &0.008054                  &243.91X\\
    \bottomrule
  \end{tabular}
\end{table}

\section{Conclusion and Future Works}\label{sec:conculsion}
In this paper, we establish the equivalence between SVM with NTK and the NN trained by soft margin loss with subgradient descent in the infinite width limit, and we show that they have the same dynamics and solution. We also extend our analysis to general $\ell_2$ regularized loss functions and show every neural network trained by such loss functions is approximately a KM. Finally, we demonstrate our theory is useful to compute \textit{non-vacuous} generalization bound for NN, \textit{non-trivial} robustness certificate for infinite-width NN while existing neural network robustness verification algorithm cannot handle, and with our theory, the resulting infinite-width NN from our $\ell_2$ regularized models is intrinsically more robust than that from the previous NTK kernel regression. For future research, since the equivalence between NN and SVM (and other $\ell_2$ regularized KMs) with NTK has been established, it would be very interesting to understand the generalization and robustness of NN from the perspective of these KMs. Our main results are currently still limited in the linear regime. It would be interesting to extend the results to the mean field setting or consider its connection with the implicit bias of NN.

\section{Acknowledgement}
We thank the anonymous reviewers for useful suggestions to improve the paper. We thank Libin Zhu for helpful discussions. We thank San Diego Supercomputer Center and MIT-IBM Watson AI Lab for computing resources. This work used the Extreme Science and Engineering Discovery Environment (XSEDE) \cite{xsede}, which is supported by National Science Foundation grant number ACI-1548562. This work used the Extreme Science and Engineering Discovery Environment (XSEDE) \textit{Expanse} at San Diego Supercomputer Center through allocation TG-ASC150024 and ddp390. T.-W. Weng is partially supported by National Science Foundation under Grant No. 2107189.

% From the perspective of empirical risk minimization with regularization, SVM is closely related to other fundamental classification algorithms such as regularized least-squares and logistic regression. The difference between the three lies in the choice of loss function. It will be interested to analyze the connection with other algorithms and loss functions. 

% \yilan{convergence by PL condition?}\\
% \yilan{relate to margin maximization and implicit bias? In many cases, the neural network trained by gradient descent implicitly learns the max-margin classifer.}\\
% \yilan{generalize to other loss functions. relation with other loss function like cross entropy?}\\
% \yilan{generalization. margin bounds.}
% \yilan{generalize to general setting? general loss function with regulization.}

\newpage
\bibliographystyle{abbrvnat}
\bibliography{references}

\newpage
\appendix
\appendixpage

\renewcommand\thefigure{\thesection.\arabic{figure}}  

\renewcommand{\abs}[1]{\left\lvert#1\right\rvert }
\renewcommand{\norm}[1]{\left\lVert#1\right\rVert}
\renewcommand{\inner}[1]{\left\langle#1\right\rangle}

\section{Experiment Details}\label{app:experiment}
\setcounter{figure}{0}

\begin{figure}[h]
  \centering
  \includegraphics[width=\textwidth]{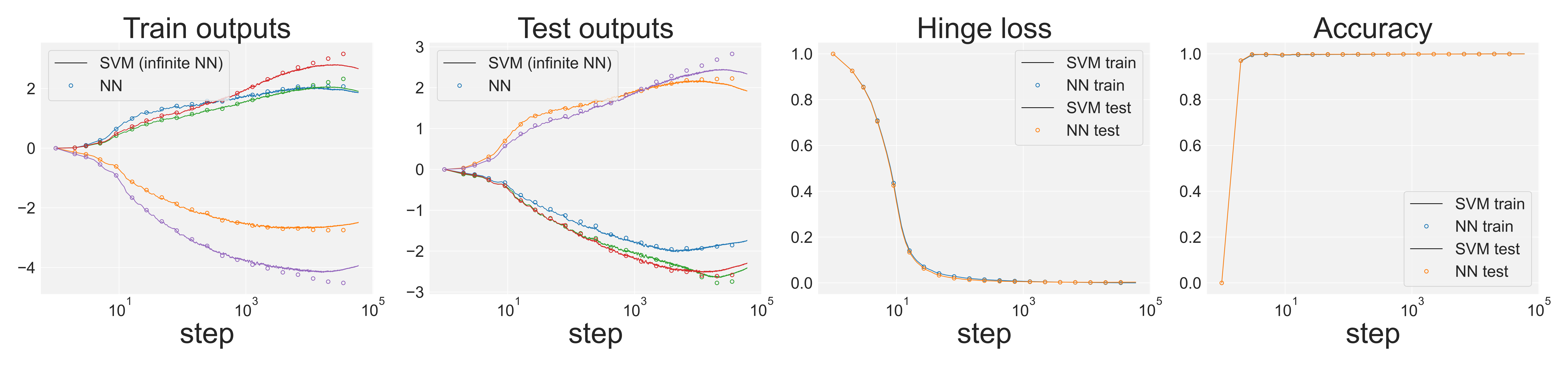}
  \caption{SVM and NN trained by stochastic subgradient descent for binary MNIST classification task on full $0$ and $1$ data with learning rate $\hat{\eta} = 1$ and batch size $64$. The width of NN is $10$K.}
  \label{fig:SGD}
\end{figure}

\subsection{SVM Training}
We use the following loss to train the SVM,
\begin{equation}
    L(\beta) = \frac{\lambda}{2}\norm{ \beta }^2 +  \frac{1}{n} \sum_{i = 1}^{n} \max(0, 1 -  y_i \inner{ \beta, \Phi(x_i) } ) .
\end{equation}
Let $\hat{\eta}$ be the learning rate for this loss in experiments. Then the dynamics of subgradient descent is 
\begin{equation}\label{eq:dy_dis}
    g_{t+1}(x) = (1 - \hat{\eta} \lambda) g_t(x) + \frac{\hat{\eta}}{n} \sum_{i=1}^{n}  \mathbbm{1}(y_i g_t(x_i) < 1)  y_i  K(x, x_i)  .
\end{equation}
Denote $Q(t) = \frac{\hat{\eta}}{n} \sum_{i=1}^{n}  \mathbbm{1}(y_i g_t(x_i) < 1)  y_i  K(x, x_i) $, which is a linear combination of $K(x, x_i)$ and changes over time. The model output $g_t(x)$ at some time $T$ is
\begin{equation}
    g_T(x) 
    = (1 - \hat{\eta} \lambda)^{T}  \biggl( g_0(x)  + \frac{\hat{\eta}}{n} \sum_{t=0}^{T-1}  (1 - \hat{\eta} \lambda)^{-t-1} Q(t) \biggr) .
\end{equation}
If we set $g_0(x) = 0$, we have
\begin{equation}
\begin{split}
    g_T(x)
    &= \sum_{t=0}^{T-1}  (1 - \hat{\eta} \lambda)^{T-1-t} Q(t) .
    % &=  \sum_{t=0}^{T-1}  (1 - \hat{\eta})^{T-1-t}  \hat{\eta} C \sum_{i=1}^{n}  \mathbbm{1}(y_i g_t(x_i) < 1)  y_i  K(x, x_i)
\end{split}
\end{equation}
We see that $g_T(x)$ is always a linear combination of kernel values $K(x, x_i)$ for $i=1, \dots, n$. Since $K(x, x_i)$ are fixed, we just need to store and update the weights of the kernel values. Let $\alpha_t \in \mathbb{R}^n$ be the weights at time $t$, that is 
\begin{equation}
    g_t(x) = \sum_{i=1}^{n} \alpha_{t,i} K(x, x_i) .
\end{equation}
Then according to Eq. (\ref{eq:dy_dis}), we update $\alpha$ at each subgradient descent step as follows.
\begin{equation}
    \alpha_{t+1, i} = (1 - \hat{\eta} \lambda) \alpha_{t, i} + \frac{\hat{\eta}}{n}  \mathbbm{1}(y_i g_t(x_i) < 1)  y_i, \quad \forall i \in \{1, \dots, n\} .
\end{equation}
For the SGD case, we sample $S_t \subseteq \{1, \dots, n\}$ at step $t$ and update the weights of this subset while keeping the other weights unchanged. 
\begin{gather*}
    \alpha_{t+1, i} = (1 - \hat{\eta} \lambda) \alpha_{t, i} + \frac{\hat{\eta}}{\abs{S_t}}  \mathbbm{1}(y_i g_t(x_i) < 1)  y_i, \quad \forall i \in S_t ,\\
    \alpha_{t+1, i} = \alpha_{t, i} , \quad \forall i \notin S_t .
\end{gather*}

The kernelized implementation of Pegasos \citep{shalev2011pegasos} set $\hat{\eta}_t = \frac{1}{\lambda t}$ for proving the convergence of the algorithm. In our experiments, we use constant $\hat{\eta}$. 

% \lam{Note that we need to write $\eta_t = \frac{1}{\lambda t}$. But in this case, the learning rate is diminishing, not a constant. Do we require the learning rate to be a constant? Do we require the learning rate is small for all $t$? Please check to see whether we need to change all $\eta$ to be $\eta_t$. For the convergence theory, in order to obtain $\epsilon$-accurate solution, the constant learning rate for SGD can be chosen as $\eta = \mathcal{O}(\epsilon)$.}
% \yilan{I think our analysis don't require constant $\eta$. But $\eta_t = \frac{1}{\lambda t}$ is used in \citep{shalev2011pegasos} to prove the convergence of SVM. In the papers of NTK, usually it requires $\eta$ to be smaller than a magnitude that depends on the width of NN, so that tangent kernel keeps almost unchanged. But I cannot give a magnitude here because we didn't do a convergence analysis for a finite but wide NN, which may be hard and needs some technical proofs. I think this may also be our main drawback. We can consider the quantitative convergence analysis later.}

\subsection{More Details}
\textbf{(\uppercase\expandafter{\romannumeral1}) Verification of the equivalence.} The first experiment illustrates the equivalence between soft margin SVM with NTK trained by subgradient descent and NN trained by soft margin loss. We initialize 3-layer fully connected ReLU neural networks of width $10000$ and $900$, with NTK parameterization and make sure $f_0(x)=0$ by subtracting the initial values from NN's outputs. We initialize the parameter of SVM with $\beta_0 = 0$, and this automatically makes sure $g_0(x)=0$. SVM is trained by directly update the weights of kernel values \citep{shalev2011pegasos} and more details can be found in Appendix~\ref{app:experiment}. We set the regularization parameter as $\lambda=0.001$ and take the average of the hinge loss instead of sum.\footnote{This is equivalent to use $\lambda=0.001 \times n$ in Eq. (\ref{eq:loss_svm2}).} We train the NN and SVM for a binary MNIST \citep{lecun1998gradient} classification task ($0$ and $1$) with learning rate $\hat{\eta} = 0.1$ and $\hat{\eta} = 0.01$ with full batch subgradient descent on $n=128$ samples, where $\hat{\eta}$ is the learning rate used in experiments (see Appendix~\ref{app:experiment}). Figure~\ref{fig:gd} shows the dynamics of the outputs and loss for NN and SVM. Since the regularization term in the loss of NN and SVM are different, we just plot the hinge loss. We see the dynamics of NN and SVM agree well. We also do a stochastic subgradient descent case for binary MNIST classification task on full $0$ and $1$ data ($12665$ train data and $2115$ test data) with learning rate $\hat{\eta} = 1$ and batch size $64$, shown in Figure~\ref{fig:SGD}.

Experiments are implemented with PyTorch \citep{paszke2019pytorch} and the NTK of infinite-width NN is computed using Neural Tangents \citep{neuraltangents2020}. We do our experiments on 16G V100 GPU.

\section{Computing Non-vacuous Generalization Bounds via Corresponding Kernel Machines}\label{app:generalization}
\begin{figure}[h]
  \centering
  \includegraphics[width=\textwidth]{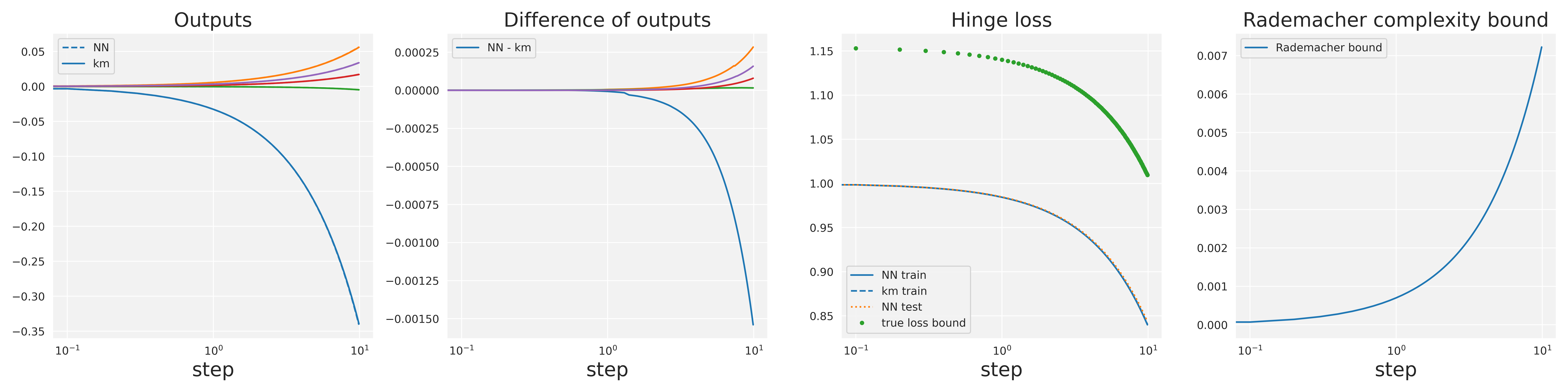}
  \caption{Computing non-vacuous generalization bounds via corresponding kernel machines. Two-layer NN with 100 hidden nodes trained by full-batch subgradient descent for binary MNIST classification task on $0$ and $1$ data of size 1000 with learning rate $\hat{\eta} = 0.1$ and hinge loss. We set $f_0(x)=0$. The kernel machine (KM) approximates NN very well. And we get a tight upper bound of the true loss by computing its Rademacher complexity. The confidence parameter is set as $1-\delta = 0.99$. Since the kernel is only valid when the loss gradient of the output is a constant, we only did the initial training stage of the hinge loss where the kernel is valid.}
  \label{fig:generalization}
\end{figure}
Using Theorem~\ref{theorem:km}, we can numerically compute the kernel machine that the NN is equivalent to, i.e. we can compute the kernel matrix and the weights at any time during the training. Then one can apply a generalization bound of kernel machines to give a generalization bound for this kernel machine (equivalently for this NN). Let $\mathcal{H}$ be the reproducing kernel Hilbert space (RKHS) corresponding to the kernel $K(\cdot, \cdot)$. The RKHS norm of a function $f(x) = \sum_{i=1}^n a_i K(x, x_i)$ is \footnote{Assume $f_0(x) = 0.$}
\begin{equation*}
    \norm{f}_{\mathcal{H}} = \norm{\sum_{i=1}^n a_i \Phi(x_i)} = \sqrt{\sum_{i=1}^n \sum_{j=1}^n a_i a_j K(x_i, x_j)}
\end{equation*}

\begin{lemma}[Lemma 22 in \cite{bartlett2002rademacher}]
  For a function class $\mathcal{F}_B = \{f(x) = \sum_{i=1}^n a_i K(x, x_i): \norm{f}_{\mathcal{H}} \leq B \} \subseteq \{x \rightarrow \inner{\beta, \Phi(x)}: \norm{\beta} \leq B \} $, its empirical Rademacher complexity can be bounded as
\begin{equation*}
    \hat{\mathcal{R}}_S(\mathcal{F}_B) 
    = \frac{1}{n} \expect_{\sigma_i \sim \{\pm 1\}^n} \bracket{\sup_{f \in \mathcal{F}_B} \sum_{i=1}^n \sigma_i f(x_i)}
    \leq \frac{B}{n} \sqrt{\sum_{i=1}^n K(x_i, x_i)}
\end{equation*}
\end{lemma}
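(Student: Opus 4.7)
The plan is to use the reproducing property of the RKHS together with Cauchy--Schwarz and Jensen's inequality. Since every $f \in \mathcal{F}_B$ can be written with feature representation $f(\cdot) = \langle \beta, \Phi(\cdot) \rangle_{\mathcal{H}}$ for some $\beta$ with $\norm{\beta}_{\mathcal{H}} = \norm{f}_{\mathcal{H}} \le B$, I would first rewrite the supremum inside the Rademacher complexity as an inner product in $\mathcal{H}$:
$$\sum_{i=1}^n \sigma_i f(x_i) = \sum_{i=1}^n \sigma_i \inner{\beta, \Phi(x_i)}_{\mathcal{H}} = \inner{\beta, \sum_{i=1}^n \sigma_i \Phi(x_i)}_{\mathcal{H}}.$$

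Next, by Cauchy--Schwarz in $\mathcal{H}$ and the constraint $\norm{\beta}_{\mathcal{H}} \le B$, the supremum over $f \in \mathcal{F}_B$ is bounded by $B \cdot \big\| \sum_i \sigma_i \Phi(x_i) \big\|_{\mathcal{H}}$. Plugging this into the definition of empirical Rademacher complexity gives
$$\hat{\mathcal{R}}_S(\mathcal{F}_B) \le \frac{B}{n} \, \mathbb{E}_\sigma \Big\| \sum_{i=1}^n \sigma_i \Phi(x_i) \Big\|_{\mathcal{H}}.$$

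The final step applies Jensen's inequality to pull the expectation inside the square root:
$$\mathbb{E}_\sigma \Big\| \sum_i \sigma_i \Phi(x_i) \Big\|_{\mathcal{H}} \le \sqrt{\mathbb{E}_\sigma \Big\| \sum_i \sigma_i \Phi(x_i) \Big\|_{\mathcal{H}}^2} = \sqrt{\sum_{i,j} \mathbb{E}[\sigma_i \sigma_j] \, K(x_i, x_j)}.$$
Since the Rademacher variables are independent with zero mean and $\sigma_i^2 = 1$, the cross terms vanish and only the diagonal $\sum_{i=1}^n K(x_i, x_i)$ survives, yielding the claimed bound.

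There is no real obstacle here; the argument is entirely routine once one recognizes that the $\mathcal{F}_B$ constraint is an $\mathcal{H}$-norm ball and that the reproducing property converts pointwise evaluation into an inner product. The only place requiring a little care is verifying that the containment $\mathcal{F}_B \subseteq \{x \mapsto \inner{\beta, \Phi(x)} : \norm{\beta} \le B\}$ stated in the lemma is the one being used, so that Cauchy--Schwarz is applied on the larger class (the bound then transfers to the subclass for free).
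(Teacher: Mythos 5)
Your proof is correct and is exactly the standard argument (reproducing property, Cauchy--Schwarz on the $\mathcal{H}$-norm ball, Jensen, then $\mathbb{E}[\sigma_i\sigma_j]=\delta_{ij}$ to kill the cross terms); the paper itself does not reprove this lemma but simply cites it from Bartlett and Mendelson, whose proof proceeds the same way.
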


Assume the data is sampled i.i.d. from some distribution $D$ and the population loss is $L_D(f) = \expect_{(x, y) \sim D}{\bracket{l(f(x), y)}}$. The empirical loss is $L_S(f) = \frac{1}{n} \sum_{i = 1}^{n} l(f(x_i), y_i)$. Combing with a standard generalization bound using Rademacher complexity blow \citep{mohri2018foundations}, we can get a bound of the population loss $L_D(f)$ for the kernel machine (equivalently for this NN).
\begin{lemma}
   Suppose the loss $\ell(\cdot, \cdot)$ is bounded in $\bracket{0, c}$, and is $\rho$-Lipschitz in the first argument. Then with probability at least $1-\delta$ over the sample $S$ of size $n$,
\begin{equation*}   
    \sup_{f \in \mathcal{F}}\{ L_D(f) - L_S(f) \} \leq 2 \rho \hat{\mathcal{R}}_S(\mathcal{F}) + 3c \sqrt{\frac{\log(2/\delta)}{2n}}
\end{equation*}

\end{lemma}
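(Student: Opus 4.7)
The plan is to follow the textbook Rademacher-complexity argument (e.g., Mohri et al., \textit{Foundations of Machine Learning}), combining McDiarmid concentration with symmetrization and Talagrand's contraction lemma. Define the centered supremum
\[
    \Phi(S) = \sup_{f \in \mathcal{F}} \bigl\{ L_D(f) - L_S(f) \bigr\}.
\]
The first step is a bounded-differences check: replacing one sample $(x_i,y_i)$ by $(x_i',y_i')$ changes $L_S(f)$ by at most $c/n$ (since $\ell \in [0,c]$), hence $\Phi(S)$ changes by at most $c/n$. McDiarmid's inequality then gives, with probability at least $1-\delta/2$,
\[
    \Phi(S) \;\le\; \mathbb{E}_S[\Phi(S)] \;+\; c\sqrt{\frac{\log(2/\delta)}{2n}}.
\]

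Next I would bound the expected supremum by a population Rademacher term via the standard ghost-sample symmetrization: introduce an i.i.d.\ copy $S' = \{(x_i',y_i')\}$, write $L_D(f) = \mathbb{E}_{S'}[L_{S'}(f)]$, pull the supremum outside the expectation, and insert Rademacher signs $\sigma_i \in \{\pm 1\}$ that swap $(x_i,y_i) \leftrightarrow (x_i',y_i')$. This yields
\[
    \mathbb{E}_S[\Phi(S)] \;\le\; 2\,\mathcal{R}_n(\ell \circ \mathcal{F}),
\]
where $\mathcal{R}_n(\mathcal{G}) = \mathbb{E}_{S,\sigma}[\sup_{g\in\mathcal{G}} \tfrac{1}{n}\sum_i \sigma_i g(x_i,y_i)]$ is the population Rademacher complexity of the loss class.

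Now I would apply Talagrand's contraction lemma: since for each fixed $y_i$, the map $u \mapsto \ell(u,y_i)$ is $\rho$-Lipschitz in its first argument, the Rademacher complexity of the composed class satisfies $\mathcal{R}_n(\ell \circ \mathcal{F}) \le \rho\, \mathcal{R}_n(\mathcal{F})$. To convert the population Rademacher complexity $\mathcal{R}_n(\mathcal{F})$ into the empirical one $\hat{\mathcal{R}}_S(\mathcal{F})$ appearing in the statement, I would apply McDiarmid a second time to $S \mapsto \hat{\mathcal{R}}_S(\mathcal{F})$, which also has $c/n$ bounded differences (using that $|f| \le c/\rho$ is not needed here because the contraction is applied to $\ell\circ\mathcal{F}$, whose range is $[0,c]$; more cleanly, apply McDiarmid directly to $\hat{\mathcal{R}}_S(\ell\circ\mathcal{F})$ and then contract). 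This gives, with probability at least $1-\delta/2$,
\[
    \mathcal{R}_n(\ell\circ\mathcal{F}) \;\le\; \hat{\mathcal{R}}_S(\ell\circ\mathcal{F}) + c\sqrt{\frac{\log(2/\delta)}{2n}}.
\]

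Finally, a union bound over the two McDiarmid events, together with $\hat{\mathcal{R}}_S(\ell\circ\mathcal{F}) \le \rho\,\hat{\mathcal{R}}_S(\mathcal{F})$ from contraction, combines the three estimates into
\[
    \Phi(S) \;\le\; 2\rho\,\hat{\mathcal{R}}_S(\mathcal{F}) + 2c\sqrt{\frac{\log(2/\delta)}{2n}} + c\sqrt{\frac{\log(2/\delta)}{2n}},
\]
which is exactly the stated bound after collecting the concentration terms into $3c\sqrt{\log(2/\delta)/(2n)}$. The only nontrivial ingredient is Talagrand's contraction lemma; the rest is routine. The main conceptual care point is applying McDiarmid twice with $\delta/2$ each so that the union bound produces $\log(2/\delta)$ (rather than $\log(1/\delta)$) inside the square root, matching the constant in the statement; everything else is bookkeeping.
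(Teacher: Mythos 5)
Your proof is correct and is precisely the standard argument from Mohri et al.\ that the paper cites for this lemma; the paper itself states the bound without proof, deferring to that reference. Your bookkeeping --- two McDiarmid applications at level $\delta/2$ each, the symmetrization factor of $2$, and applying the contraction only after concentrating $\hat{\mathcal{R}}_S(\ell\circ\mathcal{F})$ (whose $c/n$ bounded differences are what justify the second McDiarmid step) --- correctly reproduces the $3c\sqrt{\log(2/\delta)/(2n)}$ constant.
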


Most of the existing generalization bounds of NN \citep{bartlett2019nearly, long2019generalization} are vacacous since they have a dependence on the number of parameters. Compared to those, the bound for kernel machines does not have a dependence on the number of NN’s parameters, making it non-vacuous and promising.

\section{Dynamics of Support Vector Machine}\label{app:Dy_svm}
In this section, we derive the continuous and discrete dynamics of soft margin SVM trained by subgradient with the following loss function
\begin{equation}
    L(\beta) = \frac{1}{2}\norm{ \beta }^2 + C \sum_{i = 1}^{n} \max(0, 1 -  y_i \inner{ \beta, \Phi(x_i) } ),
\end{equation}
and the subgradient
\begin{equation}
  \nabla_{\beta} L(\beta_t) = \beta_t - C \sum_{i=1}^{n}  \mathbbm{1}(y_i g_t(x_i) < 1)  y_i  \Phi(x_i) .
\end{equation}

\begin{lemma}\label{lemma:PL}
$L(\beta)$ satisfies the Polyak- Lojasiewicz (PL) inequality,
\begin{equation}
    L(\beta_t) - L(\beta^*) \leq \frac{1}{2} \norm{\nabla_{\beta} L(\beta_t)}^2  \quad \forall \ \beta_t .
\end{equation}
where $\beta^* = \argmin_{\beta} L(\beta)$.
\end{lemma}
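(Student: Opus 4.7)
The plan is to derive the PL inequality as a direct consequence of strong convexity of $L$ with strong convexity constant exactly $\mu = 1$, which matches the factor $\tfrac{1}{2}$ appearing on the right-hand side of the claimed inequality. The two ingredients I need are (i) a clean verification of $1$-strong convexity for the nonsmooth function $L$ and (ii) the standard ``minimize the quadratic lower bound'' manipulation, adapted to subgradients.

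First, I would establish that $L$ is $1$-strongly convex. Write $L(\beta) = \tfrac{1}{2}\|\beta\|^2 + C\,H(\beta)$ where $H(\beta) = \sum_{i=1}^n \max(0, 1 - y_i \langle \beta, \Phi(x_i)\rangle)$. Each hinge term is convex as the pointwise max of two affine functions, so $H$ is convex and $C\,H$ is convex since $C > 0$. The quadratic $\tfrac{1}{2}\|\beta\|^2$ is $1$-strongly convex. Adding a convex function to a $1$-strongly convex one preserves $1$-strong convexity. In subgradient form this gives, for every $\beta, \beta' \in \mathbb{R}^d$ and every $g \in \partial L(\beta)$,
\begin{equation*}
L(\beta') \;\ge\; L(\beta) + \langle g, \beta' - \beta\rangle + \tfrac{1}{2}\|\beta' - \beta\|^2 .
\end{equation*}

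Next, I fix $\beta = \beta_t$, take $g = \nabla_\beta L(\beta_t)$ (the specific subgradient used in the paper), and view the right-hand side as a quadratic in $\beta'$. It is minimized at $\beta' = \beta_t - g$, and the minimum value equals $L(\beta_t) - \tfrac{1}{2}\|g\|^2$. Hence for every $\beta'$,
\begin{equation*}
L(\beta') \;\ge\; L(\beta_t) - \tfrac{1}{2}\|\nabla_\beta L(\beta_t)\|^2 .
\end{equation*}
Specializing $\beta' = \beta^*$ and rearranging yields exactly the claimed PL inequality.

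I do not anticipate any serious obstacle: the argument is textbook once strong convexity is in hand. The only subtlety worth flagging in writing is that $L$ is nonsmooth (the hinge pieces), so one must use the subgradient form of strong convexity rather than the gradient form, and one must use the same subgradient $\nabla_\beta L(\beta_t)$ on both sides (the subgradient that the paper's dynamics actually employ). No regularity of $\Phi$ is needed, and the constant $C \ge 0$ plays no role beyond keeping $C\,H$ convex.
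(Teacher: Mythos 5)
Your proposal is correct and follows essentially the same route as the paper's proof: invoke $1$-strong convexity of $L$ in subgradient form, minimize the resulting quadratic lower bound over $\beta'$ to obtain $L(\beta') \geq L(\beta_t) - \tfrac{1}{2}\|\nabla_\beta L(\beta_t)\|^2$, and specialize to $\beta' = \beta^*$. Your explicit verification of strong convexity (quadratic plus convex hinge sum) is a small addition that the paper simply asserts.
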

\begin{proof}
Since $L(\beta)$ is 1-strongly convex, by the definition of strong convexity and subgradient
\begin{equation}
    L(\beta) 
    \geq L(\beta_t) +  \inner{\nabla_{\beta} L(\beta_t), \beta - \beta_t} + \frac{1}{2} \norm{\beta - \beta_t}^2
\end{equation}
The right hand side is a convex quadratic function of $\beta$ (for fixed $\beta_t$). Setting the gradient with respect to $\beta$ equal to 0, we find that $\tilde{\beta} = \beta_t - \nabla_{\beta} L(\beta_t) $ minimize right hand side. Therefore we have
\begin{equation}
\begin{split}
    L(\beta) 
    &\geq L(\beta_t) +  \inner{\nabla_{\beta} L(\beta_t), \beta - \beta_t} + \frac{1}{2} \norm{\beta - \beta_t}^2 \\
    &\geq L(\beta_t) +  \inner{\nabla_{\beta} L(\beta_t), \tilde{\beta}  - \beta_t}  + \frac{1}{2} \norm{\tilde{\beta} - \beta_t}^2 \\
    &= L(\beta_t) - \frac{1}{2} \norm{\nabla_{\beta} L(\beta_t)}^2 .
\end{split}
\end{equation}
Since this holds for any $\beta$, we have
\begin{equation}
    L(\beta^*) \geq L(\beta_t) - \frac{1}{2} \norm{\nabla_{\beta} L(\beta_t)}^2 .
\end{equation}

\end{proof}

\subsection{Continuous Dynamics of SVM}
Here we give the detailed derivation of the dynamics of soft margin SVM trained by subgradient. In the learning rate $\eta \rightarrow 0$ limit, the subgradient descent equation, which can also be written as 
\begin{equation}
    \frac{\beta_{t+1} - \beta_t}{\eta} = - \nabla_{\beta} L(\beta_t) , 
\end{equation}
becomes a differential equation
\begin{equation}
  \frac{d \beta_t}{dt} = - \nabla_{\beta} L(\beta_t) . 
\end{equation}
This is known as gradient flow \citep{ambrosio2008gradient}. And we have defined the subgradient as 
\begin{equation}
  \nabla_{\beta} L(\beta_t) = \beta_t - C \sum_{i=1}^{n}  \mathbbm{1}(y_i g_t(x_i) < 1)  y_i  \Phi(x_i) .
\end{equation}
Applying the chain rule, the dynamics of $g_t(x) = \inner{\beta_t, \Phi(x)}$ is
\begin{equation}
  \begin{split}
    \frac{d g_t(x)}{d t} 
    &= \frac{\partial g_t(x)}{\partial \beta_t} \frac{d \beta_t}{dt} \\
    &= \inner{\Phi(x), - \beta_t + C \sum_{i=1}^{n}  \mathbbm{1}(y_i g_t(x_i) < 1)  y_i  \Phi(x_i) } \\
    &= - g_t(x) +  C \sum_{i=1}^{n}  \mathbbm{1}(y_i g_t(x_i) < 1)  y_i  K(x, x_i) .
  \end{split}
\end{equation}
Denoting $Q(t) = C \sum_{i=1}^{n}  \mathbbm{1}(y_i g_t(x_i) < 1) y_i  K(x, x_i) $, the equation becomes
\begin{equation}
    \frac{d g_t(x)}{d t} + g_t(x) = Q(t) .
\end{equation}
Note this is a first-order inhomogeneous differential equation. The general solution at some time $T$ is given by
\begin{equation}
    g_T(x) = e^{-T} \biggl( g_0(x)  + \int_{0}^{T} Q(t) e^{t} \,dt \biggr) .
\end{equation}

As we already know that the loss function is strongly convex, $\beta$ will converge to the global optimizer in this infinite small learning rate setting. This can be seen by
\begin{equation}
    \frac{d \parentheses{L(\beta_t) - L(\beta^*)} }{d t} 
    = \frac{d L(\beta_t) }{d t} 
    = \frac{\partial L(\beta_t) }{\partial \beta_t} \frac{d \beta_t }{d t} 
    = \inner{\nabla_{\beta} L(\beta_t), - \nabla_{\beta} L(\beta_t) }
    = - \norm{\nabla_{\beta} L(\beta_t) }^2 .
\end{equation}
We see that $L(\beta_t)$ is always decreasing. Since $L(\beta)$ is strongly convex and thus bounded from below, by monotone convergence theorem, $L(\beta_t)$ will always converge. By Lemma~\ref{lemma:PL}, we have the Polyak- Lojasiewicz (PL) inequality,
\begin{equation}
    L(\beta_t) - L(\beta^*) \leq \frac{1}{2} \norm{\nabla_{\beta} L(\beta_t)}^2
\end{equation}
Combining with above, we have 
\begin{equation}
    \frac{d \parentheses{L(\beta_t) - L(\beta^*)} }{d t} \leq -2 \parentheses{L(\beta_t) - L(\beta^*)} .
\end{equation}
Solving the equation, we get
\begin{equation}
    L(\beta_t) - L(\beta^*) \leq e^{-2t} \parentheses{L(\beta_0) - L(\beta^*)} .
\end{equation}
Thus we have a linear convergence rate.

Now, let us assume $g_T(x)$ will converge and see what is $g_T(x)$ as $T \rightarrow \infty$. As time increases $T \rightarrow \infty$, $e^{-T} g_0(x) \rightarrow 0$.
\begin{equation}
    g_T(x) \rightarrow e^{-T}  \int_{0}^{T} Q(t) e^{t} \,dt 
    % = C e^{-T} \int_{0}^{T} \biggl( \sum_{i=1}^{n}  \mathbbm{1}(y_i g_t(x_i) < 1) \biggr)  e^{t} \,dt \cdot y_i K(x, x_i) .
\end{equation}
$Q(t)$ is changing over time due to $g_t(x)$ is changing. Suppose $Q(t)$ keeps changing until some time $T_1$ and keeps constant, $Q(t)=Q$, after $T_1$,
\begin{equation}
  \begin{split}
    \lim_{T \rightarrow \infty} g_T(x)
    = e^{-T}  \int_{0}^{T_1} Q(t) e^{t} \,dt + e^{-T}  \int_{T_1}^{T} Q e^{t} \,dt .
  \end{split}
\end{equation}
As $T \rightarrow \infty$, the first part of right hand side converges to $0$.
\begin{equation}
  \begin{split}
    \lim_{T \rightarrow \infty} g_T(x)
    &\rightarrow e^{-T}  \int_{T_1}^{T} Q e^{t} \,dt \\
    &= e^{-T}  \int_{T_1}^{T} e^{t} \,dt \cdot  Q\\
    &= e^{-T} ( e^{T} - e^{T_1}) \cdot Q \\
    &\rightarrow Q \\
    &= C  \sum_{i = 1}^{n}   \mathbbm{1}(y_i g_T(x_i) < 1)   \cdot  y_i K(x, x_i) .
  \end{split}
\end{equation}
% \begin{equation}
%   \begin{split}
%     g_T(x)
%     =&  C e^{-T} \int_{0}^{T_1} \biggl(  \sum_{i = 1}^{n}   \mathbbm{1}(y_i g_t(x_i) < 1)  \biggr) e^{t} \,dt \cdot y_i K(x, x_i) \\
%     &+ C e^{-T} \int_{T_1}^{T} \biggl(  \sum_{i = 1}^{n}   \mathbbm{1}(y_i g_t(x_i) < 1) \biggr) e^{t} \,dt \cdot y_i K(x, x_i) .
%   \end{split}
% \end{equation}
% As $T \rightarrow \infty$, the first part converges to $0$.
% \begin{equation}
%   \begin{split}
%     g_T(x)
%     & \rightarrow  C e^{-T} \int_{T_1}^{T} \biggl(  \sum_{i = 1}^{n}   \mathbbm{1}(y_i g_t(x_i) < 1)  \biggr) e^{t} \,dt \cdot  y_i K(x, x_i) \\
%     &= C e^{-T} ( e^{T} - e^{T_1}) \biggl(  \sum_{i = 1}^{n}   \mathbbm{1}(y_i g_T(x_i) < 1)  y_i \biggr)  \cdot y_i K(x, x_i) \\
%     & \rightarrow C  \sum_{i = 1}^{n}   \mathbbm{1}(y_i g_T(x_i) < 1) < 1)   \cdot  y_i K(x, x_i) .
%   \end{split}
% \end{equation}

\subsection{Discrete Dynamics of SVM}
Let $\eta \in (0, 1)$ be the learning rate. The equation of subgradient descent update at some time $t$ is 
\begin{equation}
  \beta_{t+1} - \beta_t = - \eta \nabla_{\beta} L(\beta_t) . \\ 
\end{equation}
The dynamics of $g_t(x)$ is
\begin{equation}
  \begin{split}
    g_{t+1}(x)  - g_t(x)
    &= \inner{ \beta_{t+1} - \beta_{t}, \Phi(x) } \\
    &= \inner{ - \eta \beta_t + \eta C \sum_{i=1}^{n}  \mathbbm{1}(y_i g_t(x_i) < 1)  y_i  \Phi(x_i), \Phi(x) } \\
    &= -\eta g_t(x) + \eta C \sum_{i=1}^{n}  \mathbbm{1}(y_i g_t(x_i) < 1)  y_i  K(x, x_i) .
  \end{split}
\end{equation}
Denote second part as $Q(t) = \eta C \sum_{i=1}^{n}  \mathbbm{1}(y_i g_t(x_i) < 1)  y_i  K(x, x_i) $, which changes over time. The model $g_T(x)$ at some time $T$ is
\begin{equation}
\begin{split}
  g_T(x) 
  &= (1 - \eta) g_{T-1}(x)  + Q(T-1) \\
  &= (1 - \eta) \biggl( (1 - \eta) g_{T-2}(x)  + Q(T-2) \biggr) + Q(T-1) \\
  &= (1 - \eta)^{T}  g_0(x)  + \sum_{t=0}^{T-1}  (1 - \eta)^{T-1-t} Q(t) \\
  &= (1 - \eta)^{T}  \biggl( g_0(x)  + \sum_{t=0}^{T-1}  (1 - \eta)^{-t-1} Q(t) \biggr) .
\end{split}
\end{equation}

The convergence of subgradient descent usually requires additional assumption that the norm of the subgradient is bounded. We refer readers to \citep{shalev2011pegasos, boyd2003subgradient, bertsekas2015convex} for some proofs. Here let us assume the subgradient descent converges to the global optimizer and $Q(t)$ keeps changing until some time $T_1$ and keeps constant, $Q(t) = Q$, after $T_1$. As $T \rightarrow \infty$,
\begin{equation}
  \begin{split}
    g_T(x) 
    &\rightarrow \sum_{t=0}^{T-1}  (1 - \eta)^{T-1-t} Q(t) \\
    &= \sum_{t=0}^{T_1-1}  (1 - \eta)^{T-1-t} Q(t) + \sum_{t=T_1}^{T-1}  (1 - \eta)^{T-1-t} Q \\ 
    &\rightarrow \sum_{t=T_1}^{T-1}  (1 - \eta)^{T-1-t} Q \\ 
    &= \sum_{t=T_1}^{T-1}  (1 - \eta)^{T-1-t} Q \\ 
    &= \frac{-(1-\eta)^{T-T_1} + 1}{\eta} Q .
  \end{split} 
\end{equation}
As $\eta \in (0, 1)$, $-(1-\eta)^{T-T_1} \rightarrow 0$.
\begin{equation}
  \begin{split}
    g_T(x) 
    &\rightarrow \frac{1}{\eta} Q \\ 
    &= C \sum_{i=1}^{n}  \mathbbm{1}(y_i g_T(x_i) < 1)  y_i  K(x, x_i) . \\ 
  \end{split}
\end{equation}

\section{Dynamics and Convergence Rate of Neural Network Trained by Soft Margin Loss}\label{app:dy_nn}
\subsection{Continuous Dynamics of NN}
% \begin{theorem} 
%   Suppose a neural network $f(w, x)$ defined as Eq. (\ref{eq:NN}), with $f$ a differentiable function of $w$, is learned from a training set $\{(x_i, y_i)\}_{i=1}^{n}$ by subgradient descent with 
%   soft margin loss function $L(w) = \frac{1}{2} \norm{ W^{(L+1)} }^2 + C \sum_{i = 1}^{n} \max(0, 1 -  y_i f(w, x_i) )$. Under gradient flow, the network has the following dynamics:
%   \begin{equation}
%     \frac{d f_t(x)}{d t} =  - f_t(x) + C \sum_{i = 1}^{n} \mathbbm{1}(y_i f_t(x_i) < 1)  y_i  \hat{\Theta}(w_t; x, x_i).
%   \end{equation}
% \end{theorem}
% \begin{proof}
In the learning rate $\eta \rightarrow 0$ limit, the subgradient descent equation, which can also be written as 
\begin{equation}
    \frac{w_{t+1} - w_t}{\eta} = - \nabla_w L(w_t) ,
\end{equation}
becomes a differential equation
\begin{equation}
    \frac{d w_t}{dt} =  - \nabla_w L(w_t) .
\end{equation}
This is known as gradient flow \citep{ambrosio2008gradient}. Then for any differentiable function $f_t(x)$,
\begin{equation}
    \frac{d f_t(x)}{d t} = \sum_{j = 1}^{p} \frac{\partial f_t(x)}{\partial w_j} \frac{d w_j}{d t} ,
\end{equation}
where $p$ is the number of parameters. Replacing $\frac{d w_j}{d t} $ by its subgradient descent expression:
\begin{equation}
    \frac{d f_t(x)}{d t} = \sum_{j = 1}^{p} \frac{\partial f_t(x)}{\partial w_j} (- \frac{\partial L(w_t)}{\partial w_j} ).
\end{equation}
And we know 
\begin{equation}
    \frac{\partial L(w_t)}{\partial w_j} =  w_j \mathbbm{1}(w_j \in W^{(L+1)}) + \sum_{i = 1}^{n} \frac{\partial L_h}{\partial f_t(x_i)} \frac{\partial f_t(x_i)}{\partial w_j} .
\end{equation}
where $\mathbbm{1}(w_j \in W^{(L+1)})$ equals to 1 if the parameter $w_j$ is in the last layer $W^{(L+1)}$ else 0.
% Applying the additivity of the loss and the chain rule of differentiation:
Combining above together, 
\begin{equation}
    \frac{d f_t(x)}{d t} = \sum_{j = 1}^{p} \frac{\partial f_t(x)}{\partial w_j} \biggl(- w_j \mathbbm{1}(w_j \in W^{(L+1)}) - \sum_{i = 1}^{n} \frac{\partial L_h}{\partial f_t(x_i)} \frac{\partial f_t(x_i)}{\partial w_j} \biggr) .
\end{equation}
Rearranging terms:
\begin{equation}
    \frac{d f_t(x)}{d t} = - \sum_{k = 1}^{p_{L+1}} \frac{\partial f_t(x)}{\partial W^{(L+1)}_k} W^{(L+1)}_k  
    - \sum_{i = 1}^{n} \frac{\partial L_h}{\partial f_t(x_i)} \sum_{j = 1}^{p} \frac{\partial f_t(x)}{\partial w_j} \frac{\partial f_t(x_i)}{\partial w_j} ,
\end{equation}
where $p_{L+1}$ is the number of parameters of the last layer ($L+1$ layer). The first part of the right hand side is
\begin{equation}
    \sum_{k = 1}^{p_{L+1}} \frac{\partial f_t(x)}{\partial W^{(L+1)}_k} W^{(L+1)}_k 
    = \inner{ \frac{\partial f_t(x)}{\partial W^{(L+1)}}, W^{(L+1)} } 
    = \inner{ \alpha_t^{(L)}(x), W^{(L+1)} }
    = f_t(x) .
\end{equation}
Applying $L_h'(y_i, f_t(x_i)) = \frac{\partial L_h}{\partial f_t(x_i)}$, the subgradient of hinge loss, and the definition of tangent kernel (\ref{tangent_k}), the second part is 
% \begin{equation}
%     \frac{d f_t(x)}{d t} = - \sum_{i = 1}^{n} L_h'(y_i, f_t(x_i)) \hat{\Theta}(w_t; x, x_i) - \sum_{k = 1}^{d_{L}} \frac{\partial f_t(x)}{\partial W^{(L+1)}_k} W^{(L+1)}_k
% \end{equation}
\begin{equation}
    - \sum_{i = 1}^{n} \frac{\partial L_h}{\partial f_t(x_i)} \sum_{j = 1}^{p} \frac{\partial f_t(x)}{\partial w_j} \frac{\partial f_t(x_i)}{\partial w_j} =  - \sum_{i = 1}^{n} L_h'(y_i, f_t(x_i)) \hat{\Theta}(w_t; x, x_i) .
\end{equation}
Thus the equation becomes
\begin{equation}\label{eq:diff}
    \frac{d f_t(x)}{d t} = - f_t(x) - \sum_{i = 1}^{n} L_h'(y_i, f_t(x_i)) \hat{\Theta}(w_t; x, x_i) .
\end{equation}
Take $L_h'(y_i, f_t(x_i)) = - C y_i  \mathbbm{1}(y_i f_t(x_i) < 1) $ in
\begin{equation}
  \frac{d f_t(x)}{d t} =  - f_t(x) + C \sum_{i = 1}^{n} \mathbbm{1}(y_i f_t(x_i) < 1)  y_i  \hat{\Theta}(w_t; x, x_i) .
\end{equation}
% \end{proof}

\subsection{Additional Notations}
Denote $X \in \mathbb{R}^{d \times n}$ as the training data. Denote $f_t = f_t(X) \in \mathbb{R}^n$ and $g_t = g_t(X) \in \mathbb{R}^n$ as the outputs of NN and SVM on the training data. Denote $\hat{\Theta}(w_t) = \hat{\Theta}(w_t; X, X) \in \mathbb{R}^{n \times n}$ as the tangent kernel evaluated on the training data at time $t$, and $l'(f_t) \in \mathbb{R}^n$ as the derivative of the loss function w.r.t. $f_t$. Denote $\nabla_w f_t \in \mathbb{R}^{n \times p}$ as the Jacobian and we have $\hat{\Theta}(w_t) = \nabla_w f_t \nabla_w f_t^T$. Denote $\lambda_0 = \lambda_{min}\parentheses{\hat{\Theta}(w_t)}$ as the smallest eigenvalue of $\hat{\Theta}(w_t)$. Then we can write the dynamics of NN as
\begin{gather*}
    \frac{d}{dt} f_t = - f_t - \hat{\Theta}(w_t) l'(f_t) .
\end{gather*}

Let $v \in \mathbb{R}^p$ with $v_j = \mathbbm{1}(w_j \in W^{(L+1)})$. We can write the gradient as
\begin{gather*}
    \nabla_w L(w_t) = w_t \odot v + \nabla_w f_t^T l'(f_t) .
\end{gather*}

\subsection{Convergence of NN}

% The solution is
% \begin{align*}
%     f_T = e^{-T}f_0 - e^{-T} \int_0^T \hat{\Theta}(w_t) l'(f_t)e^{t} dt 
% \end{align*}
% Thus we have
% \begin{align*}
%     \norm{f_T} 
%     &= \norm{e^{-T}f_0 - e^{-T} \int_0^T \hat{\Theta}(w_t) l'(f_t)e^{t} dt } \\
%     &\geq e^{-T} \norm{\int_0^T \hat{\Theta}(w_t) l'(f_t)e^{t} dt} - e^{-T} \norm{f_0}
% \end{align*}
The loss of NN is
\begin{equation*}
    L(w_t) = \frac{1}{2}\norm{W_t^{(L+1)}}^2 + \sum_i^n l(f_t(x_i), y_i) ,
\end{equation*}
where $l(f, y) = C\max(0, 1-yf)$. The dynamic of the loss is
\begin{equation*}
    % \frac{d \parentheses{L(\beta_t) - L(\beta^*)} }{d t} 
    \frac{d L(w_t) }{d t} 
    = \frac{\partial L(w_t) }{\partial w_t} \frac{d w_t }{d t} 
    = \inner{\nabla_{w} L(w_t), - \nabla_{w} L(w_t) }
    = - \norm{\nabla_{w} L(w_t) }^2 .
\end{equation*}
Since $L(w_t) \geq 0$ is bounded from below, by monotone convergence theorem, $L(w_t)$ will always converge to a stationary point. Applying Lemma~\ref{lemma:pl_nn}, we have 
\begin{align*}
    \frac{d \parentheses{L(w_t) - L(w^*)}}{d t} 
    = - \norm{\nabla_{w} L(w_t) }^2 
    \leq -2\parentheses{L(w_t) - L(w^*)} .
\end{align*}
Thus we have a linear convergence, same as SVM.
\begin{equation*}
    L(w_t) - L(w^*) \leq e^{-2t}\parentheses{L(w_0) - L(w^*)} .
\end{equation*}

\begin{lemma}[PL inequality of NN for soft margin loss] \label{lemma:pl_nn}
  Assume $\lambda_0 \geq \frac{2}{C}$, then  $L(w_t)$ satisfies the PL condition 
  \begin{equation*}
      \norm{\nabla_{w} L(w_t) }^2  \geq 2\parentheses{L(w_t) - L(w^*)} .
  \end{equation*}
\end{lemma}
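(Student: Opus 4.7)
The plan is to lower-bound $\norm{\nabla_w L(w_t)}^2$ by $2 L(w_t)$ directly. Since $L$ is the sum of two nonnegative terms we have $L(w^*) \geq 0$, so $L(w_t) - L(w^*) \leq L(w_t)$ and the stronger inequality $\norm{\nabla_w L(w_t)}^2 \geq 2 L(w_t)$ implies the stated PL bound. All the identities I will need come from the linear-output structure $f(w, x) = \inner{W^{(L+1)}, \alpha^{(L)}(w, x)}$ and from the explicit hinge subgradient $l'(f_t(x_i), y_i) = -C y_i \mathbbm{1}(y_i f_t(x_i) < 1)$; the spectral hypothesis $\lambda_0 \geq 2/C$ will enter only at the very last step.

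First, I would use $\nabla_w L(w_t) = w_t \odot v + \nabla_w f_t^T l'(f_t)$ from the excerpt and expand
\begin{equation*}
\norm{\nabla_w L(w_t)}^2 = \norm{W^{(L+1)}}^2 + 2(W^{(L+1)})^T \nabla_{W^{(L+1)}} f_t^T l'(f_t) + l'(f_t)^T \hat{\Theta}(w_t) l'(f_t).
\end{equation*}
The cross term collapses because $\nabla_{W^{(L+1)}} f_t(x_i) = \alpha^{(L)}(w_t, x_i)$ and $\inner{W^{(L+1)}, \alpha^{(L)}(w_t, x_i)} = f_t(x_i)$, so it equals $2 \sum_i f_t(x_i) l'(f_t(x_i), y_i)$.

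Next, I would rewrite $2 L(w_t) = \norm{W^{(L+1)}}^2 + 2C \sum_i (1 - y_i f_t(x_i))_+$ in matching form. On any active index (i.e., $y_i f_t(x_i) < 1$) the hinge contribution equals $2C - 2C y_i f_t(x_i) = 2C + 2 f_t(x_i) l'(f_t(x_i), y_i)$, while inactive indices contribute zero both to the hinge loss and to $f_t(x_i) l'(f_t(x_i), y_i)$. Letting $N$ denote the number of active samples, this yields
\begin{equation*}
2 L(w_t) = \norm{W^{(L+1)}}^2 + 2CN + 2 \sum_i f_t(x_i) l'(f_t(x_i), y_i),
\end{equation*}
so the regularization term and the cross term in $\norm{\nabla_w L}^2$ cancel exactly against the corresponding pieces of $2L(w_t)$, leaving
\begin{equation*}
\norm{\nabla_w L(w_t)}^2 - 2 L(w_t) = l'(f_t)^T \hat{\Theta}(w_t) l'(f_t) - 2CN.
\end{equation*}

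Finally, I would apply the Rayleigh bound $l'(f_t)^T \hat{\Theta}(w_t) l'(f_t) \geq \lambda_0 \norm{l'(f_t)}^2$ and note $\norm{l'(f_t)}^2 = C^2 N$ because each active coordinate equals $-C y_i$ and $y_i^2 = 1$. This gives $\norm{\nabla_w L(w_t)}^2 - 2 L(w_t) \geq (\lambda_0 C - 2) C N \geq 0$ under the hypothesis $\lambda_0 \geq 2/C$, and the PL inequality follows. The only delicate step is the algebraic bookkeeping for the cross term: it is exactly matched by the linear part of the hinge loss, which is why the regularization and data-fitting pieces decouple cleanly and the argument reduces to a single spectral inequality on $\hat{\Theta}(w_t)$.
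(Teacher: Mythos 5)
Your proposal is correct and follows essentially the same route as the paper's proof: expand $\norm{\nabla_w L(w_t)}^2$ into the regularization, Gram-form, and cross terms, identify the cross term with $2f_t^T l'(f_t)$ via the linear output layer, cancel against the hinge loss, and apply the Rayleigh bound $l'(f_t)^T\hat{\Theta}(w_t)l'(f_t)\geq \lambda_0\norm{l'(f_t)}^2$ to reduce everything to $CN(C\lambda_0-2)\geq 0$. The only cosmetic difference is that you establish the slightly stronger bound $\norm{\nabla_w L(w_t)}^2\geq 2L(w_t)$ and then invoke $L(w^*)\geq 0$, whereas the paper carries the $2L(w^*)$ term through the same algebra.
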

\begin{proof}
\begin{align*}
    \norm{\nabla_{w} L(w_t) }^2 
    &= \inner{w_t \odot v + \nabla_w f_t^T l'(f_t), w_t \odot v + \nabla_w f_t^T l'(f_t)} \\
    &= \inner{w_t \odot v, w_t \odot v} + \inner{\nabla_w f_t^T l'(f_t), \nabla_w f_t^T l'(f_t)} + 2\inner{w_t \odot v, \nabla_w f_t^T l'(f_t)} \\
    &= \norm{W_t^{(L+1)}}^2 + l'(f_t)^T \hat{\Theta}(w_t) l'(f_t) + 2\inner{W_t^{(L+1)}, \nabla_{W^{(L+1)}} f_t^T l'(f_t)} \\
    &= \norm{W_t^{(L+1)}}^2 + l'(f_t)^T \hat{\Theta}(w_t) l'(f_t) + 2 f_t^T l'(f_t) .
\end{align*}

We want the loss satisfies the PL condition $\norm{\nabla_{w} L(w_t) }^2  \geq 2\parentheses{L(w_t) - L(w^*)}$.
\begin{align*}
    &\norm{\nabla_{w} L(w_t) }^2  - 2\parentheses{L(w_t) - L(w^*)} \\
    &= \norm{\nabla_{w} L(w_t) }^2  - 2L(w_t) + 2L(w^*) \\
    &= l'(f_t)^T \hat{\Theta}(w_t) l'(f_t) + 2 f_t^T l'(f_t) - 2\sum_i^n l(f_t(x_i), y_i) + 2L(w^*) \\
    &\geq \lambda_0 \norm{l'(f_t)}^2 + 2 f_t^T l'(f_t) - 2\sum_i^n l(f_t(x_i), y_i) + 2L(w^*) ,
\end{align*}
where the last inequality is the inequality of quadratic form. For hinge loss $l(f, y) = C\max(0, 1-yf) = C(1-yf)\mathbbm{1}(1-yf>0)$ and $l'(f, y) = -Cy\mathbbm{1}(1-yf>0)$,
\begin{align*}
    &\norm{\nabla_{w} L(w_t) }^2  - 2\parentheses{L(w_t) - L(w^*)} \\
    &\geq \lambda_0 \norm{l'(f_t)}^2 + 2 f_t^T l'(f_t) - 2\sum_i^n l(f_t(x_i), y_i) + 2L(w^*) \\
    &= \lambda_0 \sum_i^n {l'(f_t(x_i), y_i)}^2 + 2 \sum_i^n f_t(x_i) l'(f_t(x_i), y_i) - 2\sum_i^n l(f_t(x_i), y_i) + 2L(w^*) \\
    &= \lambda_0 \sum_i^n C^2\mathbbm{1}(1-y_if_t(x_i)>0) - 2 \sum_i^n Cy_if_t(x_i) \mathbbm{1}(1-y_if_t(x_i)>0) \\ 
    &\qquad - 2\sum_i^n C(1-y_if_t(x_i))\mathbbm{1}(1-y_if_t(x_i)>0) + 2L(w^*) \\
    % &= \sum_i^n \mathbbm{1}(1-y_if_t(x_i)>0)\parentheses{\lambda_0 - 2 y_if_t(x_i) - 1 + 2y_if_t(x_i)}  + 2L(w^*) \\
    &= C\sum_i^n \mathbbm{1}(1-y_if_t(x_i)>0)\parentheses{C\lambda_0 - 2}  + 2L(w^*) .
\end{align*}
Since $L(w^*) > 0$, as long as $\lambda_0 \geq 2 / C$, the loss $L(w_t)$ satisfies the PL condition $\norm{\nabla_{w} L(w_t) }^2  \geq 2\parentheses{L(w_t) - L(w^*)}$. $\lambda_0 \geq 2 / C$ can be guaranteed in a parameter ball when $\frac{2}{C} < \lambda_{min}\parentheses{\hat{\Theta}(w_0)}$ by using a sufficiently wide NN \citep{liu2020loss}.
% And $\lambda_0 \geq 1$ can be guaranteed in a fixed radius parameter ball by using a sufficient wide NN \citep{liu2020loss}.
\end{proof}

\subsection{Discrete Dynamics of NN}\label{app:dis_nn} 
The subgradient descent update is 
\begin{equation}
    w_{t+1} - w_t = - \eta \nabla_w L(w_t) .
\end{equation}
We consider the situation of constant NTK, $\hat{\Theta}(w_t; x, x_i) \rightarrow \hat{\Theta}(w_0; x, x_i)$, or equivalently linear model. As proved by Proposition 2.2 in \citep{liu2020linearity}, the tangent kernel of a differentiable
function $f(w, x)$ is constant if and only if $f(w, x)$ is linear in $w$. Take the Taylor expansion of $f(w_{t+1}, x)$ at $w_{t}$,
\begin{equation}
\begin{split}
    &\quad f(w_{t+1}, x) - f(w_{t}, x) \\ 
    % &= f(w_t - \eta \nabla_w L(w_t), x) - f(w_{t}, x)\\
    &= f(w_{t}, x) + \inner{\nabla_w f(w_{t}, x), w_{t+1}-w_{t}} - f(w_{t}, x) \\
    % + o(\norm{w_{t+1}-w_{t}})\\
    &=\inner{\nabla_w f(w_{t}, x), - \eta \nabla_w L(w_t)} \\
    % &= \inner{\nabla_w f(w_{t}, x), - \eta \nabla_w L(w_t)}\\
    &= \inner{\nabla_w f(w_{t}, x), - \eta \Big( w v +  \sum_{i = 1}^{n} L_h'(y_i, f_t(x_i)) \nabla_w f_t(x_i)  \Big)}\\
    &= - \eta f_t(x) + \eta  \sum_{i = 1}^{n} L_h'(y_i, f_t(x_i)) \hat{\Theta}(w_t; x, x_i)\\
    &= - \eta f_t(x) + \eta  C \sum_{i = 1}^{n} \mathbbm{1}(y_i f_t(x_i) < 1)  y_i  \hat{\Theta}(w_t; x, x_i)\\
    &\rightarrow - \eta f_t(x) + \eta  C \sum_{i = 1}^{n} \mathbbm{1}(y_i f_t(x_i) < 1)  y_i  \hat{\Theta}(w_0; x, x_i) .
\end{split}
\end{equation}

\section{Proof of Theorem \ref{theorem:equivalence}}\label{app:equivalence}
\begin{proof}
We prove the constancy of tangent kernel by adopting the results of \citep{liu2016large}.
\begin{lemma}[Theorem 3.3 in \citep{liu2020linearity}; Hessian norm is controlled by the minimum hidden layer width] \label{lemma:hessian_bound}
  Consider a general neural network $f(w, x)$ of the form Eq. (\ref{eq:NN}), which can be a fully connected network, CNN, ResNet or a mixture of these types. Let m be the minimum of the hidden layer widths, i.e., $m = \min_{l \in [L]} m_l$. Given any fixed $R > 0$, and any $w \in B(w_0; R) := \{ w: \norm{ w - w_0 } \leq R \}$, with high probability over the initialization, the Hessian spectral norm satisfies the following:
  \begin{equation}
    \norm{ H(w) } = O(\frac{R^{3L}\ln{m}}{\sqrt{m}}). 
  \end{equation}
\end{lemma}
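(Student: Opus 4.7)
The plan is to bound the Hessian spectral norm by decomposing $H(w) = \nabla_w^2 f(w,x)$ into layer-wise blocks $H^{(l,l')}(w) = \partial^2 f / (\partial w^{(l)} \partial w^{(l')})$ and controlling each block by combining (i) Gaussian concentration at initialization and (ii) a perturbation argument that extends the bound uniformly over the ball $B(w_0; R)$. The $1/\sqrt{m}$ factor in the target will come from the explicit $1/\sqrt{m_L}$ prefactor in the output layer of the NTK parameterization (\ref{eq:NN}); the $R^{3L}$ factor will come from layer-by-layer propagation of perturbations; and the $\ln m$ factor will enter through union-bound/concentration arguments over the $\mathrm{poly}(m)$ coordinates.

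First I would write out the chain-rule expansion of each block. For off-diagonal blocks $(l \neq l')$, the Hessian factors as a product of a forward-propagated Jacobian $\partial \alpha^{(\max(l,l'))} / \partial w^{(\min(l,l'))}$ times a backward-propagated sensitivity $\partial f / \partial \alpha^{(\max(l,l'))}$, plus cross terms; for diagonal blocks $(l=l')$, the second derivative of $\phi_l$ appears. The key structural point is that every term is multiplied by the output-layer $1/\sqrt{m_L}$ scalar, so establishing $O(1)$ bounds (in appropriate norms) on the remaining factors will already yield the $1/\sqrt{m}$ rate.

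Next, I would bound those factors at $w = w_0$ via Gaussian concentration under NTK parameterization. Each pre-activation $\frac{1}{\sqrt{m_{l-1}}} w^{(l)} \alpha^{(l-1)}$ concentrates at order $O(1)$, the post-activations satisfy $\norm{\alpha^{(l)}(w_0,x)} = \Theta(\sqrt{m_l})$, and the backward signals $\partial f / \partial \alpha^{(l)}$ are of order $O(1/\sqrt{m_L})$ in Euclidean norm. Using the Lipschitz and twice-differentiability assumptions on $\phi_l$, together with a Hanson--Wright or $\epsilon$-net argument applied to the resulting quadratic forms, I obtain the desired bound at initialization with a $\ln m$ factor from the union bound over directions/coordinates.

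Finally, I would extend the bound to all $w \in B(w_0; R)$ by a telescoping induction over layers. Letting $\Delta w = w - w_0$ with $\norm{\Delta w} \le R$, each layer's activation and its Jacobian pick up a perturbation controlled by $R$ times the Lipschitz constants of earlier layers; propagating this through $L$ layers yields an $R^L$ blow-up for forward activations, and a symmetric argument gives an $R^L$ blow-up for backward sensitivities. Since the Hessian is a bilinear combination of a forward Jacobian, a backward sensitivity, and one additional factor of either $\phi_l''$ or a first-derivative term (each of which contributes its own $R^L$ after propagation), the combined perturbation factor is $R^{3L}$, giving the final bound $O(R^{3L} \ln m / \sqrt{m})$.

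The main obstacle will be the uniform-in-$B(w_0;R)$ step: naive bounds quickly accumulate extra factors of $m$ through the widths $m_l$ that would destroy the $1/\sqrt{m}$ rate. The cleanest route is to first establish a priori bounds on intermediate activation norms and their Jacobians that hold uniformly over $B(w_0; R)$ (by a Gronwall-type induction over layers exploiting Lipschitz continuity of $\phi_l$), then plug these a priori bounds into the block-wise Hessian expressions from the initialization step. Care must be taken that the second-derivative terms of $\phi_l$ do not interact with the $\sqrt{m_l}$-sized activations in a way that cancels the beneficial $1/\sqrt{m_L}$ prefactor; the twice-differentiability hypothesis on $\phi_l$ is exactly what guarantees this.
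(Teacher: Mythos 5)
First, a point of reference: the paper does not prove this statement at all. It is imported verbatim as Theorem 3.3 of \citep{liu2020linearity} and used as a black box in Appendix~\ref{app:equivalence}; there is no in-paper argument to compare yours against, so the relevant comparison is with the proof in that reference, whose overall architecture (block decomposition of $H(w)$, concentration at $w_0$, propagation of an $R$-dependent perturbation through $L$ layers to get $R^{3L}$) your sketch does broadly mirror.

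That said, your sketch has a genuine gap at the one step that actually produces the $1/\sqrt{m}$ rate. You assert that the backward signals $\partial f/\partial \alpha^{(l)}$ have Euclidean norm $O(1/\sqrt{m_L})$ and that therefore ``establishing $O(1)$ bounds on the remaining factors'' suffices. This is false: under NTK parameterization $\partial f/\partial \alpha^{(L)} = w^{(L+1)}/\sqrt{m_L}$ has Euclidean norm concentrating at $\Theta(1)$, not $O(1/\sqrt{m_L})$, and a Cauchy--Schwarz contraction of the layer Hessian tensor against a backward vector of Euclidean norm $\Theta(1)$ yields only an $O(1)$ bound on the Hessian block --- vacuous for the claim. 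Already in the two-layer case $f = \frac{1}{\sqrt{m}}v^{\top}\sigma(\frac{1}{\sqrt{d}}Wx)$, the block quadratic form is $\frac{1}{\sqrt{m}\,d}\sum_i v_i\,\sigma''(z_i)\,(U_i^{\top}x)^2$, and the $1/\sqrt{m}$ rate is obtained only by pulling out $\max_i \abs{v_i} = O(\sqrt{\ln m})$, i.e.\ by exploiting the \emph{entrywise} (infinity-norm) smallness of the backward vector, which is also the true source of the $\ln m$ factor. For depth $L>1$ one must additionally show that this delocalization of the backward vectors survives propagation through the intermediate Jacobians and persists for all $w \in B(w_0;R)$, not just at initialization; this is the technical core of the cited proof and is absent from your outline, which works with Euclidean norms throughout. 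Your $R^{3L}$ counting (forward, backward, and second-derivative factors each contributing $R^{L}$-type growth) is the right heuristic, but without the infinity-norm control of the backward signals the argument as written does not close.
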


\begin{lemma}[Proposition 2.3 in \citep{liu2020linearity}; Small Hessian norm $\Rightarrow$ Small change of tangent kernel] \label{lemma:ntk_bound}
  Given a point $w_0 \in \mathbb{R}^p$ and a ball $B(w_0; R) := \{ w: \norm{ w - w_0 } \leq R \}$ with fixed radius $R > 0$, if the Hessian matrix satisfies $ \norm{ H(w) } < \epsilon $, where $\epsilon > 0$, for all $w \in B(w_0, R)$, then the tangent kernel $\hat{\Theta}(w; x, x')$ of the model, as a function of $w$, satisfies
  \begin{equation}
    \abs{ \hat{\Theta}(w; x, x') -  \hat{\Theta}(w_0; x, x') } = O(\epsilon R), \quad \forall w \in B(w_0; R),\ \forall x, x' \in \mathbb{R}^d.
  \end{equation}
\end{lemma}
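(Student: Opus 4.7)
The plan is to exploit bilinearity of the tangent kernel and reduce the claim to controlling the increment of the Jacobian map $w \mapsto \nabla_w f(w,x)$ on the ball $B(w_0; R)$. First, I would write, for $w \in B(w_0; R)$,
\begin{align*}
\hat{\Theta}(w; x, x') - \hat{\Theta}(w_0; x, x')
&= \inner{\nabla_w f(w,x) - \nabla_w f(w_0,x),\, \nabla_w f(w,x')} \\
&\quad + \inner{\nabla_w f(w_0,x),\, \nabla_w f(w,x') - \nabla_w f(w_0,x')},
\end{align*}
which is just the standard telescoping identity $\inner{a,b} - \inner{a_0,b_0} = \inner{a-a_0,b} + \inner{a_0,b-b_0}$ applied with $a = \nabla_w f(w,x)$, $b = \nabla_w f(w,x')$. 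This reduces everything to controlling the gradient increments along a displacement of length at most $R$.

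Next, to bound each Jacobian increment I would integrate the Hessian along the line segment $w(s) = w_0 + s(w - w_0)$, $s \in [0,1]$, which remains inside $B(w_0; R)$ by convexity. The fundamental theorem of calculus applied to $w \mapsto \nabla_w f(w,x)$ gives
\begin{equation*}
\nabla_w f(w,x) - \nabla_w f(w_0,x) = \int_0^1 H(w(s); x)\,(w - w_0)\, ds,
\end{equation*}
where $H(\cdot; x) = \nabla_w^2 f(\cdot, x)$. The hypothesis $\norm{H(w)} < \epsilon$ on $B(w_0;R)$, together with $\norm{w - w_0} \leq R$, then yields $\norm{\nabla_w f(w,x) - \nabla_w f(w_0,x)} \leq \epsilon R$, and hence by the triangle inequality $\norm{\nabla_w f(w,x)} \leq \norm{\nabla_w f(w_0,x)} + \epsilon R$.

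Plugging these two estimates back into the bilinear decomposition and applying Cauchy--Schwarz term by term gives
\begin{equation*}
\abs{\hat{\Theta}(w; x, x') - \hat{\Theta}(w_0; x, x')}
\leq \epsilon R \bigl(\norm{\nabla_w f(w_0,x')} + \epsilon R\bigr) + \norm{\nabla_w f(w_0,x)}\,\epsilon R
= O(\epsilon R),
\end{equation*}
where the factors $\norm{\nabla_w f(w_0,\cdot)}^2 = \hat{\Theta}(w_0;\cdot,\cdot)$ are $O(1)$ and independent of $\epsilon$ and $R$ under the NTK parameterization specified earlier in the paper.

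The \emph{main obstacle} to making this completely rigorous is the interpretation of $\norm{H(w)}$ in the hypothesis: the Hessian $H(w;x)$ really depends on the input, and the line integral above requires the spectral bound to hold uniformly in the inputs $x$ and $x'$ under consideration. This is a bookkeeping matter rather than a genuine analytic difficulty, since the Lipschitz and twice-differentiability assumptions on the layer maps $\phi_l$ propagate to a uniform Hessian bound on any bounded input set; the real content of the lemma is the conversion of a second-order smoothness estimate into a first-order stability estimate for the kernel, after which Lemma~\ref{lemma:hessian_bound}'s bound $\norm{H(w)} = O(R^{3L}\ln m / \sqrt{m})$ drives the whole thing to zero as $m \to \infty$ and delivers the constancy of the NTK required for Theorem~\ref{theorem:equivalence}.
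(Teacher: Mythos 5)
Your proof is correct. Note that the paper itself does not prove this lemma --- it is imported verbatim as Proposition 2.3 of \citet{liu2020linearity} --- and your argument (the bilinear telescoping $\inner{a,b}-\inner{a_0,b_0}=\inner{a-a_0,b}+\inner{a_0,b-b_0}$, followed by the fundamental-theorem-of-calculus bound $\norm{\nabla_w f(w,x)-\nabla_w f(w_0,x)}\leq \epsilon R$ and Cauchy--Schwarz, with $\norm{\nabla_w f(w_0,\cdot)}=O(1)$ under NTK parameterization) is exactly the standard proof given in that cited source. The only cosmetic caveat is that the $(\epsilon R)^2$ term you pick up is $O(\epsilon R)$ only when $\epsilon R$ is bounded, which holds in the intended application since $\epsilon = O(R^{3L}\ln m/\sqrt{m})\to 0$; and your flagged concern about the input-dependence of $H(w;x)$ is handled because Lemma~\ref{lemma:hessian_bound} bounds the Hessian norm uniformly over the relevant inputs with high probability.
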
\label{hessian_control}

Applying above two lemmas, we can see that in the limit of $m \rightarrow \infty$, the spectral norm of Hessian converge to $0$ and the tangent kernel keeps constant in the ball $B(w_0; R)$. 
\begin{corollary}[Consistancy of tangent kernel]
  Consider a general neural network $f(w, x)$ of the form Eq. (\ref{eq:NN}). Given a point $w_0 \in \mathbb{R}^p$ and a ball $B(w_0; R) := \{ w: \norm{ w - w_0 } \leq R \}$ with fixed radius $R > 0$, in the infinite width limit, $m \rightarrow \infty$,
  \begin{equation}
    \lim_{m \rightarrow \infty} \hat{\Theta}(w; x, x') \rightarrow  \hat{\Theta}(w_0; x, x_i) , \quad \forall w \in B(w_0; R),\ \forall x, x' \in \mathbb{R}^d.
  \end{equation}
\end{corollary}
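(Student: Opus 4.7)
The plan is to derive the corollary as an essentially immediate chaining of the two preceding lemmas, with the only care being the order of the quantifiers (the "for all $w \in B(w_0;R)$" needs to hold simultaneously, on a single high-probability event, so that the hypothesis of the second lemma is satisfied).

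First, I would fix the radius $R>0$ and the depth $L$, and invoke Lemma~\ref{lemma:hessian_bound}. Its statement is uniform over $w\in B(w_0;R)$: on a high-probability event $E_m$ over the random initialization $w_0$, the Hessian spectral norm satisfies
\begin{equation*}
\sup_{w\in B(w_0;R)} \norm{H(w)} \;\le\; C \,\frac{R^{3L}\ln m}{\sqrt{m}} \;=:\; \epsilon(m),
\end{equation*}
for some constant $C$ depending only on $L$ and the architecture (this is how the $O(\cdot)$ in Lemma~\ref{lemma:hessian_bound} should be read, and it is precisely the form used by \citep{liu2020linearity}). For fixed $R$ and $L$, we have $\epsilon(m)\to 0$ as $m\to\infty$.

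Next, still on the event $E_m$, the hypothesis of Lemma~\ref{lemma:ntk_bound} is met with $\epsilon=\epsilon(m)$, so for every $w\in B(w_0;R)$ and every $x,x'\in\mathbb{R}^d$,
\begin{equation*}
\abs{\hat\Theta(w;x,x') - \hat\Theta(w_0;x,x')} \;=\; O(\epsilon(m)\,R) \;=\; O\!\left(\frac{R^{3L+1}\ln m}{\sqrt{m}}\right).
\end{equation*}
Sending $m\to\infty$ (with $R$ and $L$ held fixed) drives the right-hand side to $0$, which yields the pointwise convergence $\hat\Theta(w;x,x')\to\hat\Theta(w_0;x,x')$ claimed by the corollary; in fact we get uniform convergence over $w\in B(w_0;R)$ and over any bounded set of inputs $x,x'$.

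The only real obstacle is the quantifier issue I flagged above: one must read Lemma~\ref{lemma:hessian_bound} as giving a uniform-in-$w$ Hessian bound on a single high-probability event (which is exactly how \citep{liu2020linearity} proves it, via a net/Lipschitz argument on $H$ inside the ball), rather than a separate event per $w$. Once that is understood, no calculation is needed beyond substituting $\epsilon(m)$ into the conclusion of Lemma~\ref{lemma:ntk_bound}. Note that the statement of the corollary, as written, uses "$\lim_{m\to\infty}\hat\Theta(w;x,x')\to\hat\Theta(w_0;x,x_i)$", so I would also state the conclusion cleanly as convergence in probability (or almost surely, if the high-probability events are summable and one applies Borel--Cantelli), matching the conventions of Definition~\ref{tangent_k}.
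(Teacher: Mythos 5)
Your proposal is correct and follows essentially the same route as the paper: chain Lemma~\ref{lemma:hessian_bound} into Lemma~\ref{lemma:ntk_bound} to get $\abs{\hat\Theta(w;x,x')-\hat\Theta(w_0;x,x')}=O(R^{3L+1}\ln m/\sqrt{m})\to 0$ uniformly over the ball. Your extra remarks on the uniformity of the high-probability event and the mode of convergence are careful clarifications of points the paper leaves implicit, not a different argument.
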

% As long as the neural network converges to some local (global) minimum (the parameters don't diverge to infinity), the parameters will contain in some ball $w \in B(w_0; R)$ for some fixed $R > 0$, in which case, the tangent kernel will keep constant during training in the infinite width limit. And we can claim that the neural network has the same dynamics as SVM with constant NTK.
Thus we prove the constancy of tangent kernel in infinite width limit. Then it is easy to check the dynamics of infinitely wide NN is the same with the dynamics of SVM with constant NTK.

% And the tangent kernel converges in probability to a deterministic limiting kernel, $\hat{\Theta}(w_0; x, x_i) \rightarrow \Theta(x, x_i)$, as shown in \citep{jacot2018neural, arora2019exact}.
\end{proof}

\section{Bound the difference between SVM and NN}\label{app:bound_output}
Assume the loss $l$ is $\rho$-lipschitz and $\beta_l$-smooth for the first argument (i.e. the model output). Assume $f_0(x) = g_0(x)$ for any $x$.

\subsection{Bound the difference on the Training Data}
The dynamics of the NN and SVM are
\begin{gather*}
    \frac{d}{dt} f_t = - \lambda f_t - \hat{\Theta}(w_t) l'(f_t) \\
    \frac{d}{dt} g_t = - \lambda g_t - \hat{\Theta}(w_0) l'(g_t)
\end{gather*}
The dynamics of the difference between them is
\begin{equation*}
    \frac{d}{dt} \parentheses{f_t - g_t} = - \lambda \parentheses{f_t - g_t} - \parentheses{\hat{\Theta}(w_t) l'(f_t) - \hat{\Theta}(w_0) l'(g_t)} 
\end{equation*}
The solution of the above differential equation at time $T$ is
\begin{align*}
    f_T - g_T 
    &= e^{-\lambda T}\parentheses{f_0 - g_0} - e^{-\lambda T} \int_0^T \parentheses{\hat{\Theta}(w_t) l'(f_t) - \hat{\Theta}(w_0) l'(g_t)} e^{\lambda t} dt \\
    &= e^{-\lambda T} \int_0^T \parentheses{\hat{\Theta}(w_0) l'(g_t) - \hat{\Theta}(w_t) l'(f_t)} e^{\lambda t} dt
\end{align*}
using $f_0 = g_0$. Thus
\begin{align*}
    \norm{f_T - g_T}
    &\leq e^{-\lambda T} \int_0^T \norm{\hat{\Theta}(w_0) l'(g_t) - \hat{\Theta}(w_t) l'(f_t)} e^{\lambda t} dt
\end{align*}

Since $l$ is $\beta_l$ smooth,
\begin{align*}
    \norm{\hat{\Theta}(w_0) l'(g_t) - \hat{\Theta}(w_t) l'(f_t)}
    &= \norm{\hat{\Theta}(w_0) l'(g_t) - \hat{\Theta}(w_0) l'(f_t) + \hat{\Theta}(w_0) l'(f_t) - \hat{\Theta}(w_t) l'(f_t)} \\
    &= \norm{\hat{\Theta}(w_0) \parentheses{l'(g_t) - l'(f_t)} + \parentheses{\hat{\Theta}(w_0) - \hat{\Theta}(w_t)} l'(f_t)} \\
    &\leq \norm{\hat{\Theta}(w_0) \parentheses{l'(g_t) - l'(f_t)}} + \norm{\parentheses{\hat{\Theta}(w_0) - \hat{\Theta}(w_t)} l'(f_t)} \\
    &\leq \beta_l \norm{\hat{\Theta}(w_0)} \norm{g_t - f_t} + \rho \sqrt{n} \norm{\hat{\Theta}(w_0) - \hat{\Theta}(w_t)} 
\end{align*}
where $\norm{l'(f_t)} \leq \rho \sqrt{n}$. Thus we have 
\begin{align*}
    \norm{f_T - g_T }
    % &\leq e^{-\lambda T} \norm{ \int_0^T \parentheses{\beta_l \hat{\Theta}(w_0) \abs{g_t - f_t} + \parentheses{\hat{\Theta}(w_0) - \hat{\Theta}(w_t)} l'(f_t) } e^{\lambda t} dt } \\
    % &\leq e^{-\lambda T} \norm{ \int_0^T \beta_l \hat{\Theta}(w_0) \abs{g_t - f_t} e^{\lambda t} dt  } + e^{-\lambda T}\norm{ \int_0^T \parentheses{\hat{\Theta}(w_0) - \hat{\Theta}(w_t)} l'(f_t) e^{\lambda t} dt } \\
    % &\leq e^{-\lambda T} \int_0^T  \beta_l \norm{\hat{\Theta}(w_0)} \norm{g_t - f_t} e^{\lambda t} dt + e^{-\lambda T} \int_0^T \norm{\hat{\Theta}(w_0) - \hat{\Theta}(w_t)} \norm{l'(f_t)} e^{\lambda t} dt \\
    &\leq e^{-\lambda T} \beta_l \norm{\hat{\Theta}(w_0)} \int_0^T  \norm{g_t - f_t} e^{\lambda t} dt + e^{-\lambda T} \rho \sqrt{n} \int_0^T  \norm{\hat{\Theta}(w_0) - \hat{\Theta}(w_t)} e^{\lambda t} dt
\end{align*}
Applying the Grönwall's inequality,
\begin{align*}
    \norm{f_T - g_T } 
    &\leq e^{-\lambda T} \rho \sqrt{n} \int_0^T  \norm{\hat{\Theta}(w_0) - \hat{\Theta}(w_t)} e^{\lambda t} dt \cdot e^{e^{-\lambda T} \beta_l \norm{\hat{\Theta}(w_0)} \int_0^T e^{\lambda t} dt} \\
    &= e^{-\lambda T} \rho \sqrt{n} \int_0^T  \norm{\hat{\Theta}(w_0) - \hat{\Theta}(w_t)} e^{\lambda t} dt \cdot e^{\frac{1}{\lambda}(1- e^{- \lambda T}) \beta_l \norm{\hat{\Theta}(w_0)}} \\
    &= e^{-\lambda T} e^{\frac{1}{\lambda}(1- e^{- \lambda T}) \beta_l \norm{\hat{\Theta}(w_0)}} \rho \sqrt{n} \int_0^T  \norm{\hat{\Theta}(w_0) - \hat{\Theta}(w_t)} e^{\lambda t} dt 
\end{align*}
By Lemma~\ref{lemma:hessian_bound} and Lemma~\ref{lemma:ntk_bound}, in a parameter ball $B(w_0; R) = \{w: \norm{w-w_0} \leq R\}$, with high probability, $\abs{\hat{\Theta}(w; x, x') -  \hat{\Theta}(w_0; x, x') } = O(R^{3L+1} \ln{m}/ \sqrt{m})$ w.r.t. $m$. Then we have
\begin{align*}
    \norm{\hat{\Theta}(w_0) - \hat{\Theta}(w_t)} \leq \norm{\hat{\Theta}(w_0) - \hat{\Theta}(w_t)}_F = O(\frac{ R^{3L+1} n \ln{m}}{\sqrt{m}})
\end{align*}
Thus we have
\begin{align*}
    \norm{f_T - g_T } 
    &\leq \frac{1}{\lambda}(1-e^{-\lambda T}) e^{(1- e^{-T}) \beta_l \norm{\hat{\Theta}(w_0)}} \rho \sqrt{n} \cdot  O(\frac{ R^{3L+1} n \ln{m}}{\sqrt{m}}) \\
    &= O(\frac{ e^{\beta_l \norm{\hat{\Theta}(w_0)}} R^{3L+1} \rho n^{\frac{3}{2}} \ln{m}}{\lambda \sqrt{m}})
\end{align*}

% Taking $R = \parentheses{1 - e^{-T}} \parentheses{\norm{w_0} + \sigma_{max}  L\sqrt{n} }$
% \begin{align*}
%     \norm{f_T - g_T } 
%     = \tilde{O}(\frac{ L n^{\frac{5}{2}} \parentheses{\norm{w_0} + \sigma_{max}  L\sqrt{n} }}{\sqrt{m}})
% \end{align*}

\subsection{Bound on the Test Data}
For a test data $x$, the prove is similar to the training case. Denote $\hat{\Theta}(w_t; X, x) \in \mathbb{R}^{n}$ as the tangent kernel evaluate between the training data and a test data $x$. Recall 
\begin{gather*}
     \frac{d f_t(x)}{d t} =  - \lambda f_t(x) - \hat{\Theta}(w_t; X, x)^T l'(f_t)  \\
     \frac{d g_t(x)}{d t} =  - \lambda g_t(x) - \hat{\Theta}(w_0; X, x)^T l'(g_t) \\
     \frac{d}{dt} \parentheses{f_t(x) - g_t(x)} = - \lambda \parentheses{f_t(x) - g_t(x) } - \parentheses{ \hat{\Theta}(w_t; X, x)^T l'(f_t) - \hat{\Theta}(w_0; X, x)^T l'(g_t)}
\end{gather*}

The solution of the above differential equation is 
\begin{align*}
    f_T(x) - g_T(x) 
    &= e^{-\lambda T}\parentheses{f_0 - g_0} - e^{-\lambda T} \int_0^T \parentheses{\hat{\Theta}(w_t; X, x)^T l'(f_t) - \hat{\Theta}(w_0; X, x)^T l'(g_t)} e^{\lambda t} dt \\
    &= e^{-\lambda T} \int_0^T \parentheses{\hat{\Theta}(w_0; X, x)^T l'(g_t) - \hat{\Theta}(w_t; X, x)^T l'(f_t)} e^{\lambda t} dt
\end{align*}
using $f_0 = g_0$. Thus
\begin{align*}
    \norm{f_T(x) - g_T(x)}
    &\leq e^{-\lambda T} \int_0^T \norm{\hat{\Theta}(w_0; X, x)^T l'(g_t) - \hat{\Theta}(w_t; X, x)^T l'(f_t)} e^{\lambda t} dt
\end{align*}

Since $l$ is $\beta_l$ smooth,
\begin{align*}
    &\norm{\hat{\Theta}(w_0; X, x)^T l'(g_t) - \hat{\Theta}(w_t; X, x)^T l'(f_t)} \\
    &= \norm{\hat{\Theta}(w_0; X, x)^T l'(g_t) - \hat{\Theta}(w_0; X, x)^T l'(f_t) + \hat{\Theta}(w_0; X, x)^T l'(f_t) - \hat{\Theta}(w_t; X, x)^T l'(f_t)} \\
    &= \norm{\hat{\Theta}(w_0; X, x)^T \parentheses{l'(g_t) - l'(f_t)} + \parentheses{\hat{\Theta}(w_0; X, x)^T - \hat{\Theta}(w_t; X, x)^T} l'(f_t)} \\
    &\leq \norm{\hat{\Theta}(w_0; X, x)^T \parentheses{l'(g_t) - l'(f_t)}} + \norm{ \parentheses{\hat{\Theta}(w_0; X, x)^T - \hat{\Theta}(w_t; X, x)^T} l'(f_t)} \\
    &\leq \beta_l \norm{\hat{\Theta}(w_0; X, x)} \norm{g_t - f_t} + \rho \sqrt{n} \norm{ \parentheses{\hat{\Theta}(w_0; X, x)^T - \hat{\Theta}(w_t; X, x)^T} }
\end{align*}
where $\norm{l'(f_t)} \leq \rho \sqrt{n}$. Thus we have
\begin{align*}
    &\norm{f_T(x) - g_T(x) } \\
    &\leq e^{-\lambda T} \beta_l \norm{\hat{\Theta}(w_0; X, x)} \int_0^T  \norm{g_t - f_t} e^{\lambda t} dt + e^{-\lambda T} \rho \sqrt{n} \int_0^T  \norm{\hat{\Theta}(w_0; X, x)^T - \hat{\Theta}(w_t; X, x)^T}  e^{\lambda t} dt
\end{align*}
 Applying the Grönwall's inequality,
\begin{align*}
    &\norm{f_T(x) - g_T(x) } \\
    &\leq e^{-\lambda T} \rho \sqrt{n} \int_0^T  \norm{\hat{\Theta}(w_0; X, x)^T - \hat{\Theta}(w_t; X, x)^T} e^{\lambda t} dt \cdot e^{e^{-\lambda T} \beta_l \norm{\hat{\Theta}(w_0; X, x)} \int_0^T e^{\lambda t} dt} \\
    &= e^{-\lambda T} \rho \sqrt{n} \int_0^T  \norm{\hat{\Theta}(w_0; X, x)^T - \hat{\Theta}(w_t; X, x)^T} e^{\lambda t} dt \cdot e^{\frac{1}{\lambda}(1- e^{- \lambda T}) \beta_l \norm{\hat{\Theta}(w_0; X, x)}} \\
    &= e^{-\lambda T} e^{\frac{1}{\lambda}(1- e^{- \lambda T}) \beta_l \norm{\hat{\Theta}(w_0; X, x)}} \rho \sqrt{n} \int_0^T  \norm{\hat{\Theta}(w_0; X, x)^T - \hat{\Theta}(w_t; X, x)^T} e^{\lambda t} dt 
\end{align*}

By Lemma~\ref{lemma:hessian_bound} and Lemma~\ref{lemma:ntk_bound}, in a parameter ball $B(w_0; R) = \{w: \norm{w-w_0} \leq R\}$, with high probability, $\abs{\hat{\Theta}(w; x, x') -  \hat{\Theta}(w_0; x, x') } = O(R^{3L+1} \ln{m} / \sqrt{m})$. Then we have
\begin{align*}
    \norm{\hat{\Theta}(w_0; X, x)^T - \hat{\Theta}(w_t; X, x)^T} = O(\frac{ R^{3L+1} \sqrt{n} \ln{m}}{\sqrt{m}})
\end{align*}
Thus we have
\begin{align*}
    \norm{f_T(x) - g_T(x) } 
    &\leq \frac{1}{\lambda}(1-e^{-\lambda T}) e^{(1- e^{-T}) \beta_l \norm{\hat{\Theta}(w_0; X, x)}} \rho \sqrt{n} \cdot  O(\frac{ R^{3L+1} \sqrt{n} \ln{m}}{\sqrt{m}}) \\
    &= O(\frac{ e^{\beta_l \norm{\hat{\Theta}(w_0; X, x)}} R^{3L+1} \rho n \ln{m}}{\lambda \sqrt{m}})
\end{align*}

\section{Finite-width Neural Networks are Kernel Machines}\label{app:kernel_machine}
% \subsection{Our Derivation for Kernel Machine}
Inspired by \citep{domingos2020every}, we can also show that every neural network trained by (sub)gradient descent with loss function in the form (\ref{eq:loss_svm2}) is approximately a kernel machine without the assumption of infinite width limit.
\begin{theorem} 
  Suppose a neural network $f(w, x)$, with $f$ a differentiable function of $w$, is learned from a training set $\{(x_i, y_i)\}_{i=1}^{n}$ by (sub)gradient descent with loss function $L(w) = \frac{\lambda}{2}\norm{ W^{(L+1)} }^2 + \sum_{i = 1}^{n} l(f(w, x_i), y_i)$ and gradient flow. Assume $\text{sign}(l'(f_t(x_i), y_i)) = \text{sign}(l'(f_0(x_i), y_i)), \forall t \in [0, T]$, keeps unchanged during training. Then at some time $T$,
  \begin{equation}
      f_T(x) = \sum_{i = 1}^{n} a_i K(x, x_i) + b,
  \end{equation}
  where 
\begin{gather*}
    a_i = - \text{sign}(l'(f_0(x_i), y_i)) , \qquad 
    b = e^{-\lambda T} f_0(x),   \\
    K(x, x_i) = e^{-\lambda T} \int_{0}^{T} \abs{l'(f_t(x_i), y_i)} \hat{\Theta}(w_t; x, x_i) e^{\lambda t} \,dt   
\end{gather*}
\end{theorem}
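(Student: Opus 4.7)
The plan is to recognize that the continuous-time dynamics of $f_t(x)$ under the $\ell_2$-regularized loss is a first-order inhomogeneous linear ODE in $f_t(x)$, and then to solve it explicitly via the integrating factor $e^{\lambda t}$. The dynamics for a NN with general $\ell_2$-regularized loss has already been derived in Section~\ref{sec:general_loss} (Eq.~(\ref{eq:dy_general})):
\begin{equation*}
\frac{d f_t(x)}{dt} + \lambda f_t(x) \;=\; -\sum_{i=1}^{n} l'(f_t(x_i), y_i)\, \hat{\Theta}(w_t; x, x_i).
\end{equation*}
For a fixed test input $x$, the right-hand side is a (known) function of $t$ alone once the training trajectory $\{w_t\}_{t \in [0,T]}$ is fixed, so despite the nonlinear coupling through $l'(f_t(x_i),y_i)$, the equation is genuinely linear in the scalar unknown $f_t(x)$.

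First I would multiply both sides by $e^{\lambda t}$ so that the left-hand side becomes $\frac{d}{dt}(e^{\lambda t} f_t(x))$, then integrate from $0$ to $T$. This gives
\begin{equation*}
e^{\lambda T} f_T(x) \;=\; f_0(x) \;-\; \int_0^T e^{\lambda t} \sum_{i=1}^{n} l'(f_t(x_i), y_i)\, \hat{\Theta}(w_t; x, x_i)\, dt,
\end{equation*}
and dividing by $e^{\lambda T}$ yields the closed-form expression
\begin{equation*}
f_T(x) \;=\; e^{-\lambda T} f_0(x) \;-\; e^{-\lambda T}\sum_{i=1}^{n} \int_0^T l'(f_t(x_i), y_i)\, \hat{\Theta}(w_t; x, x_i)\, e^{\lambda t}\, dt.
\end{equation*}

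Next I would invoke the hypothesis that $\operatorname{sign}(l'(f_t(x_i),y_i))$ is constant in $t$ on $[0,T]$. This lets me write $l'(f_t(x_i), y_i) = \operatorname{sign}(l'(f_0(x_i),y_i))\, |l'(f_t(x_i),y_i)|$ and pull the (time-independent) sign outside the integral. Setting $a_i := -\operatorname{sign}(l'(f_0(x_i),y_i))$, $b := e^{-\lambda T} f_0(x)$, and
\begin{equation*}
K(x, x_i) \;:=\; e^{-\lambda T}\int_0^T |l'(f_t(x_i),y_i)|\, \hat{\Theta}(w_t; x, x_i)\, e^{\lambda t}\, dt,
\end{equation*}
we recover exactly the claimed representation $f_T(x) = \sum_i a_i K(x,x_i) + b$, with $a_i$ deterministic and independent of $x$ (the improvement over \citep{domingos2020every}).

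The proof is essentially a one-line ODE integration, so there is no serious obstacle; the only subtle point is that $f_t(x)$ must be differentiable in $t$ wherever the subgradient is used, which is justified by the assumption that $\operatorname{sign}(l'(f_t(x_i),y_i))$ does not change on $[0,T]$ (hence $l$ is smooth along the trajectory at each training point, even for hinge loss, since the argument stays on the same side of the kink). I would close by noting the remark in the paper: once $|l'(f_t(x_i),y_i)|$ itself varies with $t$ in a nontrivial way that depends on $x_i$, the function $K(x,x_i)$ defined above may fail to be symmetric in its two arguments, so the kernel interpretation is rigorous chiefly in regimes where $|l'|$ is (at least approximately) constant, e.g.\ the early-training phase of hinge loss with $f_0 \equiv 0$.
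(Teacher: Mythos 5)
Your proposal is correct and follows essentially the same route as the paper's proof in Appendix~\ref{app:kernel_machine}: both solve the first-order inhomogeneous linear ODE (\ref{eq:dy_general}) — you via the integrating factor $e^{\lambda t}$, the paper by quoting the standard solution formula, which amounts to the same computation — and then use the sign-constancy assumption to pull $-\text{sign}(l'(f_0(x_i),y_i))$ out of the integral. Your closing observations on differentiability along the trajectory and on the symmetry caveat match the paper's Remark~\ref{sec:finite-width}.1 and add nothing that conflicts with it.
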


\begin{proof}
As we have derived, the neural network follows the dynamics of Eq. (\ref{eq:dy_general}):
\begin{equation}
  \frac{d f_t(x)}{d t} =  - \lambda f_t(x) - \sum_{i = 1}^{n} l'(f_t(x_i), y_i) \hat{\Theta}(w_t; x, x_i).
\end{equation} 

Note this is a  first-order inhomogeneous linear differential equation with the functions depended on $t$. Denote $Q(t) = - \sum_{i = 1}^{n} l'(f_t(x_i), y_i) \hat{\Theta}(w_t; x, x_i)$,
\begin{equation}
    \frac{d f_t(x)}{d t} + \lambda f_t(x) = Q(t) .
\end{equation}
Let $f_0(x)$ be the initial model, prior to gradient descent. The solution is given by
\begin{equation}
    f_T(x) = e^{-\lambda T} \biggl( f_0(x)  + \int_{0}^{T} Q(t) e^{\lambda t} \,dt \biggr) .
\end{equation}
% Multiplying and dividing by $\int \hat{\Theta}(w_t; x, x_i) e^{\lambda t} \,dt$,
Then
\begin{equation}
\begin{split}
    f_T(x)
    &= e^{-\lambda T} \biggl( f_0(x)  - \sum_{i = 1}^{n} \int_{0}^{T} l'(f_t(x_i), y_i) \hat{\Theta}(w_t; x, x_i) e^{\lambda t} \,dt \biggr)  \\
    &= e^{-\lambda T} f_0(x)  -  \sum_{i = 1}^{n}  e^{-\lambda T} \int_{0}^{T}  l'(f_t(x_i), y_i) \hat{\Theta}(w_t; x, x_i) e^{\lambda t} \,dt   \\
    &= e^{-\lambda T} f_0(x)  -  \sum_{i = 1}^{n} e^{-\lambda T} \int_{0}^{T}  \text{sign}(l'(f_t(x_i), y_i)) \cdot  \abs{l'(f_t(x_i), y_i)} \hat{\Theta}(w_t; x, x_i) e^{\lambda t} \,dt   \\
    &= e^{-\lambda T} f_0(x)  -  \sum_{i = 1}^{n} \text{sign}(l'(f_0(x_i), y_i)) \cdot e^{-\lambda T} \int_{0}^{T} \abs{l'(f_t(x_i), y_i)} \hat{\Theta}(w_t; x, x_i) e^{\lambda t} \,dt   .
\end{split}
\end{equation}
where the last equality uses the assumption $\text{sign}(l'(f_t(x_i), y_i)) = \text{sign}(l'(f_0(x_i), y_i)), \forall t \in [0, T]$. Thus 
\begin{equation}
    f_T(x)
    = \sum_{i = 1}^{n} a_i K(x, x_i) + b,
\end{equation}
with 
\begin{gather*}
    a_i = - \text{sign}(l'(f_0(x_i), y_i)) , \qquad 
    b = e^{-\lambda T} f_0(x),   \\
    K(x, x_i) = e^{-\lambda T} \int_{0}^{T} \abs{l'(f_t(x_i), y_i)} \hat{\Theta}(w_t; x, x_i) e^{\lambda t} \,dt 
\end{gather*}
\end{proof}
$K(x, x_i) = e^{-\lambda T} \int_{0}^{T} \abs{l'(f_t(x_i), y_i)} \hat{\Theta}(w_t; x, x_i) e^{\lambda t} \,dt  $ is a valid kernel since it is a nonnegative sum of positive definite kernels. Our $a_i$, $b$ and $K(x, x_i)$ will stay bounded as long as $f_0(x)$, $l'(f_t(x_i), y_i)$ and $\hat{\Theta}(w_t; x, x_i)$ are bounded.

\section{Robustness of Over-parameterized Neural Network}
\subsection{Robustness Verification of NTK}\label{app:robust}
For an infinitely wide two-layer fully connected ReLU NN, $f(x) = \frac{1}{\sqrt{m}}\sum_{j=1}^m v_j \sigma(\frac{1}{\sqrt{d}}w_j^T x)$, where $\sigma(z) = \max(0, z)$ is the ReLU activation. The NTK is
\begin{gather}
\begin{split}
    \Theta(x, x') 
    = \frac{\inner{x, x'}}{d}(\frac{\pi - \arccos(u)}{\pi}) + \frac{\norm{x}\norm{x'}}{2\pi d} \sqrt{1-u^2}
    = \frac{\norm{x}\norm{x'}}{2\pi d} h(u),
\end{split}\\
    h(u) = 2u(\pi - \arccos(u)) +  \sqrt{1-u^2}.
\end{gather}
where $u = \frac{ \inner{x, x'}}{\norm{x}\norm{x'}} \in \left[-1, 1\right]$. Consider the $\ell_{\infty}$ perturbation, for $x \in B_{\infty}(x_0, \delta) = \{x \in \mathbb{R}^d: \norm{x - x_0}_{\infty} \leq \delta \}$, we can bound $\norm{x}$ in the interval $[\norm{x}^L, \norm{x}^U]$ as follows.
\begin{gather*}
    \norm{x} 
    = \norm{x_0 + \Delta} 
    \leq \norm{x_0} + \norm{\Delta} 
    \leq \norm{x_0} + \sqrt{d} \delta = \norm{x}^U , \\
    \norm{x} 
    = \norm{x_0 + \Delta} 
    \geq \abs{\norm{x_0} - \norm{\Delta} }
    \geq \max(\norm{x_0} - \sqrt{d} \delta, 0) = \norm{x}^L .
\end{gather*} 
Then we can also bound $u$ in $[u^L, u^U]$.
% \begin{equation*}
%     u = \frac{ \inner{x, x'}}{\norm{x}\norm{x'}} = \frac{ \inner{x_0 + \Delta, x'}}{\norm{x}\norm{x'}} 
%     \in \left[\frac{ \inner{x_0, x'} - \sqrt{d} \delta \norm{x'}}{\norm{x}\norm{x'}}, \frac{ \inner{x_0, x'} + \sqrt{d} \delta \norm{x'}}{\norm{x}\norm{x'}} \right]
%     = \left[u^L, u^U \right]
% \end{equation*}
\begin{gather*}
    \inner{x, x'} = \inner{x_0 + \Delta, x'} \in \left[\inner{x_0, x'} - \sqrt{d} \delta \norm{x'},  \inner{x_0, x'} + \sqrt{d} \delta \norm{x'} \right] ,\\
    u^L = \frac{ \inner{x_0, x'} - \sqrt{d} \delta \norm{x'}}{\norm{x}^U\norm{x'}} 
    \quad \text{if} \ \inner{x_0, x'} - \sqrt{d} \delta \norm{x'} \geq 0
    \quad \text{else} \quad \frac{ \inner{x_0, x'} - \sqrt{d} \delta \norm{x'}}{\norm{x}^L\norm{x'}} ,\\
    u^U = \frac{ \inner{x_0, x'} + \sqrt{d} \delta \norm{x'}}{\norm{x}^L\norm{x'}} 
    \quad \text{if} \ \inner{x_0, x'} + \sqrt{d} \delta \norm{x'} \geq 0
    \quad \text{else} \quad \frac{ \inner{x_0, x'} + \sqrt{d} \delta \norm{x'}}{\norm{x}^U\norm{x'}} ,\\
    u^U = \min(u^U, 1) .
\end{gather*}
where $\Delta \in B_{\infty}(0, \delta)$. $h(u)$ is a bow shaped function so it is easy to get its interval $[h^L(u), h^U(u)]$. Then we can get the interval of $\Theta(x, x') $, denote as $[\Theta^L(x, x'), \Theta^U(x, x')]$. 
% If $h^L(u) \geq 0$, 
% \begin{equation}
%     \Theta(x, x') \in \left[ \frac{\norm{x}^L\norm{x'}}{2\pi d} h^L(u),  \frac{\norm{x}^U \norm{x'}}{2\pi d} h^U(u)\right]
% \end{equation}
% If $h^U(u) \leq 0$,
% \begin{equation}
%     \Theta(x, x') \in \left[ \frac{\norm{x}^U \norm{x'}}{2\pi d} h^L(u),  \frac{\norm{x}^L \norm{x'}}{2\pi d} h^U(u)\right]
% \end{equation}
% If $h^L(u) \leq 0 \leq h^U(u)$,
% \begin{equation}
%     \Theta(x, x') \in \left[ \frac{\norm{x}^U \norm{x'}}{2\pi d} h^L(u),  \frac{\norm{x}^U \norm{x'}}{2\pi d} h^U(u)\right]
% \end{equation}
\begin{gather*}
    \Theta^L(x, x') = \frac{\norm{x}^L\norm{x'}}{2\pi d} h^L(u)
    \quad \text{if} \ h^L(u) \geq 0 
    \quad \text{else} \quad  \frac{\norm{x}^U\norm{x'}}{2\pi d} h^L(u) ,\\
    \Theta^U(x, x') = \frac{\norm{x}^U\norm{x'}}{2\pi d} h^U(u)
    \quad \text{if} \ h^U(u) \geq 0 
    \quad \text{else} \quad  \frac{\norm{x}^L\norm{x'}}{2\pi d} h^U(u) .
\end{gather*}
Suppose the $g(x) = \sum_{i = 1}^{n}  \alpha_i \Theta(x, x_i) $, $\alpha_i$ are known after solving the kernel machine problem. Then we can lower bound and upper bound $g(x)$ as follows.
\begin{gather}
    g(x) \geq \sum_{i = 1, \alpha_i >0}^{n}  \alpha_i \Theta^L(x, x_i) + \sum_{i = 1, \alpha_i<0}^{n}  \alpha_i \Theta^U(x, x_i) ,\\
    g(x) \leq \sum_{i = 1, \alpha_i <0}^{n}  \alpha_i \Theta^L(x, x_i) + \sum_{i = 1, \alpha_i>0}^{n}  \alpha_i \Theta^U(x, x_i) .
\end{gather}

\subsection{IBP for Two-layer Neural Network}\label{app:ibp}
See the computation of IBP in \citep{gowal2018effectiveness}. For affine layers of NTK parameterization, the IBP bounds are computed as follows.
\begin{equation}
    \begin{split}
    \mu_{k-1} &= \frac{\overline{z}_{k-1} + \underline{z}_{k-1} }{2} \\
    r_{k-1} &= \frac{\overline{z}_{k-1} - \underline{z}_{k-1} }{2} \\
    \mu_{k} &= \frac{1}{\sqrt{m}} W \mu_{k-1} + b \\
    r_{k}   &= \frac{1}{\sqrt{m}} \abs{W} r_{k-1} \\
    \underline{z}_{k} &= \mu_{k} - r_{k} \\
    \overline{z}_{k}  &= \mu_{k} + r_{k} 
    \end{split}
\end{equation}

where $m$ is the input dimension of that layer. At initialization, $W$, $\mu_{k-1}$ and $b$ are independent. Since $\expect{\bracket{W} } = 0$ and $\expect{\bracket{b} } = 0$,
\begin{equation}
    \expect{\bracket{\mu_{k} }} 
    = \frac{1}{\sqrt{m}} \expect{\bracket{W} } \expect{\bracket{\mu_{k-1}}} + \expect{ \bracket{b}} 
    = 0
\end{equation}

Since $\abs{W}$ follows a folded normal distribution (absolute value of normal distribution) and $r_{k-1} \geq 0$, $\abs{W}  \geq 0$, $\expect{\bracket{\abs{W} } } \expect{\bracket{r_{k-1}}} = O(m)$,
\begin{equation}
    \expect{\bracket{r_{k} }} 
    = \frac{1}{\sqrt{m}} \expect{\bracket{\abs{W} } } \expect{\bracket{r_{k-1}}} 
    = O(\sqrt{m})
\end{equation}
Thus
\begin{gather}
    - \expect{\bracket{\underline{z}_{k} }} 
    = - \expect{\bracket{\mu_{k} }}  +  \expect{\bracket{r_{k} }} 
    = O(\sqrt{m}) \\
    \expect{\bracket{\overline{z}_{k} }}
    = \expect{\bracket{\mu_{k} }}  +  \expect{\bracket{r_{k} }}
    = O(\sqrt{m})
\end{gather}

And this will cause the robustness lower bound to decrease at a rate of $O(1/\sqrt{m})$. The same results hold for LeCun initialization, which is used in PyTorch for fully connected layers by default.

\end{document}